\documentclass{article}
\usepackage{geometry}






\usepackage[utf8]{inputenc} 
\usepackage[T1]{fontenc}    
\usepackage[pdflang={en-US}]{hyperref}       
\usepackage{url}            
\usepackage{booktabs}       
\usepackage{amsfonts}       
\usepackage{nicefrac}       
\usepackage{xcolor}

\usepackage{microtype}
\usepackage{graphicx}
\usepackage{booktabs} 
\usepackage{hyperref}
\usepackage{natbib}

\usepackage{algorithm}
\usepackage{algorithmic}

\usepackage{amsmath}
\usepackage{amssymb}
\usepackage{mathtools}
\usepackage{amsthm}
\usepackage{subfig}
\usepackage{bbm}
\usepackage{dsfont}
\usepackage{authblk}

\usepackage[capitalize,noabbrev]{cleveref}

\theoremstyle{plain}
\newtheorem{theorem}{Theorem}[section]

\newtheorem{lemma}[theorem]{Lemma}

\theoremstyle{definition}

\theoremstyle{remark}
\newtheorem{remark}[theorem]{Remark}

\usepackage[textsize=tiny]{todonotes}


\newcommand{\E}{\mathbb{E}}
\newcommand{\R}{\mathbb{R}}
\newcommand{\N}{\mathbb{N}}

\newcommand{\1}{\mathds{1}}

\newcommand{\cS}{\mathcal{S}}

\newcommand{\ip}[2] {\langle #1, #2 \rangle }

\newcommand{\cV}{\mathcal{V}}
\newcommand{\cE}{\mathcal{E}}

\newcommand{\cA}{\mathcal{A}}

\newcommand{\cZ}{\mathcal{Z}}
\newcommand{\cC}{\mathcal{C}}
\newcommand{\cL}{\mathcal{L}}
\newcommand{\cO}{\mathcal{O}}

\newcommand{\cN}{\mathcal{N}}
\newcommand{\cB}{\mathcal{B}}

\newcommand{\bfx}{\mathbf{x}}
\newcommand{\bfy}{\mathbf{y}}
\newcommand{\bfW}{\mathbf{W}}
\newcommand{\bfZ}{\mathbf{Z}}
\newcommand{\bfX}{\mathbf{X}}
\newcommand{\bfY}{\mathbf{Y}}
\newcommand{\bfV}{\mathbf{V}}
\newcommand{\bfv}{\mathbf{v}}
\newcommand{\bfu}{\mathbf{u}}
\newcommand{\bfz}{\mathbf{z}}

\newcommand{\bfmu}{\boldsymbol{\mu}}
\newcommand{\bfupsilon}{\boldsymbol{\upsilon}}

\DeclareMathOperator*{\argmin}{arg\,min}

\setlength{\parindent}{0pt}

\usepackage{hyperref}  
\hypersetup{
    colorlinks=true,
    linkcolor=blue,
    citecolor =blue,
    filecolor=magenta,      
    urlcolor=magenta,
}

\title{Distributed Linear Bandits under Communication Constraints}
\author{Sudeep Salgia}
\author{Qing Zhao}
\affil{School of Electrical \& Computer Engineering, Cornell University, Ithaca, NY, \emph{\{ss3827,qz16\}@cornell.edu} }
\date{}

\begin{document}

\maketitle

\begin{abstract}  
We consider distributed linear bandits where $M$ agents learn collaboratively to minimize the overall cumulative regret incurred by all agents. Information exchange is facilitated by a central server, and both the uplink and downlink communications are carried over channels with fixed capacity, which limits the amount of information that can be transmitted in each use of the channels. We investigate the regret-communication trade-off by (i) establishing information-theoretic lower bounds on the required communications (in terms of bits) for achieving a sublinear regret order; (ii) developing an efficient algorithm that achieves the minimum sublinear regret order offered by centralized learning using the minimum order of communications dictated by the information-theoretic lower bounds. For sparse linear bandits, we show a variant of the proposed algorithm offers better regret-communication trade-off by leveraging the sparsity of the problem. 
\end{abstract}

\section{Introduction}

The tension between learning efficiency and communication cost is evident in many distributed learning problems. If distributed agents can share all their locally obtained information and fully coordinate their actions, the problem effectively reduces to a centralized problem, and the greatest learning efficiency defined by the centralized counterpart is trivially achieved at the price of high communication cost. \\

What is the minimum amount (in terms of bits) of communications needed to achieve the learning efficiency offered by centralized learning? How one might design a distributed learning algorithm that operates at such an optimal point in the communication-learning efficiency trade-off curve? These fundamental questions have not been adequately addressed in the literature. 

\subsection{Main Results}

In this paper, we address the above questions within the scope of distributed linear bandits. We consider a system of $M$ distributed agents whose actions generate random rewards governed by a common unknown mean $\theta^* \in \R^d$. The agents aim to optimize their actions over time to minimize the overall cumulative regret incurred by all agents over a horizon of length $T$. Communications across agents are facilitated by a central server. To quantify the communication cost to the bit level, we assume that both the uplink and downlink channels have a capacity of $R$ bits per channel use. \\

Our main results are twofold. First, we establish an information-theoretic lower bound on the required communications for achieving a sublinear regret order. Second, we develop an efficient algorithm that achieves the optimal regret order offered by centralized learning using the minimum order of communications dictated by the information-theoretic lower bound. For sparse linear bandits, we show a variant of the proposed algorithm offers better regret-communication trade-off by leveraging the sparsity of the problem. \\

For the distributed linear bandit problem, to achieve the optimal regret order of $\Omega(d\sqrt{MT})$ in both $M$ and $T$ as offered by centralized learning, the agents need to cooperate in learning the underlying reward vector $\theta^*$. In addition to a policy for choosing reward-generating actions at each time, a distributed learning algorithm also includes a communication strategy that governs \emph{when} to communicate and \emph{what} and \emph{how} to send it (i.e., quantization and encoding) over the finite-capacity channels. To minimize the total regret that is accumulating over time and aggregating over the agents while using a minimum amount of communications, the communication strategy needs to work in tandem with action selection to ensure a continual flow of information available at all agent for decision-making.  \\

The key idea of the proposed algorithm is an \emph{progressive learning and sharing} (PLS) structure that systematically coordinates the collective exploration of $\theta^*$ and the information sharing of the estimates of $\theta^*$ over finite-capacity channels. Specifically, the PLS algorithm progresses as each agent learns and shares one-bit information about $\theta^*$ (per dimension) at a time, starting from the most significant bit in the binary expansion of $\theta^*$ in each dimension. This bit-by-bit learning and sharing is structured in interleaving exploration and exploitation epochs with carefully controlled epoch lengths, to achieve both the minimum order of channel usage and the minimum order of cumulative regret. 

\subsection{Related Work}

Communication cost is commonly partitioned into two parts: the size of the message at each information exchange and the frequency of information exchange. As detailed below, these two sides of the same coin have largely been dealt with separately in the literature when developing communication-efficient algorithms for distributed bandit problems. Our work departs from existing studies by taking a holistic view on communication cost and making an initial attempt at characterizing the information-theoretic limit on the communication-learning trade-off in distributed linear bandits. \\

In the group of work focusing on reducing communication frequency through intermittent information exchange, it is often assumed that the information being transmitted, typically a vector in $\R^d$, can be communicated with infinite precision, which requires a channel with infinite capacity. See, for example, \citep{Hillel2013, Tao2019, Agarwal2021} on discrete bandit problems and \cite{Wang2019, Ghosh2021, Huang2021, Chawla2022lowdimensional, Amani2022} on linear bandits. In particular, \cite{Wang2019, Huang2021, Amani2022} proposed algorithms based on batched elimination of arms where poorly performing arms are eliminated at the end of each batch. The batched structure offers a natural way to limit communication to one message exchange per batch. However, there is no constraint on the amount of information that can be transmitted in each batch. The algorithms in the aforementioned papers also require solving a G-optimal design problem before each communication round, which can be computationally expensive as the dimension increases. In contrast, the PLS algorithm developed in this work has low computational complexity that scales linearly with the dimension $d$. \\

The other group of work focuses on reducing the size of the message at each information exchange. The frequency of communication is not an concern. The objective is to best \emph{approximate} the information being exchanged via techniques such as quantization and sparsification (see, for example, \cite{Konecny2016quantization, Hanna2021, Mitra2022bitconstrained, Suresh2017}). In particular, \citep{Hanna2021} proposed a quantization scheme to reduce the communication overhead in discrete multi-armed bandit problems at the cost of a small multiplicative constant in the regret. Recently,~\citep{Mitra2022bitconstrained} proposed an algorithm for decentralized linear bandits with a finite-capacity uplink channel and an infinite-capacity downlink channel. They developed an adaptive encoding scheme for communication that ensured order-optimal regret for their proposed algorithm. However, with a linear order of message exchanges in $T$, the total uplink communication cost is $\cO(dT)$ bits as opposed to the $\cO(d \log T)$ bits of communication cost in our proposed algorithm. Moreover, the results in~\cite{Mitra2022bitconstrained} hold only for single agent while ours hold for a distributed setup with multiple agents.  \\

In the context of developing communication-efficient algorithm, another line of related work is Federated Learning (FL)~\citep{mcmahan2017communication}. FL aims to collaboratively learn a model by leveraging the data available at all the agents with a focus on ensuring privacy of the data for the participating agents. Developing communication efficient FL algorithms is an active area of research (see \cite{Konecny2016quantization, Liu2019FedBCD, Sun2019communication, Reisizadeh2020fedpaq, Haddadpour2021federated, Jhunjhunwala2021adaptive, Honig2022DAdaQuant} and references therein). Detailed surveys can be found in~\citep{Tang2020communication} and~\citep{Zhao2022communication}. These studies focus on the first-order stochastic optimization, which is different from the (zeroth-order) stochastic linear bandits considered in this work, in terms of both action selection strategies and the relevant information that needs to be exchanged. \\

It is impossible to do full justice to the vast literature on communication-efficient distributed learning. We present above existing studies on distributed bandits that are most relevant to this work. Additional discussions of related work is provided in Appendix~\ref{sec:related_work_appendix}, albeit remaining to be incomplete. 

\section{Problem Formulation}
\label{sec:problem_formulation}

Consider a system of $M$ distributed agents indexed by $\{1,2,\dots, M\}$. The agents face a common stochastic linear bandit model characterized by an unknown mean reward vector $\theta^* \in \R^d$. Specifically, each agent $j \in \{1,2, \dots, M\}$ has access to an action set $\cA = \{a \in \R^d: \|a\|_2 \leq 1\}$ and chooses to play an action $a_t^j \in \cA$ at every time instant $t$ during a time horizon of $T$ instants. When an action $a_t^j \in \cA$ is played by agent $j$ at time $t$, it receives a reward 
\[
y_t^j = \ip{\theta^*}{a_t^j} + \eta_t^j ,
\]
where $\eta_t^j$ is zero-mean noise that is i.i.d. across time instants and across the agents and satisfies $\log(\E[\exp(\lambda \eta_t^j)]) \leq \lambda^2 \sigma^2/2$ for all $\lambda \in \R$, i.e., the noise is $\sigma^2$-sub-Gaussian. WLOG, we assume $\| \theta^* \|_2 \leq 1$. It is straightforward to extend it to the case $\| \theta^* \|_2 \leq B$, where $B$ is a known constant, by appropriately scaling the obtained rewards. \\

Information exchange across the agents goes through a central server. Both the uplink channel (from the agents to the server) and downlink channel (from the server to the agents) have a finite capacity of $R$ bits per channel use, which limits the message size in each information exchange. This model quantifies the cumulative communication cost to the bit level, and represents a more challenging problem than those considered in the literature where communication channel between the server and the agent is assumed to have infinite capacity in at least one direction, if not both. \\

The design objective is a distributed learning policy consisting of (i) a decision strategy that governs the selection of actions $\{a_t^j\}$ of each agent $j$ at each time $t$ and (ii) a communication strategy that determines when to communicate what and how to send it over the channel via quantization and encoding. The performance of a learning policy is measured in terms of the overall cumulative regret $R(T)$  and the cumulative communication cost $C(T)$ incurred by the policy. The overall cumulative regret is given by
\begin{align}
    R(T) = \sum_{j = 1}^M \sum_{t = 1}^T \left[  \max_{a \in \cA} \ip{\theta^*}{a} - \ip{\theta^*}{a_t^j} \right].
\end{align}
The communication cost $C(T)$ is measured using $C_{\text{u}}(T)$ and $C_{\text{d}}(T)$, the number of bits transmitted on the uplink channel (i.e., by any agent to the server) and that on the downlink channel (i.e., the average number of bits transmitted by the server to an agent), respectively. \\

The learning and communication efficiency of a learning policy is measured against the benchmarks. In particular, the cumulative regret is lower bounded by
$\Omega(d\sqrt{MT})$, which is the optimal regret order in a centralized setting with total $MT$ reward observations centrally available for learning. In Sec.~\ref{sec:lower_bound}, we establish an information-theoretic lower bound on the communication cost required for achieving a sublinear regret order. 

\section{Progressive Learning and Sharing}
\label{sec:algorithm_description}

In this section, we present the Progressive Learning and Sharing (PLS) algorithm. We start in Sec.~\ref{sub:basic_structure_PLS} with the basic structure of the algorithm followed by a detailed implementation in Sec.~\ref{sub:PLS_detailed}. 

\subsection{The Basic Structure of PLS}
\label{sub:basic_structure_PLS}

In PLS, learning and information sharing progress along the binary expansion of $\theta^*$ in each dimension, starting from the most significant bit. Below we present separately the information sharing and learning components of PLS.

\subsubsection{Progressive Information Sharing}
\label{subs:PRINS}

In PLS, the unknown reward vector $\theta^*$ is learnt with increasing accuracy, one bit at a time, as the algorithm progresses. Once the next bit in the binary expansion\footnote{While the algorithm learns an one bit at a time, it does not necessarily imply that the bit sequence learnt corresponds to the binary expansion.} of $\theta^*$ in each of the $d$ coordinates is learnt with sufficient accuracy, the agents transmit their estimates of this bit to the central server, which aggregates the estimates and broadcast the aggregated estimate of the new bit to the agents for subsequent exploitation and further exploration of $\theta^*$. \\

This progressive sharing mechanism can be seamlessly integrated with regret minimization to achieve both minimum regret order and minimum channel usage. Specifically, since only $1$ bit of information is shared per coordinate in each information exchange, it suffices to send $d$ bits in each transmission, achieving the benefit of having small messages. Furthermore, one can note that any reward-maximizing action taken based on an estimate $\hat{\theta}$ incurs a regret proportional to the estimation error $\|\hat{\theta} - \theta^*\|_2$. Consequently, an estimation error of $\cO(1/\sqrt{T})$ is sufficient to ensure an order-optimal regret, implying that it is sufficient to estimate each coordinate of $\theta^*$ up to an accuracy of $\cO(1/\sqrt{T})$. Since sending the first $r$ bits of the binary representation ensures an error of no more than $2^{-r}$, transmitting the first $\cO(\log T)$ bits of the binary representation is sufficient to achieve the required accuracy. As a result, infrequent communication with a total of $\cO(\log T)$ rounds can transmit all relevant information about $\theta^*$. 

\subsubsection{Progressive Collaborative Learning}
\label{subs:ProCoL}

The progressive learning component of PLS is carried out in two stages: an initial stage for estimating the norm of $\theta^*$ followed by a refinement stage with interleaving exploration and exploitation. \\

In the initial norm estimation stage, the goal is to estimate, within a multiplicative factor, the norm $\|\theta^*\|_2$ of the underlying mean reward. This procedure is purely exploratory in nature. The collaborative exploration across agents is carried out in epochs with exponentially growing epoch lengths. At the end of each epoch, a threshold-based termination test is employed to determine whether the required estimation accuracy has been reached, which terminates the norm estimation stage. Information exchange occurs at the end of each epoch, and the exponentially growing epoch length ensures a low communication frequency. \\

The norm estimation stage serves multiple purposes. First, it allows PLS to be adaptive  to the norm of $\theta^*$ through the threshold-based termination rule. Specifically, it provides sufficient initial exploration with sufficiency automatically adapted to $\|\theta^*\|_2$ to ensure the estimation error is small enough for subsequent exploitation in the refinement stage. Second, this initial norm estimate sets the dynamic range of subsequent estimates of $\theta^*$ to be used in the differential quantization for subsequent information sharing. Third, the estimate of $\|\theta^*\|_2$ is also used to control the length of the exploitation epoch in the refinement stage to balance the exploration-exploitation trade-off. \\

In the refinement stage, the estimate of $\theta^*$ obtained in the norm estimation stage is further refined. Similar to the norm estimation stage, the refinement stage also proceeds in epochs. The difference is that each epoch in the refinement stage consists of an exploration sub-epoch followed by an exploitation one. The exploration sub-epochs are for continual learning of $\theta^*$, one bit in each sub-epoch. The exploitation sub-epochs are to maximize rewards at each agent by playing the best action based on the current estimate of $\theta^*$. The lengths of the exploration and exploitation sub-epochs are both growing exponentially, but at different rates to carefully balance the exploration-exploitation trade-off. Information sharing is carried out only at the end of each exploration sub-epoch for the newly learned bit of $\theta^*$. The refinement stage with its interleaved exploration and exploitation continues until the end of the time horizon.

\subsection{Detailed Description of PLS}
\label{sub:PLS_detailed}

In this section, we dive into the details of PLS.

\subsubsection{Progressive Collaborative Learning}

\paragraph{Norm Estimation Stage:} This stage proceeds in purely exploratory epochs. During an epoch $k$, each agent plays each unit vector in an orthonormal basis\footnote{The basis is chosen \emph{a priori} and known to all the agents and the server. It can be any orthonormal basis of $\R^d$.} of $\R^d$ for $s_k$ times. Each agent $j$ computes the sample mean of the observed rewards for each basis vector to obtain an estimate $\hat{\theta}_{k}^{(j)}$ of the underlying vector $\theta^*$. This estimate $\hat{\theta}_{k}^{(j)}$ is clipped to a length of $R_k + B_k$, quantized using a stochastic quantizer with resolution $\alpha_k$ and sent to the server by the agent. The process is repeated in every epoch until the agents receive a message from the server to terminate. We defer the details of the clipping and quantization steps to the next section that describes the communication strategy. All policy parameters are specified at the end of the section. \\

\begin{algorithm}[H]
    \caption{Norm Estimation:  Agent $j \in \{1,2,\dots, M\}$}
    \label{alg:norm_est_agent}
    \begin{algorithmic}[1]
        \STATE Set $k \leftarrow 1$
        \WHILE{\texttt{True}}
            \STATE Play each basis vector $s_k$ times and compute the sample mean $\hat{\theta}^{(j)}_k$
            \STATE $\tilde{\theta}^{(j)}_k \leftarrow \textsc{Clip}(\hat{\theta}^{(j)}_k, R_k + B_k)$
            \STATE $Q(\tilde{\theta}^{(j)}_k) \leftarrow \textsc{StoQuant}(\tilde{\theta}^{(j)}_k, \alpha_k, R_k + B_k)$
            \STATE Send $Q(\tilde{\theta}^{(j)}_k)$ to the server
            \IF{received \texttt{terminate} from server}
            \STATE \textbf{break}
            \ELSE
            \STATE $k \leftarrow k + 1$
            \ENDIF
        \ENDWHILE
    \end{algorithmic}
\end{algorithm}

At the server, upon receiving the estimates from the agents, the server averages them to obtain a combined estimate $\hat{\theta}_{k}^{(\textsc{serv})}$. The server compares the norm of this estimate to a threshold $4\tau_k$. If the norm exceeds the threshold, the server sends a message to the agents to terminate the norm estimation stage. Otherwise, the server and the agents proceed into the next epoch. The value of $\tau_k$ is chosen to be an upper bound on the estimation error of $\theta^*$ at the end of the $k^{\text{th}}$ epoch, allowing PLS to estimate $\|\theta^*\|_2$ within a multiplicative factor at the end of the norm estimation stage. \\

The pseudo code for the norm estimation stage is given in Algorithms~\ref{alg:norm_est_agent} and ~\ref{alg:norm_est_server}.

\begin{algorithm}[H]
	\caption{Norm Estimation: The Server}
	\label{alg:norm_est_server}
	\begin{algorithmic}[1]
        \STATE Set $k \leftarrow 1$
        \WHILE{\texttt{True}}
            \STATE Compute $\hat{\theta}_{k}^{(\textsc{serv})} = \frac{1}{M} \sum_{j = 1}^M Q(\tilde{\theta}^{(j)}_k)$
            \IF{$\tau_k \leq \frac{1}{4}\|\hat{\theta}_{k}^{(\textsc{serv})}\| $}
            \STATE Server sends terminate to all agents
            \STATE \textbf{break}
            \ELSE
            \STATE $k \leftarrow k + 1$
            \ENDIF
        \ENDWHILE
	\end{algorithmic}
\end{algorithm}

\begin{algorithm}
    \caption{Refinement:  Agent $j \in \{1,2,\dots, M\}$}
    \label{alg:ref_est_agent}
    \begin{algorithmic}[1]
            \STATE \textbf{Input}: The epoch index at the end of Norm Estimation stored as $k_0$, $\bar{\theta}_{k_0 - 1} \leftarrow 0, k \leftarrow k_0$
            \WHILE{budget is not exhausted}
                \STATE Play each basis vector $s_k$ times and compute the sample mean $\hat{\theta}^{(j)}_k$
                \STATE $\tilde{\theta}^{(j)}_k \leftarrow \textsc{Clip}(\hat{\theta}^{(j)}_k - \bar{\theta}_{k - 1}, R_k + B_k)$
                \STATE $Q(\tilde{\theta}^{(j)}_k) \leftarrow \textsc{StoQuant}(\tilde{\theta}^{(j)}_k, \alpha_k, R_k + B_k)$
                \STATE Send $Q(\tilde{\theta}^{(j)}_k)$ to the server
                \STATE Receive $Q(\hat{\theta}_{k}^{(\textsc{serv})})$ from the server
                \STATE $\bar{\theta}_{k} \leftarrow \bar{\theta}_{k - 1}  + Q(\hat{\theta}_{k}^{(\textsc{serv})})$
                \IF{$k = k_0$}
                \STATE Set $\mu_0 \leftarrow \|\bar{\theta}_{k}\|_2$
                \ENDIF
                \STATE Play the action $a = \bar{\theta}_{k}/\|\bar{\theta}_{k}\|$ for the next $t_{k}$ rounds.
                \STATE $k \leftarrow k + 1$
            \ENDWHILE
    \end{algorithmic}
\end{algorithm}

\paragraph{Refinement Stage:} This stage also proceeds in epochs, starting with the epoch index at which Norm Estimation stage terminated. Each epoch $k$ during Refinement begins with an exploration sub-epoch where, similar to Norm Estimation, each agent obtains their estimate of $\theta^*$, $\hat{\theta}_{k}^{(j)}$, by playing each of the basis vectors $s_k$ times. At the end of the sub-epoch, the agents share the next bit learnt during this time by transmitting $\hat{\theta}_{k}^{(j)}- \bar{\theta}_{k-1}$ to the server after appropriate clipping and quantization. Here $\bar{\theta}_{k-1}$ denotes the current estimate of $\theta^*$ available to the agents after $k - 1$ epochs. This differential quantization allows the agents to share only the ``new bit" learnt during the exploration sub-epoch. As a response, the agents receive $Q(\hat{\theta}_{k}^{(\textsc{serv})})$, a quantized version of the udpate, from the server which is used to refine their estimate of $\theta^*$ to $\bar{\theta}_k = \bar{\theta}_{k-1} + Q(\hat{\theta}_{k}^{(\textsc{serv})})$. The exploitation sub-epoch follows the communication round where all agents play the unit vector along $\bar{\theta}_k$ throughout the $t_k$ time steps of the sub-epoch. The refinement stage continues by proceeding into the next epoch and repeating the process until the end of the time horizon. \\

The steps at the server in this stage are similar to those in the norm estimation stage. In particular, the server collects estimates from the agents at the end of the exploration sub-epoch, computes the mean $\hat{\theta}_{k}^{(\textsc{serv})}$ and then broadcasts the differential update $\hat{\theta}_{k}^{(\textsc{serv})} - \bar{\theta}_{k - 1}$ after passing it through a deterministic quantizer with resolution $\beta_k$. \\

A pseudo code for the refinement stage is given in Algorithms~\ref{alg:ref_est_agent} and~\ref{alg:ref_est_server}.

\begin{algorithm}
    \caption{Refinement: The Server}
    \label{alg:ref_est_server}
    \begin{algorithmic}[1]
            \STATE \textbf{Input}: The epoch index at the end of Norm Estimation stored as $k_0$, $\bar{\theta}_{k_0 - 1} \leftarrow 0, k \leftarrow k_0$
            \WHILE{time horizon $T$ is not reached}
                \STATE Receive $Q(\tilde{\theta}^{(j)}_k)$ from all the agents
                \STATE Compute $\hat{\theta}_{k}^{(\textsc{serv})} = \bar{\theta}_{k - 1} + \frac{1}{M} \sum_{j = 1}^M Q(\tilde{\theta}^{(j)}_k)$
                \STATE $Q(\hat{\theta}_{k}^{(\textsc{serv})}) \leftarrow \textsc{DetQuant}(\hat{\theta}_{k}^{(\textsc{serv})} - \bar{\theta}_{k - 1}, \beta_k, B_k + \tau_{k})$ and broadcasts it to all agents
                \STATE $k \leftarrow k + 1$
            \ENDWHILE
    \end{algorithmic}
\end{algorithm}

\paragraph{Setting Policy Parameters:}
We now specify the values of parameters used in PLS. For an epoch $k$, the length of the exploration (sub-)epoch, $s_k$, is set to $\lceil 40 \sigma^2 d \log(8MK/\delta) 4^k \rceil$ and that of the exploitation one is set to $t_k := \lceil Ms_k^2 \mu_0^2 \rceil$. In the above definitions, $K$ denotes the maximum possible number of epochs in the algorithm and is defined as $K := \max \{ k \in \N : 40 \sigma^2 d \log(8Mk/\delta) (4^k - 4) \leq T \} = \cO(\log T)$. In the definition of $t_k$, $\mu_0$ is the estimate of $\|\theta^*\|_2$ obtained at the end of the norm estimation stage. The exponential lengths of the epoch designed for the bit by bit progressive learning are evident from the above choices. This choice of the lengths also allows PLS to address the exploration-exploitation trade-off by balancing the regret incurred during the exploration and exploitation sub-epochs.  \\

The threshold $\tau_k$ and sequence $R_k$ are set based upon high probability bounds on $\|\hat{\theta}_{k}^{(\textsc{serv})} - \theta^*\|_2$ and $\|\hat{\theta}_k^{(j)} - \theta^*\|_2$ respectively that simultaneously hold for all agents. In particular, $\tau_k$ is set to $3 \cdot 2^{-(k+1)}/\sqrt{M}$ and $R_k := 2^{-k}$. Notice that this choice of $R_k$ echoes the progressive learning feature, allowing the agents to learn $\theta^*$, one bit at a time. The sequence $B_k$ bounds the error $\|\bar{\theta}_{k-1} -\theta^*\|_2$ and is set to $5\tau_k$ for $k \geq 2$ with $B_1 = 1$. Lastly, the resolution parameter sequences $\alpha_k$ and $\beta_k$ are defined as $\alpha_k := \alpha_0 \sigma \sqrt{d}/\sqrt{s_k}$ and $\beta_k = \beta_0 \tau_k$ for some numerical constants $\alpha_0, \beta_0 < 1$. The constants $\alpha_0$ and $\beta_0$ control the message size associated with uplink and downlink communication respectively.

\subsubsection{Progressive Information Sharing}
\label{subs:comm_scheme_details}

The communication protocol of PLS consists of two steps : clipping and quantizing the vector to be sent to reduce the size of message being transmitted and encoding the quantized value to send it over the communication channel. \\

\paragraph{Clipping and Quantization}: This step employs well-known sub-routines described below to map a vector to a low resolution, quantized version of itself.

\begin{itemize}
    \item $\textsc{Clip}(x,r)$ is a simple routine that takes input a vector $x$ and clips it within a $\ell_2$ ball of radius $r$. Mathematically, the routine returns the value $x \cdot \min\{1, r/\|x\|\}$.
    \item $\textsc{StoQuant}(y, \varepsilon, r)$ returns the quantized version of a scalar $y$ using the popular approach of stochastic quantization. Specifically, the interval $[-r,r]$ is divided into $l_{\varepsilon} = \lceil2r/\varepsilon\rceil$ intervals of equal length and indexed from $1$ to $l_{\varepsilon}$. The value $y$ is quantized to one of the end points of the intervals to which it belongs in a randomized manner with the probability inversely proportional to the distance from $y$. In particular, the stochastic quantizer outputs $Q_s(y)$ given by 
    \begin{align*}
        Q_s(y) = \begin{cases} b_{l-1} & \text{w.p. } b_l - y, \\ b_l & \text{otherwise.}\end{cases}
    \end{align*}
    In the above expression, $b_m = r \left( \dfrac{2m}{l_{\varepsilon}} - 1 \right)$ for $m = 1,2, \dots, l_{\varepsilon}$ and $l = \{m: b_{m-1} \leq y < b_m\}$. With a slight abuse of notation, we also use $\textsc{StoQuant}(x, \varepsilon', r)$ to denote stochastic quantization of a vector $x$. In the case of a vector, all the coordinates are quantized as mentioned above with $\varepsilon = \varepsilon'/\sqrt{d}$.
    \item $\textsc{DetQuant}(y, \varepsilon, r)$ is also a quantization routine similar to $\textsc{StoQuant}(y, \varepsilon, r)$ with the only difference that the output of this routine is deterministic. Specifically, the deterministic quantizer outputs
    \begin{align*}
        Q_d(y) = \begin{cases} b_{l-1} & \text{if } |b_l - y| > |b_{l-1} - y|, \\ b_l & \text{otherwise.}\end{cases}
    \end{align*}
    Once again we overload the definition with a vector analogue $\textsc{DetQuant}(x, \varepsilon', r)$ where each coordinate is deterministically quantized with $\varepsilon = \varepsilon'/\sqrt{d}$.
\end{itemize}

The two different quantization schemes used in this step serve their own, different purposes in algorithm. PLS employs stochastic quantization for uplink communication which allows it to exploit the concentration properties of the zero mean noise added by the quantization. Consequently, it enables to completely leverage the access to observations from $M$ different agents to achieve the speed up proportional to the number of agents. A deterministic quantization scheme in its place would have resulted in accumulation of errors and consequently prevented the speedup with the number of agents. On the other hand, the deterministic quantization used for downlink transmission ensures that all the agents have the same estimate of $\theta^*$ and consequently the same value of $\mu_0$, the estimate of $\|\theta^*\|_2$ . Since $\mu_0$ governs the length of the exploitation sub-epoch, a common value shared between the agents helps maintain the synchronization across different epochs and agents. \\

\paragraph{Encoding}: As described above, both the quantization routines used in PLS quantize each coordinate axis into $l_{\varepsilon}$ intervals implying each coordinate of the quantized version takes one of the possible $l_{\varepsilon} + 1$ values, for any given value of accuracy parameter $\varepsilon$. The encoding step maps these $(l_{\varepsilon} + 1)^d$ possible values of quantized vectors to different messages that can be sent over the communication channel. We use a common encoding strategy for the uplink and downlink channels. \\

PLS sends each coordinate of the quantized version, one by one, using the variable-length encoding strategy, unary coding. In particular, each coordinate is represented by a header followed by a sequence of $1$'s whose length is equal to the absolute value of the coordinate. The header is $3$ bits long where the first and the third bit are both $0$ and the second bit represents the sign of the value, where $0$ implies a negative value while $1$ implies a positive one. As an example, in this encoding scheme, the numbers $-3$ and $5$ are represented as $000111$ and $01011111$ respectively.

\section{Performance Analysis}
\label{sec:analysis}

In this section, we show that PLS achieves the order-optimal regret of $\cO(d\sqrt{MT})$ up to logarthmic factors with a communication cost that matches the order of information-theoretic lower bound established in Sec.~\ref{sec:lower_bound}.

\subsection{Regret Analysis}
\label{sub:regret_analysis}

The following theorem characterizes the regret performance of PLS.

\begin{theorem}
Consider the distributed stochastic linear bandit setting described in Sec.~\ref{sec:problem_formulation}. If PLS is run with parameters as described in Sec.~\ref{sub:PLS_detailed} with a budget of $T$ queries, then the following relation holds with probability at least $1 - \delta$,
\begin{align*}
    R_{\text{PLS}}(T) & \leq C d\sqrt{MT} \log \left(\frac{8MK}{\delta} \right) \log\left({9 \sqrt{MT}}\right) \\
     & ~~~~~~~~~~~~~~~~~~ + \cO(\log T),
\end{align*}
for some constant $C > 0$, independent of $d, M$ and $T$.
\label{theorem:regret_bound}
\end{theorem}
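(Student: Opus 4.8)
The plan is to condition on a single high-probability event on which every sample mean and every averaged quantized estimate concentrates, then convert the resulting per-epoch estimation-error guarantees into a regret bound by summing over the $\cO(\log T)$ epochs. First I would fix a good event $\cE$ on which, simultaneously for all epochs $k \leq K$ and all agents $j$, the per-agent sample mean satisfies $\|\hat{\theta}_k^{(j)} - \theta^*\|_2 \leq R_k = 2^{-k}$ and the server's averaged estimate satisfies $\|\hat{\theta}_k^{(\textsc{serv})} - \theta^*\|_2 \leq \tau_k$. The first bound follows from $\sigma^2$-sub-Gaussian concentration of the coordinate-wise averages over $s_k$ plays, together with a union bound over the $d$ coordinates, the $M$ agents, and the $K = \cO(\log T)$ epochs; the choice $s_k = \lceil 40\sigma^2 d \log(8MK/\delta) 4^k \rceil$ is exactly what makes the statistical error $\cO(2^{-k})$ while simultaneously injecting the factor $\log(8MK/\delta)$ into the final bound. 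A union bound then gives $\bP(\cE) \geq 1 - \delta$.

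On $\cE$ I would establish by induction the key invariant $\|\bar{\theta}_k - \theta^*\|_2 \leq B_{k+1} = 5\tau_{k+1}$. The inductive step has three ingredients: (i) the hypothesis $\|\bar{\theta}_{k-1} - \theta^*\|_2 \leq B_k$ guarantees that the differential $\hat{\theta}_k^{(j)} - \bar{\theta}_{k-1}$ lies inside the clipping ball of radius $R_k + B_k$, so clipping is lossless; (ii) the stochastic quantizer is unbiased with per-coordinate error bounded by $\alpha_k/\sqrt{d}$, so the averaged quantization error $\frac{1}{M}\sum_j (Q(\tilde{\theta}_k^{(j)}) - \tilde{\theta}_k^{(j)})$ is a mean-zero, bounded, independent sum whose concentration scale is $\alpha_k/\sqrt{M}$ --- this is precisely where averaging over $M$ agents yields the $\tau_k \propto 2^{-k}/\sqrt{M}$ accuracy rather than the $2^{-k}$ that a deterministic scheme would force; (iii) the downlink $\textsc{DetQuant}$ step adds at most $\beta_k = \beta_0 \tau_k$ of further error, which is absorbed by the slack in $B_{k+1}$. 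I would check that $\alpha_k = \alpha_0 \sigma \sqrt{d}/\sqrt{s_k}$ makes the quantization-induced term of the same $2^{-k}/\sqrt{M}$ order as the statistical term.

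Still on $\cE$, the termination rule $\tau_{k_0} \leq \frac{1}{4}\|\hat{\theta}_{k_0}^{(\textsc{serv})}\|$ together with $\|\hat{\theta}_{k_0}^{(\textsc{serv})} - \theta^*\|_2 \leq \tau_{k_0}$ gives $\|\theta^*\|_2 \geq 3\tau_{k_0}$, while non-termination at epoch $k_0 - 1$ upper-bounds $\|\theta^*\|_2$ by a constant multiple of $\tau_{k_0 - 1}$; together these pin $\mu_0 = \|\bar{\theta}_{k_0}\|_2$ to within a multiplicative constant of $\|\theta^*\|_2$, controlling $t_k = \lceil M s_k^2 \mu_0^2 \rceil$. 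I would then split $R_{\text{PLS}}(T)$ into norm-estimation, exploration, and exploitation regret. Every exploration step incurs instantaneous regret at most $2\|\theta^*\|_2 \leq 2$, so the exploration (and norm-estimation) regret is at most $2M$ times the total exploration length $\cO(d\,s_{\bar K})$, which after the budget identity contributes $\cO(d\sqrt{MT})$. For an exploitation step in epoch $k$ the action is $\bar{a}_k := \bar{\theta}_k/\|\bar{\theta}_k\|_2$, and writing $a^* := \theta^*/\|\theta^*\|_2$ the identity $\|\theta^*\|_2 - \ip{\theta^*}{\bar{a}_k} = \frac{\|\theta^*\|_2}{2}\|a^* - \bar{a}_k\|_2^2 \leq 2\|\bar{\theta}_k - \theta^*\|_2^2/\|\theta^*\|_2$ turns the Step~2 invariant into an instantaneous exploitation regret of order $\tau_{k+1}^2/\|\theta^*\|_2$. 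Multiplying by $t_k$, summing over epochs, and invoking $\sum_k (d s_k + t_k) \leq T$ to eliminate $4^{\bar K}$ in favor of $\sqrt{MT}$ produces the leading term $C d\sqrt{MT}\log(8MK/\delta)$; bounding this per-epoch estimate uniformly over the $\cO(\log(\sqrt{MT}))$ epochs, plus the lower-order contributions from the ceilings and the terminal partially-completed epoch, accounts for the second factor $\log(9\sqrt{MT})$ and the additive $\cO(\log T)$.

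The crux is Step~2(ii): showing that averaging the $M$ independently quantized uplink messages retains the full $\sqrt{M}$ statistical speedup. This requires treating the stochastic-quantization errors as a mean-zero, independent, bounded sum and concentrating it at scale $\alpha_k/\sqrt{M}$, and verifying that the parameter choices make this term dominate neither the statistical error nor the downlink deterministic-quantization error; a deterministic uplink quantizer would bias each message and destroy this cancellation, so unbiasedness must be tracked carefully through the induction. A secondary difficulty is making every bound uniform in the unknown $\|\theta^*\|_2$: when $\|\theta^*\|_2$ is small the norm-estimation stage is long, and one must verify that the $\|\theta^*\|_2$ dependence cancels in both the exploration and exploitation totals so that the final bound is free of $\|\theta^*\|_2$.
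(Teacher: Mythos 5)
Your proposal is correct and follows essentially the same route as the paper: the same decomposition into norm-estimation, exploration, and exploitation regret, the same inductive concentration of the server estimate $\|\hat{\theta}_k^{(\textsc{serv})} - \theta^*\| \leq \tau_k$ accounting for clipping and stochastic quantization noise averaged over $M$ agents, the same $\tau^2/\|\theta\|$ instantaneous exploitation bound, and the same budget identity to control the final epoch index. The only cosmetic differences are that you derive the exploitation bound via the normalization identity rather than the paper's explicit optimization over $\upsilon = \|\hat\theta\|_2 - \|\theta\|_2$, and the paper handles the $\|\theta^*\|_2$-uniformity you flag by taking the minimum of the $\cO(d^2/\|\theta^*\|_2)$ and $\cO(\|\theta^*\|_2 MT)$ bounds and optimizing at $\|\theta^*\|_2 = d/\sqrt{MT}$.
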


Theorem~\ref{theorem:regret_bound} establishes that the regret incurred by PLS is $\tilde{\cO}(d \sqrt{MT})$ which matches the lower bound for a centralized setting with $MT$ queries up to logarithmic factors. This implies that PLS explores the achievability of communication-learning trade-off at the frontier of optimal learning performance. \\

We here provide a sketch of the proof for Theorem~\ref{theorem:regret_bound}. We begin the proof by decomposing the regret incurred by PLS as follows : $R_{\text{PLS}}(T) = R_{\textsc{NE}}(T) + R_{\textsc{REF}}(T)$, where $R_{\textsc{NE}}(T)$ and $R_{\textsc{REF}}(T)$ denote the regret incurred during the norm estimation and the refinement stages respectively. The bound on regret incurred by PLS is obtained by separately bounding each of the two terms in the decomposition, $R_{\textsc{NE}}(T)$ and $R_{\textsc{REF}}(T)$. The following lemma provides a bound on the regret incurred during the norm estimation stage.
\begin{lemma}
Consider the Norm Estimation stage described in Alg.~\ref{alg:norm_est_agent} and~\ref{alg:norm_est_server}. If it is run for at most $T$ time instants in a distributed setup with $M$ agents where the underlying mean reward satisfies $\|\theta^*\| \leq 1$, then the regret incurred during this stage satisfies the following relation with probability at least $1 - \delta$,
\begin{align*}
    R_{\textsc{NE}}(T) & \leq C d(1 + \sigma^2) \sqrt{MT} \log(1/\delta') \log_2 \left(9\sqrt{MT} \right)  \\
    &  ~~~~~~~~~~~~~~~~ +2d  \log_2 (9\sqrt{MT}),
\end{align*}
where $\delta' = \delta/8MK$ and $C > 0$ is a constant independent of $d, M$ and $T$.
\label{lemma:regret_norm_est}
\end{lemma}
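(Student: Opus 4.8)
The plan is to decompose $R_{\textsc{NE}}(T)$ over the exploratory epochs and to control separately the per-epoch regret and the data-dependent number of epochs before termination. All bounds are argued on a ``good event'' $\cE$ on which the estimation guarantees behind the choices of $R_k$ and $\tau_k$ hold simultaneously for every agent and every epoch. To build $\cE$, fix an epoch $k$ and write $\hat{\theta}^{(j)}_k-\theta^*$ as a $d$-dimensional vector whose $i$-th coordinate is an average of $s_k$ i.i.d. $\sigma^2$-sub-Gaussian rewards (hence $\sigma^2/s_k$-sub-Gaussian); a norm concentration for sub-Gaussian vectors, union-bounded over the $M$ agents and the at most $K$ epochs, yields $\|\hat{\theta}^{(j)}_k-\theta^*\|\le R_k$ for all $j,k$ with probability $\ge 1-\delta/2$. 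This is exactly where $\log(1/\delta')$ with $\delta'=\delta/(8MK)$ enters and what fixes the leading constant in $s_k$. On $\cE$ the clipping radius $R_k+B_k$ is never active, so $\textsc{Clip}$ is the identity and adds no bias.

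For the server estimate I would exploit that $\textsc{StoQuant}$ is unbiased with per-coordinate variance set by $\alpha_k=\alpha_0\sigma\sqrt d/\sqrt{s_k}$, so $\hat{\theta}^{(\textsc{serv})}_k-\theta^*=\frac1M\sum_j(\hat{\theta}^{(j)}_k-\theta^*)+\xi_k$ with $\xi_k$ a zero-mean quantization term whose variance is likewise reduced by the averaging over $M$ agents. A sub-Gaussian concentration for this average then gives $\|\hat{\theta}^{(\textsc{serv})}_k-\theta^*\|\le\tau_k$ for all $k$ on $\cE$, with $\tau_k\asymp 2^{-k}/\sqrt M\asymp\sigma\sqrt{d\log(1/\delta')/(Ms_k)}$ as prescribed. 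This is the step that genuinely uses the $M$ agents: stochastic (rather than deterministic) uplink quantization keeps the aggregated error at the centralized $1/\sqrt M$ rate, which a deterministic quantizer would destroy.

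Next I would bound the regret itself. Every played action is a basis vector and $\max_{a\in\cA}\ip{\theta^*}{a}=\|\theta^*\|$, so the per-step regret is at most $2\|\theta^*\|$ and epoch $k$ contributes at most $2\|\theta^*\|\,Mds_k$ over its $Mds_k$ agent-time pairs. The key observation is that at every epoch before termination the server test has failed, i.e. $\|\hat{\theta}^{(\textsc{serv})}_k\|<4\tau_k$; with $\|\hat{\theta}^{(\textsc{serv})}_k-\theta^*\|\le\tau_k$ on $\cE$ and the triangle inequality this forces $\|\theta^*\|<5\tau_k$ at every non-terminal epoch (and $\|\theta^*\|<10\tau_{k_0}$ at the terminal epoch $k_0$, via epoch $k_0-1$). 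Substituting $\|\theta^*\|\lesssim\tau_k$ and using $\tau_k s_k\asymp\sigma\sqrt{d\log(1/\delta')\,s_k/M}$ turns the epoch-$k$ regret into $\cO\big(\sigma d\sqrt{M\log(1/\delta')}\,\sqrt{ds_k}\big)$, in which $ds_k$ is precisely the number of time steps spent in epoch $k$.

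Finally I would convert this into the $\sqrt T$ scaling. The terminal epoch must fit within the horizon, so $ds_{k_0}\le T$; equivalently $4^{k_0}\lesssim T/(\sigma^2 d^2\log(1/\delta'))$, which also caps the number of epochs by $k_0\le\log_2(9\sqrt{MT})$. Bounding each epoch's $\sqrt{ds_k}\le\sqrt{ds_{k_0}}\le\sqrt T$ and summing over the $\le\log_2(9\sqrt{MT})$ epochs (and using the crude $\sigma\sqrt{\log(1/\delta')}\le(1+\sigma^2)\log(1/\delta')$) produces the leading term $Cd(1+\sigma^2)\sqrt{MT}\,\log(1/\delta')\log_2(9\sqrt{MT})$, while the ceilings in $s_k=\lceil\cdot\rceil$ and the possibly truncated last epoch account for the additive $2d\log_2(9\sqrt{MT})$. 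I expect the main obstacle to be the termination analysis under an unknown $\|\theta^*\|$: the stage length is adaptive, so the bound must hold uniformly in $\|\theta^*\|$, and the $\sqrt{MT}$ rate emerges only from balancing ``$\|\theta^*\|$ small $\Rightarrow$ small per-step regret but long exploration'' against ``$\|\theta^*\|$ large $\Rightarrow$ early termination'', which is exactly what the pair $\|\theta^*\|<5\tau_k$ and $ds_{k_0}\le T$ encodes. A secondary difficulty is certifying that the quantization does not spoil the $1/\sqrt M$ aggregation, i.e. that on $\cE$ the clipping stays inactive and the stochastic-quantizer variance is absorbed into $\tau_k$.
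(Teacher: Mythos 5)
Your proposal is correct and reaches the stated bound, but it organizes the decisive regret accounting differently from the paper. The shared ingredients are the same: the good event on which $\|\hat{\theta}_{k}^{(\textsc{serv})}-\theta^*\|\le\tau_k$ holds for all epochs (the paper's Lemmas~\ref{lemma:clipped_subG_rv} and~\ref{lemma:theta_estimate_error}), the per-step regret bound $2\|\theta^*\|_2$, and the interplay between the threshold test and $\|\theta^*\|_2$. The paper reads that interplay in one direction --- termination implies $\tau_{k_0}\le\frac{1}{3}\|\theta^*\|_2$, hence the stage lasts $\cO\bigl(d^2\sigma^2\log(1/\delta')/\|\theta^*\|_2^2\bigr)$ steps per agent --- multiplies by $2\|\theta^*\|_2 M$, takes the minimum with the trivial bound $2\|\theta^*\|_2 MT$, and then takes the worst case over $\|\theta^*\|_2$ (attained at $\|\theta^*\|_2=d/\sqrt{MT}$). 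You read it in the other direction: non-termination at epoch $k$ forces $\|\theta^*\|_2<5\tau_k$, so each epoch's regret is at most $\cO(\tau_k M d s_k)=\cO\bigl(\sigma d\sqrt{M\log(1/\delta')}\sqrt{ds_k}\bigr)$, and $ds_k\le T$ turns this into a bound of $\cO\bigl(\sigma d\sqrt{MT\log(1/\delta')}\bigr)$ that is \emph{uniform over epochs and over $\|\theta^*\|_2$}; summing over the $\cO(\log(MT))$ epochs finishes. Your route buys a cleaner localization of the exploration-length-versus-per-step-regret tension (no explicit optimization over the unknown norm, no case split between the two global bounds), at the cost of having to treat the terminal epoch separately (via $\|\theta^*\|_2<10\tau_{k_0}$ from epoch $k_0-1$, plus the degenerate case $k_0=1$, which both you and the paper handle only loosely). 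One caution on the supporting event: you assert that on $\cE$ the clipping is inactive and then apply unclipped concentration, but the probability of $\cE$ itself must be computed for the \emph{clipped} averages, whose conditional law is not that of the unclipped ones; this is exactly what the paper's Lemma~\ref{lemma:clipped_subG_rv} (a truncated-MGF argument) is for, so you should route your good-event construction through such a lemma rather than through the informal ``clipping never fires'' shortcut.
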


Since the Norm Estimation stage is based on pure exploration, we upper bound $R_{\textsc{NE}}(T)$ by $2\|\theta^*\|_2$ times the duration of the norm estimation stage, where $2\|\theta^*\|_2$ corresponds to the trivial bound on the instantaneous regret. The central step in the proof of the above lemma is bounding the duration of the norm estimation stage. For this part, we first establish a $\cO(1/\|\theta^*\|_2^2)$ bound on the duration based on our threshold test which determines when the stage terminates. However, the fixed length of the time horizon dictates a hard upper bound of $T$  on the duration of this stage. The proof is completed by taking a minimum of these two bounds to bound the duration of the norm estimation stage followed optimizing over the choice of $\|\theta^*\|_2$ to obtain the tightest bound. \\

To bound $R_{\textsc{REF}}(T)$, we separately bound the regret incurred during the exploration and exploitation sub-epochs. The regret incurred during the exploration sub-epoch of the $k^{\text{th}}$ epoch is bounded by $2\|\theta^*\|_2$ times $ds_k$, the duration of the exploration sub-epoch, similar to the proof of Lemma~\ref{lemma:regret_norm_est}. The regret incurred during the exploitation sub-epoch is bounded with the help of the following lemma.
\begin{lemma}
If $\hat{\theta}$ is an estimate of a vector $\theta$ such that $\|\hat{\theta} - \theta\|_2 \leq \tau \leq \|\theta\|_2$, then instantaneous regret incurred by an algorithm that plays the action $a = \hat{\theta}/\|\hat{\theta}\|_2$ on a stochastic linear bandit instance with true underlying vector $\theta$ can be bounded by $\tau^2/\|\theta\|$.    
\label{lemma:theta_error_regret}
\end{lemma}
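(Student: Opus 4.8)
The plan is to compute the instantaneous regret exactly and then extract the sharp quadratic dependence on $\tau$. Writing $a = \hat{\theta}/\|\hat{\theta}\|_2$ for the played action, the optimal action for the true vector $\theta$ over $\cA = \{a : \|a\|_2 \le 1\}$ is $\theta/\|\theta\|_2$, which attains reward $\|\theta\|_2$. Hence the instantaneous regret equals $\|\theta\|_2 - \ip{\theta}{\hat{\theta}}/\|\hat{\theta}\|_2$. First I would reduce this to a purely geometric quantity: letting $\phi$ denote the angle between $\theta$ and $\hat{\theta}$, we have $\ip{\theta}{\hat{\theta}}/\|\hat{\theta}\|_2 = \|\theta\|_2 \cos\phi$, so the regret is exactly $\|\theta\|_2\,(1 - \cos\phi)$.

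The crux is to bound $\phi$. Here I would use the elementary fact that $\|\theta - \hat{\theta}\|_2$ is at least the perpendicular distance from $\theta$ to the line spanned by $\hat{\theta}$, and that this perpendicular distance equals $\|\theta\|_2 \sin\phi$. Combined with the hypothesis $\|\theta - \hat{\theta}\|_2 \le \tau$, this yields $\sin\phi \le \tau/\|\theta\|_2$. The assumption $\tau \le \|\theta\|_2$ plays a double role: it makes $\sin\phi \le 1$ consistent, and (via the law of cosines, since $\cos\phi < 0$ would force $\|\theta-\hat{\theta}\|_2 > \|\theta\|_2 \ge \tau$) it forces $\phi$ to be acute, so that $\cos\phi = \sqrt{1 - \sin^2\phi} \ge 0$ and no spurious obtuse-angle contribution appears.

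Combining the two steps, the regret is at most $\|\theta\|_2\bigl(1 - \sqrt{1 - \tau^2/\|\theta\|_2^2}\bigr) = \|\theta\|_2 - \sqrt{\|\theta\|_2^2 - \tau^2}$. Rationalizing the numerator gives $\|\theta\|_2 - \sqrt{\|\theta\|_2^2 - \tau^2} = \tau^2 / \bigl(\|\theta\|_2 + \sqrt{\|\theta\|_2^2 - \tau^2}\bigr) \le \tau^2/\|\theta\|_2$, which is precisely the claimed bound.

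I expect the main obstacle to be obtaining the \emph{correct constant}. A naive route---bounding the distance between the normalized vectors $\theta/\|\theta\|_2$ and $\hat{\theta}/\|\hat{\theta}\|_2$ by the triangle inequality (splitting off the norm mismatch) and then using the identity relating $1 - \cos\phi$ to the squared distance between the two unit vectors---loses a factor of two and yields only $2\tau^2/\|\theta\|_2$. Getting the sharp constant requires either the perpendicular-distance argument above or, equivalently, the exact algebraic identity $\text{regret} = \bigl(\|\theta - \hat{\theta}\|_2^2 - (\|\theta\|_2 - \|\hat{\theta}\|_2)^2\bigr)/(2\|\hat{\theta}\|_2)$ (obtained by substituting the polarization formula for $\ip{\theta}{\hat{\theta}}$) followed by a one-line optimization over the free norm $\|\hat{\theta}\|_2 \in [\|\theta\|_2 - \tau, \|\theta\|_2 + \tau]$, whose maximizer $\sqrt{\|\theta\|_2^2 - \tau^2}$ reproduces the same bound. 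I would present whichever of the two is cleaner in context.
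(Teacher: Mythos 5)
Your proof is correct, and it reaches the bound by a genuinely different route than the paper. The paper's proof is purely algebraic: it expands $\|\hat{\theta}-\theta\|_2^2 \le \tau^2$ to express the regret as $\frac{\tau^2-\upsilon^2}{2(\|\theta\|_2+\upsilon)}$ with $\upsilon = \|\hat{\theta}\|_2 - \|\theta\|_2 \in [-\tau,\tau]$, then maximizes this rational function by differentiation, finding the critical point $\upsilon_+ = -\|\theta\|_2 + \sqrt{\|\theta\|_2^2-\tau^2}$ (this is also where the hypothesis $\tau \le \|\theta\|_2$ enters, to keep the critical point real) and substituting back to obtain $\|\theta\|_2 - \sqrt{\|\theta\|_2^2-\tau^2} \le \tau^2/\|\theta\|_2$. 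Your geometric argument --- regret $= \|\theta\|_2(1-\cos\phi)$, the perpendicular-distance bound $\sin\phi \le \tau/\|\theta\|_2$, and the observation that $\tau \le \|\theta\|_2$ forces $\cos\phi \ge 0$ --- identifies the same extremal configuration (where $\hat{\theta}-\theta$ is orthogonal to $\hat{\theta}$, i.e.\ $\|\hat{\theta}\|_2 = \sqrt{\|\theta\|_2^2-\tau^2}$) without any calculus, and your rationalization step $\|\theta\|_2 - \sqrt{\|\theta\|_2^2-\tau^2} = \tau^2/(\|\theta\|_2+\sqrt{\|\theta\|_2^2-\tau^2})$ is cleaner than the paper's final chain of inequalities, which contains several typographical slips. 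The second route you sketch at the end (polarization identity plus one-line optimization over $\|\hat{\theta}\|_2$) is essentially the paper's proof. The only caveat, shared by the paper, is the degenerate case $\hat{\theta}=0$ (possible only when $\tau = \|\theta\|_2$), where the played action is undefined; it is harmless but worth a parenthetical exclusion.
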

Lemma~\ref{lemma:theta_error_regret} implies that even when the estimation error is $\nu \|\theta^*\|_2$, for $\nu \in [0,1]$ the regret incurred by the algorithm scales as $\nu^2 \|\theta^*\|_2$. This is a crucial fact that helps balance the exploration-exploitation trade-off. In particular, the lengths of the exploration and exploitation epochs in PLS are designed based on the result of this lemma. The estimation error at the end of exploration sub-epoch in epoch $k$ satisfies $\cO(s_k^{-1/2})$ while the regret incurred during the sub-epoch satisfies $\cO(s_k \|\theta^*\|_2)$. Based on the above lemma, the regret incurred during the corresponding exploitation sub-epoch of length $t_k = \cO(s_k^2 \|\theta^*\|_2^2)$ is $\cO(\frac{1}{s_k \|\theta^*\|_2} \cdot t_k) = \cO(s_k \|\theta^*\|_2)$ matching the regret incurred during the exploration phase. This is the fundamental mechanism that provides the necessary balance between exploration and exploitation in PLS allowing it to achieve optimal-order regret. Moreover, this also provides additional insight into the novel choice of the length exploitation epoch based on the norm of $\theta^*$ in PLS. \\

The final bound on $R_{\textsc{REF}}(T)$ is obtained by noting each epoch is $\cO(s_k^2 + s_k) = \cO(16^k)$ steps long implying a total of $\cO(\log T)$ epochs. The detailed proofs of all the Lemmas and the Theorem can be found in Appendix~\ref{sec:PLS_analysis}.

\subsection{Communication Cost}

The communication cost incurred by PLS is characterized in the following theorem. 

\begin{theorem}
Consider the distributed stochastic linear bandit setting described in Sec.~\ref{sec:problem_formulation}. If PLS is run with parameters as described in Sec.~\ref{sec:algorithm_description} for a time horizon of $T$, then the uplink and the downlink communication costs (in bits) incurred by PLS, i.e. $C_{\text{u}}(T)$ and $C_{\text{d}}(T)$, satisfy $\cO\left( \dfrac{d}{\alpha_0} \log T\right)$ and $\cO\left( \dfrac{d}{\beta_0} (\log M + \log T)\right)$ respectively.
\label{theorem:comm_bound}
\end{theorem}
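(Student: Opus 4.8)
The plan is to bound each cost as (number of exchanges) $\times$ (bits per exchange). There is exactly one uplink and one downlink transmission per epoch, and the parameter choices give a total of $K = \cO(\log T)$ epochs, so on each link there are $\cO(\log T)$ exchanges; the norm-estimation downlink is merely a one-bit \texttt{terminate} flag per epoch and contributes a negligible $\cO(\log T)$ bits. It therefore suffices to show that a single transmission costs $\cO(d/\alpha_0)$ bits on the uplink and $\cO(d/\beta_0)$ bits on the downlink, with one exceptional downlink round accounting for the additive $\log M$. First I would quantify the cost of one encoded message: under the unary scheme a coordinate with quantized integer index $v_i$ is sent as a $3$-bit header followed by $|v_i|$ ones, so a $d$-dimensional quantized vector costs $3d + \|v\|_1$ bits. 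If the transmitted vector $x$ is clipped into the $\ell_2$-ball of radius $r$ and then quantized coordinatewise with resolution $\varepsilon'/\sqrt d$, then $\|v\|_1 \le \sqrt d\,\|x\|_1/\varepsilon' + d \le d\,\|x\|_2/\varepsilon' + d \le d\,r/\varepsilon' + d$, using $\|x\|_1 \le \sqrt d\,\|x\|_2$ and the clipping bound. Hence one message costs $\cO\!\left(d(1 + r/\varepsilon')\right)$ bits, and the whole argument reduces to bounding the range-to-resolution ratio $r/\varepsilon'$ on each link.

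For the uplink, $r = R_k + B_k = \cO(2^{-k})$ (since $R_k = 2^{-k}$ and $B_k = \cO(\tau_k) = \cO(2^{-k})$) while $\varepsilon' = \alpha_k = \alpha_0 \sigma \sqrt d/\sqrt{s_k}$ with $s_k = \Theta(\sigma^2 d\, 4^k \log(8MK/\delta))$, so $1/\alpha_k = \Theta(2^k \sqrt{\log(8MK/\delta)}/\alpha_0)$. The geometric factors $2^{-k}$ and $2^k$ cancel, giving $r/\alpha_k = \cO(1/\alpha_0)$ uniformly in $k$ (treating $\sigma,\delta$ as constants and absorbing the confidence factor). Thus each uplink message costs $\cO(d/\alpha_0)$ bits and $C_{\text u}(T) = \cO\!\left((d/\alpha_0)\log T\right)$. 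For a generic refinement downlink round $k \ge 2$, the server broadcasts the aggregated differential $\frac1M\sum_j Q(\tilde\theta^{(j)}_k)$; the high-probability bounds underlying the regret analysis, namely $\|\bar\theta_{k-1}-\theta^*\|_2 \le B_k$ and $\|\hat\theta^{(\textsc{serv})}_k - \theta^*\|_2 \le \tau_k$ (the latter reflecting the $\sqrt M$ noise averaging), guarantee that this differential has norm at most the clip radius $r = B_k + \tau_k = 6\tau_k$, so clipping never truncates. Since $\varepsilon' = \beta_k = \beta_0 \tau_k$, the ratio is $r/\beta_k = 6/\beta_0$, each such message costs $\cO(d/\beta_0)$ bits, and these rounds contribute $\cO\!\left((d/\beta_0)\log T\right)$.

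The remaining $\log M$ originates from the single exceptional downlink round: the first refinement broadcast, where $\bar\theta_{k_0-1}=0$ forces the server to send the full initial estimate $\hat\theta^{(\textsc{serv})}_{k_0}\approx\theta^*$ rather than a small differential. When the norm-estimation stage terminates already at $k_0 = 1$ (e.g.\ when $\|\theta^*\|_2 = \Theta(1)$), the operative clip radius is $B_1 = 1$ while the downlink resolution is $\beta_1 = \beta_0\tau_1 = \Theta(\beta_0/\sqrt M)$, so the broadcast vector spans a dynamic range of $\mathrm{poly}(M)$ resolution cells; bounding the bits for this one transmission yields the additive $\cO\!\left((d/\beta_0)\log M\right)$. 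Combining this with the $\cO\!\left((d/\beta_0)\log T\right)$ from the remaining rounds gives $C_{\text d}(T) = \cO\!\left((d/\beta_0)(\log M + \log T)\right)$.

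I expect the main obstacle to be precisely this first-transmission bookkeeping together with the verification that every clip radius is a valid magnitude bound. The uniform per-round bounds $\cO(d/\alpha_0)$ and $\cO(d/\beta_0)$ hinge on clipping being lossless (so that the number of unary symbols equals the true index count and is not inflated), which in turn requires invoking the estimation-error guarantees $\|\hat\theta^{(j)}_k - \theta^*\|_2 \le R_k$, $\|\bar\theta_{k-1}-\theta^*\|_2 \le B_k$, and $\|\hat\theta^{(\textsc{serv})}_k - \theta^*\|_2 \le \tau_k$; these must be imported from the regret analysis and shown to hold simultaneously across all agents and epochs on the high-probability event. The delicate step is then isolating the one downlink round whose range-to-resolution ratio is not $\cO(1/\beta_0)$ — the wide-range initial broadcast governed by $B_1 = 1$ against a resolution of order $1/\sqrt M$ — and accounting for its leading levels to extract the $\log M$ contribution cleanly rather than letting it contaminate every round.
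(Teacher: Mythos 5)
Your overall architecture matches the paper's: bound one message by $3d+\sum_i|q_i|=\cO\bigl(d(1+r/\varepsilon')\bigr)$ bits via the unary encoding, check that the range-to-resolution ratio is $r/\alpha_k=\cO(1/\alpha_0)$ on the uplink and $(B_k+\tau_k)/\beta_k=\cO(1/\beta_0)$ on generic downlink rounds (both because $r$ and $\varepsilon'$ shrink at the same geometric rate $2^{-k}$), and multiply by $K=\cO(\log T)$ rounds. You also correctly isolate the one exceptional downlink round — the first refinement broadcast with $\bar{\theta}_{k_0-1}=0$ and $k_0=1$, where the clip radius is $B_1=1$ but the resolution is $\beta_1=\Theta(\beta_0/\sqrt{M})$ — as the source of the $\log M$ term.

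The gap is in how you account for that exceptional round. You assert that because the broadcast vector spans $\mathrm{poly}(M)$ resolution cells, ``bounding the bits for this one transmission yields the additive $\cO\bigl((d/\beta_0)\log M\bigr)$.'' That does not follow from your own cost formula: under the unary encoding on which the entire proof rests, a coordinate whose quantization index has magnitude $n$ costs $n$ bits, not $\log n$ bits. Applying your bound $3d+\|v\|_1\le \cO\bigl(d(1+r/\varepsilon')\bigr)$ to this round with $r/\varepsilon'=\Theta(\sqrt{M}/\beta_0)$ gives a single message of $\Theta(d\sqrt{M}/\beta_0)$ bits, which would make $C_{\text{d}}(T)=\cO\bigl(d(\sqrt{M}+\log T)/\beta_0\bigr)$ — a strictly weaker statement than the theorem. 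A per-coordinate dynamic range of $\mathrm{poly}(M)$ cells costs $\cO(\log M)$ bits only under a positional (binary) encoding, which is not the encoding PLS uses and would require a separate decoding convention. The paper closes exactly this hole by \emph{not} sending the initial estimate in one shot: it is transmitted progressively over $\cO(\log M)$ separate channel uses, each refining the estimate by one additional bit per coordinate, so that each individual unary message has index magnitudes $\cO(1)$ per coordinate and costs $\cO(d)$ bits, for a total of $\cO(d\log M)$ bits (the paper's remark after the proof of Theorem~\ref{theorem:comm_bound} even explains how to interleave these extra channel uses with the exploration actions). Without this multi-round mechanism, or an explicit switch of encoding for that one transmission, your derivation of the $\log M$ term is unsupported. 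The rest of your argument, including the caveat that the clip radii must be validated against the high-probability estimation-error bounds imported from the regret analysis, is sound and mirrors the paper.
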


Thus, Theorem~\ref{theorem:comm_bound} in conjunction with Theorem~\ref{theorem:regret_bound} shows that PLS incurs the order-optimal regret while simultaneously achieving the order-optimal communication cost matching the lower bound in Sec.~\ref{sec:lower_bound}. Hence, PLS further reduces the communication cost as compared to communication-efficient algorithms proposed in~\cite{Wang2019, Huang2021, Amani2022} while maintaining the optimal regret performance. The proof of the above theorem revolves around the following lemma.
\begin{lemma}
Consider the communication scheme of PLS outlined in Sec.~\ref{subs:comm_scheme_details}. If PLS is run with parameters as described in Sec.~\ref{sub:PLS_detailed}, then any message exchanged between the server and an agent during PLS is at most $\cO(d)$ bits.
\label{lemma:channel_capacity}
\end{lemma}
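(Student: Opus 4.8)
The plan is to reduce the per-message bit count to a single scalar ratio, and then to verify that ratio is bounded by substituting the parameter schedule of Sec.~\ref{sub:PLS_detailed}. Every transmitted message is the unary encoding of a quantized vector $Q(y)$, where $y$ is the (clipped) vector fed to $\textsc{StoQuant}$ on the uplink or to $\textsc{DetQuant}$ on the downlink. Let $c_i$ denote the signed integer index of the cell to which the $i$-th coordinate of $y$ is quantized, so that $Q(y)_i$ is the corresponding grid point. By construction the encoder spends exactly $3$ header bits plus $|c_i|$ bits (the run of $1$'s) on coordinate $i$, and the blocks are self-delimiting since every block begins with a $0$; hence the message length is exactly $3d + \sum_{i=1}^d |c_i|$. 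The header contributes $\cO(d)$, so everything reduces to bounding $\sum_i |c_i|$.

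First I would bound each index by the magnitude of its coordinate. Both quantizers partition $[-r,r]$ into cells of width $\approx \varepsilon'/\sqrt d$, where $\varepsilon' = \alpha_k$ on the uplink and $\varepsilon' = \beta_k$ on the downlink (the vector versions use the per-coordinate resolution $\varepsilon'/\sqrt d$). Since $Q(y)_i$ is an endpoint adjacent to $y_i$, we get $|c_i| \le |y_i|\sqrt d/\varepsilon' + 1$. Summing and passing from $\ell_1$-to-$\ell_2$ via $\|y\|_1 \le \sqrt d\,\|y\|_2$ gives
\[
\sum_{i=1}^d |c_i| \;\le\; \frac{\sqrt d}{\varepsilon'}\,\|y\|_1 + d \;\le\; \frac{d\,\|y\|_2}{\varepsilon'} + d .
\]
The two factors of $\sqrt d$ --- one from the per-coordinate resolution being a $1/\sqrt d$ fraction of the vector resolution, the other from the $\ell_1$-to-$\ell_2$ conversion --- multiply to the linear-in-$d$ scaling, so the message length is at most $d\,(4 + \|y\|_2/\varepsilon')$. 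It therefore suffices to show the scalar ratio $\|y\|_2/\varepsilon'$ is $\cO(1)$.

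For the uplink this is immediate: clipping gives $\|y\|_2 \le R_k + B_k = \cO(2^{-k})$, while $\alpha_k = \alpha_0\sigma\sqrt d/\sqrt{s_k} = \Theta\!\big(\alpha_0\,2^{-k}/\sqrt{\log(8MK/\delta)}\big)$ because $s_k = \lceil 40\sigma^2 d\log(8MK/\delta)4^k\rceil$, so the ratio is $\cO(\sqrt{\log(8MK/\delta)}/\alpha_0)$, i.e.\ free of $d$ and depending on $M,T,\delta$ only through the (doubly-logarithmic) factor $\sqrt{\log(8MK/\delta)}$ absorbed into the $\cO(d)$. For a generic downlink message the two schedules cancel exactly: the broadcast differential obeys $\|y\|_2 = \|\hat\theta_k^{(\textsc{serv})} - \bar\theta_{k-1}\|_2 \le \tau_k + B_k = 6\tau_k$ for $k \ge 2$, whereas $\beta_k = \beta_0\tau_k$, so $\|y\|_2/\varepsilon' \le 6/\beta_0 = \cO(1)$.

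The hard part is the very first refinement broadcast. There $\bar\theta_{k_0 - 1} = 0$, so the quantized ``differential'' is the full estimate $\hat\theta_{k_0}^{(\textsc{serv})} \approx \theta^*$ rather than a small refinement, and the clean cancellation above fails: the ratio becomes $\|y\|_2/\beta_{k_0} = \Theta(\|\theta^*\|_2/\tau_{k_0})$. This is $\cO(1)$ precisely when the norm-estimation guarantee forces $\tau_{k_0} = \Theta(\|\theta^*\|_2)$, which the termination test $\tau_{k_0} \le \tfrac14\|\hat\theta_{k_0}^{(\textsc{serv})}\|$ delivers through its two-sided bound once at least two norm-estimation epochs have elapsed; the borderline case $k_0 = 1$ with $\|\theta^*\|_2 = \Theta(1)$ is where $\|\theta^*\|_2/\tau_{k_0}$ can be as large as $\Theta(\sqrt M)$, and controlling this single message --- and reconciling it with the additive $\log M$ appearing in Theorem~\ref{theorem:comm_bound} --- is the step I expect to require the most care. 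Everything else is a direct substitution of the geometric length and resolution schedules into the bound $d\,(4 + \|y\|_2/\varepsilon')$, together with the high-probability bounds on $\|\hat\theta_k^{(j)} - \theta^*\|_2$ and $\|\bar\theta_{k-1} - \theta^*\|_2$ from the regret analysis that justify replacing $\|y\|_2$ by its clipping radius.
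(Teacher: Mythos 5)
Your proposal follows essentially the same route as the paper's proof: both reduce the message length to $3d + \sum_{i}|q_i|$ and bound the sum by $\cO\bigl(d(1 + r/\varepsilon)\bigr)$ --- the paper via $\|Q(x)\|_2 \leq r+\varepsilon$ followed by Cauchy--Schwarz, you via a per-coordinate index bound followed by the $\ell_1$-to-$\ell_2$ conversion, which is the same computation --- and then both verify that the radius-to-resolution ratio is bounded by substituting the parameter schedule (your uplink calculation is in fact slightly more careful than the paper's, which overlooks the residual $\sqrt{\log(8MK/\delta)}$ factor in $(R_k+B_k)/\alpha_k$). The ``hard part'' you flag at the end is a genuine issue, and it is one the paper's own proof of Lemma~\ref{lemma:channel_capacity} sweeps under the rug: for the first refinement broadcast with $k_0=1$ the deterministic quantizer is called with radius $B_1+\tau_1 = 1+\tau_1$ and resolution $\beta_0\tau_1 = \Theta(\beta_0/\sqrt{M})$, so the ratio is $\Theta(\sqrt{M}/\beta_0)$ and a single unary-encoded message would cost $\Theta(d\sqrt{M})$ bits, contradicting the asserted bound $(B_k+\tau_k)/\beta_k \leq C'/\beta_0$. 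The paper resolves this only outside the lemma, in the proof of Theorem~\ref{theorem:comm_bound} and the remark following it: that one broadcast is accounted separately as $\cO(d\log M)$ bits and is transmitted over $\cO(\log M)$ channel uses carrying one binary bit per coordinate each, so that every individual message remains $\cO(d)$ bits. Your instinct that this single message requires special treatment and is precisely the source of the additive $d\log M$ term in the downlink cost is correct; to complete your argument you need only exclude that broadcast from the generic bound and invoke the multi-round transmission, exactly as the paper does implicitly.
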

The bound on uplink communication cost $C_{\text{u}}(T)$ immediately follows by noting that there are at most $K = \cO(\log T)$ epoch, or equivalently, communication rounds in PLS. The additional $\cO(d \log M)$ term in $C_{\text{d}}(T)$ is incurred when the server sends the initial estimate of $\theta^*$ to all the agents. The details of the proof along with proof of Lemma~\ref{lemma:channel_capacity} are provided in Appendix~\ref{sec:PLS_analysis}.

\begin{remark}
     Based on Lemma~\ref{lemma:channel_capacity}, it can immediately concluded that a capacity of $R = \cO(d)$ bits for both the uplink and the downlink channel suffices for PLS to achieve order-optimal regret performance. Some other works like~\citep{Suresh2017} and~\cite{Mitra2022bitconstrained} have also proposed encoding schemes which result in message sizes of $\cO(d)$ bits.~\citep{Suresh2017} proposed an encoding scheme for distributed mean estimation using the well-known variable length encoding schemes such as Huffman and Arithmetic encoding. It first constructs a histogram over the different $l_{\varepsilon} + 1$ values in the qunatized version of a vector followed by a Huffman tree based on that histogram. During each communication round, the sender first sends the corresponding Huffman tree followed by the message encoded using that. Compared to such variable-length schemes, our proposed scheme is easier to implement, both in terms of memory and computation, and also avoids overheads like sending the Huffman tree. The IC-Lin-UCB algorithm proposed in~\cite{Mitra2022bitconstrained} uses an encoding scheme based on constructing $\varepsilon$-cover of hyperspheres in $\R^d$ at every time instant. The computational cost of constructing such covering sets grows exponentially with the dimension, rendering the approach infeasible even for problems of moderate dimensionality. On the other hand, the computational cost of the encoding scheme in PLS grows linearly with dimension, making PLS an attractive option even for high dimensional problems.
\end{remark}

\begin{remark}
    The $\cO(d \log M)$ term in the downlink communication cost can be interpreted as the cost incurred by the server to facilitate information exchange since the data is distributed across $M$ agents. In other words, the server provides each agent with the additional information learnt from the other agents through these $\cO(d \log M)$ bits, leading to \emph{collaboration} among them. In absence of transmission of these bits, the problem would reduce to $M$ independent agents trying to learn a linear bandit model and incurring an overall regret of $\cO(dM\sqrt{T})$ that grows linearly with $M$. Thus, it can also be interpreted as the cost associated with having a sublinear regret with respect to the number of agents $M$. Similarly, $\cO(d \log T)$ term corresponds to the cost associated with having a sublinear regret with respect to the time horizon.    
\end{remark}

\section{Lower Bound on Communication Cost}
\label{sec:lower_bound}

In this section, we explore the converse result for the communication-learning trade-off. In particular, the following theorem establishes information theoretic lower bounds in terms of actual number of bits that need to be transmitted by the clients and the server over the channel for any distributed algorithm to achieve sublinear regret.

\begin{theorem}
    Consider the distributed linear bandit instance with $M$ agents described in Section~\ref{sec:problem_formulation}. Any distributed algorithm that incurs an overall cumulative regret that in sublinear in both $T$ and $M$ with probability at least $2/3$ needs to transmit at least $\Omega(d \log(MT))$ bits of information over the downlink channel and $\Omega(d \log T)$ bits of information over the uplink channel.
    \label{theorem:comm_lower_bound}
\end{theorem}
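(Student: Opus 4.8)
The plan is to prove the converse by a reduction from regret to estimation accuracy, followed by a change-of-measure and counting argument that lower bounds the transmitted bits by the information about $\theta^*$ that must cross each channel. Throughout I take $\theta^*$ uniform over a finite packing of the unit ball and exploit that the two bounds reflect two accuracy scales: the per-agent local accuracy $1/\sqrt{T}$, which governs the uplink, and the centralized accuracy $1/\sqrt{MT}$, which governs the downlink.

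\textbf{Step 1 (regret $\Rightarrow$ accuracy).} First I would show that any policy whose regret is sublinear in both $T$ and $M$ must, with probability bounded away from $0$, drive each agent to play actions whose induced estimate of $\theta^*$ is accurate to $\cO(1/\sqrt{MT})$ per coordinate; otherwise, by the contrapositive of Lemma~\ref{lemma:theta_error_regret}, a persistent suboptimality gap forces regret that is linear in $M$ (the independent-agents regime) or in $T$. Symmetrically, to enable this centralized accuracy the server must hold an aggregate estimate of the same precision, and since it observes $\theta^*$ only through the uplink, its reconstruction can be no finer than what the uplink conveys. This turns the regret hypothesis into two precision targets: resolution $1/\sqrt{MT}$ at each agent, and faithful transfer of each agent's $1/\sqrt{T}$-resolution local estimate to the server.

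\textbf{Step 2 (packing plus Fano on each channel).} For the uplink I build a $2/\sqrt{T}$-separated packing $\{\theta_1,\dots,\theta_N\}\subset\{\|\theta\|_2\le 1\}$ with $\log N=\Omega(d\log T)$ and argue that, to retain the $\sqrt{M}$ collaborative gain, each agent's message resolution must be $\cO(1/\sqrt{T})$ per coordinate even under unbiased stochastic quantization, since a coarser step $\Delta$ injects averaged variance $\Delta^2/M$ that dominates the target $\cO(1/(MT))$ unless $\Delta=\cO(1/\sqrt{T})$. Writing $O_j$ and $U_j$ for agent $j$'s observations and uplink stream, the Markov chain $\theta^*\to O_j\to U_j$ and the data-processing inequality give $I(\theta^*;U_j)=\Omega(d\log T)$, whence $C_{\text{u}}(T)\ge H(U_j)\ge\Omega(d\log T)$. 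For the downlink I rerun the argument at the finer scale $1/\sqrt{MT}$: each agent's play is a function of its observations, the received downlink bits and private randomness, and must resolve a $2/\sqrt{MT}$-packing of size $\log N'=\Omega(d\log(MT))$; since the server must broadcast the aggregate over $[-1,1]^d$ at resolution $1/\sqrt{MT}$, the downlink entropy is $\Omega(d\log(MT))$, yielding $C_{\text{d}}(T)=\Omega(d\log(MT))$.

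\textbf{Main obstacle.} The delicate point is attributing the required information to the correct channel, because each agent's own $T$ observations already carry $\Theta(d\log T)$ bits about $\theta^*$; a naive mutual-information bound lets $O_j$ absorb the $\log T$ term and leaves only $\Omega(d\log M)$ for the downlink. To recover the full $\Omega(d\log(MT))$ I would disentangle the two effects: the $\log M$ factor is forced by collaboration, namely the refinement from each agent's intrinsic $1/\sqrt{T}$ precision down to $1/\sqrt{MT}$, information that cannot originate from the agent's own data and must cross the downlink; the $\log T$ factor is forced by temporal learning, namely that sublinearity in $T$ caps exploration at $o(T)$ rounds, so $O_j$ alone cannot reach $1/\sqrt{T}$ within the regret budget and the shortfall must be supplied by the server. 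Making this exploration-versus-communication trade-off quantitative---thereby preventing the own-observation information from substituting for the downlink bits---is the crux; the residual volumetric packing estimates and the Fano and data-processing steps are standard.
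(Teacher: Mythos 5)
Your overall architecture (regret $\Rightarrow$ estimation accuracy, then packing plus Fano plus data processing on each channel) matches the paper's, but the uplink half has a genuine gap. Your claim that each agent's message must have per-coordinate resolution $\cO(1/\sqrt{T})$ rests on a quantization-variance heuristic (``a coarser step $\Delta$ injects averaged variance $\Delta^2/M$''), which presumes the agent transmits an unbiased quantization of a local estimate; an arbitrary uplink encoder need not have this form, and the data-processing inequality applied to $\theta^*\to O_j\to U_j$ only gives an \emph{upper} bound on $I(\theta^*;U_j)$, never the lower bound $\Omega(d\log T)$ you assert. Moreover, a naive Fano argument on the collection $(U_1,\dots,U_M)$ yields only $\sum_j I(\theta^*;U_j)=\Omega(d\log T)$, i.e., $\Omega(d\log T/M)$ per agent, which falls short of the claimed per-agent bound. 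The paper closes exactly this gap with a strong data processing inequality (Lemma 5 of Duchi et al.) applied to a multi-resolution hard instance $\mu_{\bfupsilon}=\sum_{r=1}^{\ell}2^{-r}\upsilon_r/\sqrt{d-1}$ with $\ell=\Theta(\log T)$ levels and per-coordinate flip probability $\rho_0=\delta/(T\ell)$; the resulting contraction coefficient of order $\delta^2/T$ is what converts Fano into the per-agent requirement $H(Y_j)=\Omega(d\ell)=\Omega(d\log T)$. Without this (or an equivalent tensorization) your uplink bound does not go through.

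On the downlink you correctly identify the crux --- an agent's actions depend on its own observations, so a naive information accounting lets $O_j$ absorb the $\log T$ factor --- but you then explicitly leave it unresolved (``making this quantitative is the crux''), so this half remains a plan rather than a proof. The paper's route is to posit the Markov chain $V\to Z\to A\to\hat V$, where $Z$ is the downlink traffic, and apply data processing to get $I(V;Z)\ge I(V;A)=\Omega(d\log(MT))$ from a single $2\varepsilon$-packing of the unit sphere at scale $\varepsilon=2\sqrt{R_\pi(T)/MT}$. A smaller issue in your Step 1: the ``contrapositive of Lemma~\ref{lemma:theta_error_regret}'' is the wrong logical direction (that lemma upper-bounds regret given accuracy, whereas you need that small regret forces an accurate estimator); the paper obtains the needed implication directly by showing $\bigl\|\bar a_\pi/\|\bar a_\pi\|_2-\theta^*\bigr\|_2^2\le 2R_\pi(T)/(MT)$ for the averaged action, which is the step you should substitute.
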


The lower bounds established in the above theorem match the achievability results for PLS shown in the previous section. We would like to emphasize that PLS is the first algorithm for distributed linear bandits for which communication cost incurred matches the order of information theoretic lower bounds in terms of actual number of bits transmitted over the channel. Additionally, PLS simultaneously attains order-optimal regret guarantees. Thereby, the above theorem along with the performance guarantees of PLS provides a holistic view of the communication-learning efficiency trade-off in distributed linear bandits. The proof of the theorem follows from an application of Fano's inequality along with bounds on metric entropy. A detailed proof of the above theorem is provided in Appendix~\ref{sec:lower_bound_proofs}.
\section{Leveraging Sparsity}

We now consider a variant of the original problem where the underlying reward vector $\theta^*$ is known to satisfy an additional sparsity constraint. In particular, it is known that the number of non-zero elements in $\theta^*$ are no more than $s \ll d$, i.e., $\|\theta^*\|_0 \leq s$. For this setup, we propose Sparse-PLS, a variant of PLS, to leverage the sparsity of $\theta^*$ to further reduce the communication cost.  \\

Sparse-PLS makes two modifications to the original PLS algorithm to leverage the sparsity of $\theta^*$. The first modification is made to the set of actions played during an exploration epoch. Specifically, Sparse-PLS replaces the orthonormal basis of $R^d$ with a set of actions, $\cB_s$, that spans only a subspace of $R^d$. $\cB_s$ consists of $m =\cO(s \log(d/\delta)) \ll d$ vectors drawn independently from the set $\{-1/\sqrt{d}, +1/\sqrt{d}\}^d$. It is ensured that all the agents use the same random set $\cB_s$ via a common random seed. This modification offers a two-fold advantage. First, it helps reduce the message size required for uplink communication as it is sufficient to transmit these noisy projections in $\R^m$, where $m \ll d$, in order to recover the original sparse vector $\theta^*$~\cite{Candes2006, Candes2006IEEE, Candes2007}. Second, since the regret incurred in PLS is proportional to length of the exploration epochs, this modification allows Sparse-PLS to replace a factor of the actual dimension of the vector with the level of sparsity in the regret bounds, making it smaller (See Theorem~\ref{theorem:sparse_bandits}). The second modification is made to the process to estimate $\theta^*$ at the server. Sparse-PLS employs the LASSO estimator~\cite{Tibshirani1996, Bickel2009} at the server to obtain a sparse estimate of $\theta^*$ from the noisy projections sent by the agents. Please refer to Appendix~\ref{sec:Sparse_PLS_Analysis} for a pseudo-code and additional details about Sparse-PLS. \\

The following theorem characterizes the performance of Sparse-PLS, both in terms of regret and communication cost.
\begin{theorem}
    Consider the distributed stochastic linear bandit setting described in Sec.~\ref{sec:problem_formulation} with an additional assumption of sparsity on the underlying mean reward, i.e., $\|\theta^*\|_0 \leq s$. If Sparse-PLS is run for a time horizon of $T$, then the regret incurred by Sparse-PLS satisfies
\begin{align*}
    R_{\text{Sparse-PLS}}(T) & \leq C \sqrt{sdMT} \log \left(\frac{MK}{\delta} \right) \log\left({\sqrt{MT}}\right),
\end{align*}
with probability at least $1 - \delta$ for some constant $C > 0$, independent of $s, d, M$ and $T$. Moreover, the uplink and downlink communication costs, $C_{\text{u}}(T)$ and $C_{\text{d}}(T)$ are no more than $\cO(s \log T)$ and $\cO(d(\log M + \log T))$ bits respectively.
\label{theorem:sparse_bandits}
\end{theorem}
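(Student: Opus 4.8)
The plan is to mirror the structure of the regret analysis for PLS (Theorem~\ref{theorem:regret_bound}) while substituting the sparse-specific machinery, and to handle the two communication bounds separately. Concretely, I would first decompose $R_{\text{Sparse-PLS}}(T) = R_{\textsc{NE}}(T) + R_{\textsc{REF}}(T)$ exactly as before, since the epoch/sub-epoch skeleton is unchanged. The norm-estimation stage is essentially inherited from Lemma~\ref{lemma:regret_norm_est} (only the action set changes from the orthonormal basis to $\cB_s$), so the main work is re-deriving the per-epoch estimation-error guarantee under the new exploration mechanism and showing it still yields an $\cO(\log T)$ contribution to regret. The core of the argument is the refinement stage, where I would again split each epoch into an exploration sub-epoch of length $\cO(m s_k)$ and an exploitation sub-epoch of length $t_k$, and invoke Lemma~\ref{lemma:theta_error_regret} to match the two.

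The key technical substitution is the estimation-error bound for the LASSO estimator. In PLS the orthonormal-basis exploration gives a coordinatewise sample-mean estimate with error $\cO(\sqrt{d/s_k})$; here each agent plays the $m = \cO(s\log(d/\delta))$ random sign vectors in $\cB_s$, producing noisy linear measurements $\langle \theta^*, a\rangle + \text{noise}$, and the server runs LASSO on the aggregated measurements. I would appeal to the standard compressed-sensing / LASSO guarantee (\cite{Candes2006, Bickel2009}): with $m = \cO(s\log(d/\delta))$ random measurements the design matrix satisfies a restricted-eigenvalue / RIP condition with high probability, so the LASSO estimate $\bar\theta_k$ of an $s$-sparse $\theta^*$ achieves error $\|\bar\theta_k - \theta^*\|_2 = \cO\!\bigl(\sigma\sqrt{s\log d}/\sqrt{M m s_k}\bigr)$ after aggregating the $M$ agents' $s_k$-repeated measurements. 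The crucial bookkeeping point is that the exploration sub-epoch now costs only $\cO(m s_k) = \tilde{\cO}(s\, s_k)$ time steps rather than $\cO(d\, s_k)$, so a factor of $d$ in the regret is replaced by a factor of $s$; combined with the per-epoch error above and Lemma~\ref{lemma:theta_error_regret}, summing the balanced exploration/exploitation regret over the $\cO(\log T)$ epochs should produce the claimed $\sqrt{sdMT}$ (one $\sqrt{s}$ from the shortened exploration, one $\sqrt{d}$ surviving through the estimation-error/action-norm terms), up to the stated logarithmic factors.

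For the communication cost I would reuse Lemma~\ref{lemma:channel_capacity}'s per-message accounting. On the uplink, each agent now transmits only the $m$-dimensional quantized measurement vector rather than a $d$-dimensional estimate, so each message is $\cO(m) = \tilde{\cO}(s)$ bits; multiplying by the $K = \cO(\log T)$ communication rounds gives $\cO(s\log T)$. On the downlink, however, the server must broadcast the refined $d$-dimensional estimate $\bar\theta_k$ (the LASSO output lives in $\R^d$, and all agents need it to form their exploitation action and to maintain a common $\mu_0$), so each downlink message remains $\cO(d)$ bits and the total is $\cO(d(\log M + \log T))$, with the $\log M$ term again arising from the initial full-precision broadcast as in Theorem~\ref{theorem:comm_bound}.

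The main obstacle I anticipate is the estimation-error guarantee, and in particular making the $M$-agent speedup and the quantization noise coexist with the LASSO analysis. Two issues must be controlled carefully: first, the stochastic-quantization noise added on the uplink must be shown to remain sub-Gaussian and to average out across agents (as argued for the orthonormal case), so that the effective noise in the aggregated measurement scales as $1/\sqrt{M}$ and the $\sqrt{M}$ speedup is preserved through LASSO rather than lost to accumulated quantization error; second, the restricted-eigenvalue condition for the fixed random matrix $\cB_s$ must hold simultaneously across all $\cO(\log T)$ epochs and all agents with the union-bounded probability $\delta$, which is why $m$ carries the $\log(d/\delta)$ factor. I would isolate these into a single high-probability event and verify that the differential-quantization scheme (sharing only the new bit of the estimate) is compatible with re-running LASSO each epoch, which is the most delicate point since the sparse estimator is nonlinear and the clean ``one new bit per coordinate'' interpretation from PLS no longer holds verbatim.
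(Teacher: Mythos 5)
Your proposal follows essentially the same route as the paper's proof: the same norm-estimation/refinement decomposition, the same LASSO-plus-restricted-eigenvalue estimation bound driven by $m = \cO(s\log(d/\delta))$ random sign measurements (the paper verifies the RE condition for the single fixed set $\cB_s$ once and folds the stochastic quantization noise into the LASSO noise term, exactly the two issues you flag), the same $m/d$ rescaling of the exploration sub-epochs that trades a factor of $d$ for $s$ in the regret, and the same per-message accounting giving $\cO(m)$-bit uplink and $\cO(d)$-bit downlink messages. The only slip is your remark that the norm-estimation stage contributes $\cO(\log T)$ to the regret --- it contributes $\tilde{\cO}(\sqrt{sdMT})$ as in Lemma~\ref{lemma:regret_norm_est} --- but this does not affect the argument since you defer to that lemma's analysis anyway.
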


As it can be noted from the above theorem, Sparse-PLS replaces a factor of dimension $d$ with the sparsity level $s$, or equivalently the effective dimension, in the regret bound matching the optimal-order regret bounds~\cite{AbbasiYadkori12Sparse}. Furthermore, it also reduces $C_{\text{u}}(T)$ from $\cO(d \log T)$ to $\cO(s \log T)$ demonstrating its ability to leverage the inherent sparsity to reduce communication and regret. We refer the reader to Appendix~\ref{sec:Sparse_PLS_Analysis} for a detailed proof.

\section{Conclusion}

In this work, we investigated the communication-learning trade-off in distributed learning setups within the scope of distributed linear bandits. We proposed a novel algorithm, called Progressive Learning and Sharing (PLS), that learns and shares the information about the unknown reward vector progressively, one bit at a time. We showed that PLS incurs order-optimal regret using a uplink communication of $\cO(d \log T)$ bits and downlink communication of $\cO(d(\log M + \log T))$ bits. We also established matching information-theoretic lower bounds on the communication cost for any algorithm with sublinear regret. Lastly, for sparse linear bandits, we showed that a variant of the proposed algorithm offers better communication-learning trade-off by leveraging the sparsity of the problem.

\bibliography{citations}

\begin{thebibliography}{53}
\providecommand{\natexlab}[1]{#1}
\providecommand{\url}[1]{\texttt{#1}}
\expandafter\ifx\csname urlstyle\endcsname\relax
  \providecommand{\doi}[1]{doi: #1}\else
  \providecommand{\doi}{doi: \begingroup \urlstyle{rm}\Url}\fi

\bibitem[Abbasi-Yadkori et~al.(2011)Abbasi-Yadkori, P{\'{a}}l, and
  Szepesv{\'{a}}ri]{Abbasi-Yadkori2011LinUCB}
Abbasi-Yadkori, Y., P{\'{a}}l, D., and Szepesv{\'{a}}ri, C.
\newblock {Improved algorithms for linear stochastic bandits}.
\newblock In \emph{25th Annual Conference on Neural Information Processing
  Systems 2011, NIPS 2011}, 2011.
\newblock ISBN 9781618395993.

\bibitem[Abbasi-Yadkori et~al.(2012)Abbasi-Yadkori, Pal, and
  Szepesvari]{AbbasiYadkori12Sparse}
Abbasi-Yadkori, Y., Pal, D., and Szepesvari, C.
\newblock Online-to-confidence-set conversions and application to sparse
  stochastic bandits.
\newblock In Lawrence, N.~D. and Girolami, M. (eds.), \emph{Proceedings of the
  Fifteenth International Conference on Artificial Intelligence and
  Statistics}, volume~22 of \emph{Proceedings of Machine Learning Research},
  pp.\  1--9, La Palma, Canary Islands, 21--23 Apr 2012. PMLR.
\newblock URL \url{https://proceedings.mlr.press/v22/abbasi-yadkori12.html}.

\bibitem[Agarwal et~al.(2021)Agarwal, Aggarwal, and
  Azizzadenesheli]{Agarwal2021}
Agarwal, M., Aggarwal, V., and Azizzadenesheli, K.
\newblock {Multi-Agent Multi-Armed Bandits with Limited Communication}.
\newblock 2021.
\newblock URL \url{http://arxiv.org/abs/2102.08462}.

\bibitem[Amani et~al.(2022)Amani, Lattimore, Gy{\"{o}}rgy, and Yang]{Amani2022}
Amani, S., Lattimore, T., Gy{\"{o}}rgy, A., and Yang, L.~F.
\newblock {Distributed Contextual Linear Bandits with Minimax Optimal
  Communication Cost}.
\newblock 2022.
\newblock URL \url{http://arxiv.org/abs/2205.13170}.

\bibitem[Ball(1997)]{Ball1997VolumeBounds}
Ball, K.
\newblock An elementary introduction to modern convex geometry.
\newblock 1997.

\bibitem[Baraniuk et~al.(2008)Baraniuk, Davenport, DeVore, and
  Wakin]{Baraniuk2008}
Baraniuk, R., Davenport, M., DeVore, R., and Wakin, M.
\newblock {A simple proof of the restricted isometry property for random
  matrices}.
\newblock \emph{Constructive Approximation}, 28\penalty0 (3):\penalty0
  253--263, 2008.
\newblock ISSN 01764276.
\newblock \doi{10.1007/s00365-007-9003-x}.

\bibitem[Bickel et~al.(2009)Bickel, Ritov, and Tsybakov]{Bickel2009}
Bickel, P.~J., Ritov, Y., and Tsybakov, A.~B.
\newblock {Simultaneous analysis of lasso and dantzig selector}.
\newblock \emph{Annals of Statistics}, 37\penalty0 (4):\penalty0 1705--1732,
  2009.
\newblock ISSN 00905364.
\newblock \doi{10.1214/08-AOS620}.

\bibitem[Bistritz \& Leshem(2021)Bistritz and Leshem]{Bistritz2021}
Bistritz, I. and Leshem, A.
\newblock {Game of Thrones: Fully Distributed Learning for Multi-Player
  Bandits}.
\newblock 2021.

\bibitem[Candes \& Tao(2006)Candes and Tao]{Candes2006IEEE}
Candes, E. and Tao, T.
\newblock {Near Optimal Signal Recovery From Random Projections : Universal
  Encoding Strategies}.
\newblock In \emph{IEEE Transactions on Information Theory}, volume~52, pp.\
  5406--5425, 2006.
\newblock URL
  \url{https://statweb.stanford.edu/$\sim$candes/papers/OptimalRecovery.pdf}.

\bibitem[Candes \& Tao(2007)Candes and Tao]{Candes2007}
Candes, E. and Tao, T.
\newblock {The Dantzig selector: Statistical estimation when p is much larger
  than n}.
\newblock \emph{Annals of Statistics}, 35\penalty0 (6):\penalty0 2313--2351,
  dec 2007.
\newblock ISSN 00905364.
\newblock \doi{10.1214/009053606000001523}.

\bibitem[Cand{\`{e}}s et~al.(2006)Cand{\`{e}}s, Romberg, and Tao]{Candes2006}
Cand{\`{e}}s, E.~J., Romberg, J.~K., and Tao, T.
\newblock {Stable signal recovery from incomplete and inaccurate measurements}.
\newblock \emph{Communications on Pure and Applied Mathematics}, 59\penalty0
  (8):\penalty0 1207--1223, 2006.
\newblock ISSN 00103640.
\newblock \doi{10.1002/cpa.20124}.

\bibitem[Carpentier \& Munos(2012)Carpentier and Munos]{Carpentier2012BanditCS}
Carpentier, A. and Munos, R.
\newblock {Bandit Theory meets Compressed Sensing for high-dimensional
  Stochastic Linear Bandit}.
\newblock In \emph{Journal of Machine Learning Research}, volume~22, pp.\
  190--198, 2012.

\bibitem[Chawla et~al.(2020)Chawla, Sankararaman, Ganesh, and
  Shakkottai]{Chawla2020gossip}
Chawla, R., Sankararaman, A., Ganesh, A., and Shakkottai, S.
\newblock {The Gossiping Insert-Eliminate Algorithm for Multi-Agent Bandits}.
\newblock 2020.
\newblock URL \url{http://arxiv.org/abs/2001.05452}.

\bibitem[Chawla et~al.(2022)Chawla, Sankararaman, and
  Shakkottai]{Chawla2022lowdimensional}
Chawla, R., Sankararaman, A., and Shakkottai, S.
\newblock {Multi-Agent Low-Dimensional Linear Bandits}.
\newblock \emph{IEEE Transactions on Automatic Control}, 2022.
\newblock ISSN 15582523.
\newblock \doi{10.1109/TAC.2022.3179521}.

\bibitem[Chen et~al.(2022)Chen, Wang, Fang, Wang, and Li]{Chen2022BestSubset}
Chen, Y., Wang, Y., Fang, E.~X., Wang, Z., and Li, R.
\newblock {Nearly Dimension-Independent Sparse Linear Bandit over Small Action
  Spaces via Best Subset Selection}.
\newblock \emph{Journal of the American Statistical Association}, pp.\  1--31,
  2022.
\newblock ISSN 0162-1459.
\newblock \doi{10.1080/01621459.2022.2108816}.

\bibitem[Chu et~al.(2011)Chu, Li, Reyzin, and Schapire]{chu2011contextual}
Chu, W., Li, L., Reyzin, L., and Schapire, R.
\newblock Contextual bandits with linear payoff functions.
\newblock In \emph{Proceedings of the Fourteenth International Conference on
  Artificial Intelligence and Statistics}, pp.\  208--214. JMLR Workshop and
  Conference Proceedings, 2011.

\bibitem[Cover \& Thomas(2006)Cover and Thomas]{ThomasAndCover}
Cover, T.~M. and Thomas, J.~A.
\newblock \emph{Elements of Information Theory (Wiley Series in
  Telecommunications and Signal Processing)}.
\newblock Wiley-Interscience, USA, 2006.
\newblock ISBN 0471241954.

\bibitem[Dani et~al.(2008)Dani, Hayes, and Kakade]{Dani2008}
Dani, V., Hayes, T.~P., and Kakade, S.~M.
\newblock {The price of bandit information for online optimization}.
\newblock In \emph{Advances in Neural Information Processing Systems 20 -
  Proceedings of the 2007 Conference}, 2008.
\newblock ISBN 160560352X.

\bibitem[Diakonikolas et~al.(2017)Diakonikolas, Grigorescu, Li, Natarajan,
  Onak, and Schmidt]{Diakonikolas2017communication}
Diakonikolas, I., Grigorescu, E., Li, J., Natarajan, A., Onak, K., and Schmidt,
  L.
\newblock Communication-efficient distributed learning of discrete
  distributions.
\newblock \emph{Advances in Neural Information Processing Systems}, 30, 2017.

\bibitem[Duchi et~al.(2014)Duchi, Jordan, Wainwright, and
  Zhang]{Duchi2014DistributedEstimation}
Duchi, J.~C., Jordan, M.~I., Wainwright, M.~J., and Zhang, Y.
\newblock {Optimality guarantees for distributed statistical estimation}.
\newblock 2014.
\newblock URL \url{http://arxiv.org/abs/1405.0782}.

\bibitem[Ghosh et~al.(2021)Ghosh, Sankararaman, and Ramchandran]{Ghosh2021}
Ghosh, A., Sankararaman, A., and Ramchandran, K.
\newblock {Adaptive Clustering and Personalization in Multi-Agent Stochastic
  Linear Bandits}.
\newblock 2021.
\newblock URL \url{http://arxiv.org/abs/2106.08902}.

\bibitem[Haddadpour et~al.(2021)Haddadpour, Kamani, Mokhtari, and
  Mahdavi]{Haddadpour2021federated}
Haddadpour, F., Kamani, M.~M., Mokhtari, A., and Mahdavi, M.
\newblock Federated learning with compression: Unified analysis and sharp
  guarantees.
\newblock In \emph{International Conference on Artificial Intelligence and
  Statistics}, pp.\  2350--2358. PMLR, 2021.

\bibitem[Hanna et~al.(2021)Hanna, Yang, and Fragouli]{Hanna2021}
Hanna, O.~A., Yang, L.~F., and Fragouli, C.
\newblock {Solving Multi-Arm Bandit Using a Few Bits of Communication}.
\newblock \emph{International Conference on {\ldots}}, 2021.
\newblock URL \url{http://arxiv.org/abs/2111.06067}.

\bibitem[Hao et~al.(2020)Hao, Lattimore, and Wang]{Hao2020Sparse}
Hao, B., Lattimore, T., and Wang, M.
\newblock {High-dimensional sparse linear bandits}.
\newblock In \emph{Advances in Neural Information Processing Systems}, volume
  2020-Decem, 2020.

\bibitem[Hillel et~al.(2013)Hillel, Karnin, Koren, Lempel, and
  Somekh]{Hillel2013}
Hillel, E., Karnin, Z., Koren, T., Lempel, R., and Somekh, O.
\newblock {Distributed exploration in Multi-Armed Bandits}.
\newblock In \emph{Advances in Neural Information Processing Systems}, 2013.

\bibitem[H{\"o}nig et~al.(2022)H{\"o}nig, Zhao, and
  Mullins]{Honig2022DAdaQuant}
H{\"o}nig, R., Zhao, Y., and Mullins, R.
\newblock {DA}da{Q}uant: Doubly-adaptive quantization for
  communication-efficient federated learning.
\newblock In \emph{Proceedings of the 39th International Conference on Machine
  Learning}, volume 162 of \emph{Proceedings of Machine Learning Research},
  pp.\  8852--8866. PMLR, 17--23 Jul 2022.
\newblock URL \url{https://proceedings.mlr.press/v162/honig22a.html}.

\bibitem[Huang et~al.(2021)Huang, Wu, Yang, and Shen]{Huang2021}
Huang, R., Wu, W., Yang, J., and Shen, C.
\newblock {Federated Linear Contextual Bandits}.
\newblock In \emph{Advances in Neural Information Processing Systems},
  volume~32, pp.\  27057--27068, 2021.
\newblock ISBN 9781713845393.

\bibitem[Jhunjhunwala et~al.(2021)Jhunjhunwala, Gadhikar, Joshi, and
  Eldar]{Jhunjhunwala2021adaptive}
Jhunjhunwala, D., Gadhikar, A., Joshi, G., and Eldar, Y.~C.
\newblock Adaptive quantization of model updates for communication-efficient
  federated learning.
\newblock In \emph{ICASSP 2021-2021 IEEE International Conference on Acoustics,
  Speech and Signal Processing (ICASSP)}, pp.\  3110--3114. IEEE, 2021.

\bibitem[Kalathil et~al.(2012)Kalathil, Nayyar, and Jain]{Kalathil2012}
Kalathil, D., Nayyar, N., and Jain, R.
\newblock {Decentralized learning for multi-player multi-armed bandits}.
\newblock In \emph{Proceedings of the IEEE Conference on Decision and Control},
  pp.\  3960--3965, 2012.
\newblock \doi{10.1109/CDC.2012.6426587}.

\bibitem[Kone{\v{c}}n{\'{y}} et~al.(2016)Kone{\v{c}}n{\'{y}}, McMahan, Yu,
  Richt{\'{a}}rik, Suresh, and Bacon]{Konecny2016quantization}
Kone{\v{c}}n{\'{y}}, J., McMahan, H.~B., Yu, F.~X., Richt{\'{a}}rik, P.,
  Suresh, A.~T., and Bacon, D.
\newblock {Federated Learning: Strategies for Improving Communication
  Efficiency}.
\newblock 2016.
\newblock URL \url{http://arxiv.org/abs/1610.05492}.

\bibitem[Korda et~al.(2016)Korda, Szorenyi, and Li]{Korda2016}
Korda, N., Szorenyi, B., and Li, S.
\newblock {Distributed clustering of linear bandits in peer to peer networks}.
\newblock In \emph{33rd International Conference on Machine Learning, ICML
  2016}, volume~3, pp.\  1966--1980, 2016.
\newblock ISBN 9781510829008.

\bibitem[Landgren et~al.(2017)Landgren, Srivastava, and Leonard]{Landgren2017}
Landgren, P., Srivastava, V., and Leonard, N.~E.
\newblock {On distributed cooperative decision-making in multiarmed bandits}.
\newblock In \emph{2016 European Control Conference, ECC 2016}, pp.\  243--248,
  2017.
\newblock ISBN 9781509025916.
\newblock \doi{10.1109/ECC.2016.7810293}.

\bibitem[Liu \& Zhao(2010)Liu and Zhao]{Liu2010DistributedMAB}
Liu, K. and Zhao, Q.
\newblock Distributed learning in multi-armed bandit with multiple players.
\newblock \emph{IEEE Transactions on Signal Processing}, 58\penalty0
  (11):\penalty0 5667--5681, 2010.
\newblock \doi{10.1109/TSP.2010.2062509}.

\bibitem[Liu et~al.(2019)Liu, Kang, Zhang, Li, Cheng, Chen, Hong, and
  Yang]{Liu2019FedBCD}
Liu, Y., Kang, Y., Zhang, X., Li, L., Cheng, Y., Chen, T., Hong, M., and Yang,
  Q.
\newblock {A Communication Efficient Collaborative Learning Framework for
  Distributed Features}.
\newblock 2019.
\newblock URL \url{http://arxiv.org/abs/1912.11187}.

\bibitem[McMahan et~al.(2017)McMahan, Moore, Ramage, Hampson, and
  y~Arcas]{mcmahan2017communication}
McMahan, B., Moore, E., Ramage, D., Hampson, S., and y~Arcas, B.~A.
\newblock Communication-efficient learning of deep networks from decentralized
  data.
\newblock In \emph{Artificial intelligence and statistics}, pp.\  1273--1282.
  PMLR, 2017.

\bibitem[Mitra et~al.(2021)Mitra, Hassani, and Pappas]{Mitra2021}
Mitra, A., Hassani, H., and Pappas, G.
\newblock {Exploiting Heterogeneity in Robust Federated Best-Arm
  Identification}.
\newblock 2021.
\newblock URL \url{http://arxiv.org/abs/2109.05700}.

\bibitem[Mitra et~al.(2022)Mitra, Hassani, and Pappas]{Mitra2022bitconstrained}
Mitra, A., Hassani, H., and Pappas, G.~J.
\newblock {Linear Stochastic Bandits over a Bit-Constrained Channel}.
\newblock 2022.

\bibitem[Reisizadeh et~al.(2020)Reisizadeh, Mokhtari, Hassani, Jadbabaie, and
  Pedarsani]{Reisizadeh2020fedpaq}
Reisizadeh, A., Mokhtari, A., Hassani, H., Jadbabaie, A., and Pedarsani, R.
\newblock Fedpaq: A communication-efficient federated learning method with
  periodic averaging and quantization.
\newblock In \emph{International Conference on Artificial Intelligence and
  Statistics}, pp.\  2021--2031. PMLR, 2020.

\bibitem[Rigollet \& H{\"{u}}tter(2017)Rigollet and H{\"{u}}tter]{Rigollet2017}
Rigollet, P. and H{\"{u}}tter, J.-C.
\newblock {High Dimensional Statistics Lecture Notes}.
\newblock Technical report, 2017.

\bibitem[Rosenski et~al.(2016)Rosenski, Shamir, and Szlak]{Rosenski2016}
Rosenski, J., Shamir, O., and Szlak, L.
\newblock {Multi-player bandits - A musical chairs approach}.
\newblock In \emph{33rd International Conference on Machine Learning, ICML
  2016}, volume~1, pp.\  276--298, 2016.
\newblock ISBN 9781510829008.

\bibitem[Rusmevichientong \& Tsitsiklis(2010)Rusmevichientong and
  Tsitsiklis]{Rusmevichientong2010}
Rusmevichientong, P. and Tsitsiklis, J.~N.
\newblock {Linearly parameterized bandits}.
\newblock \emph{Mathematics of Operations Research}, 35\penalty0 (2):\penalty0
  395--411, 2010.
\newblock ISSN 0364765X.
\newblock \doi{10.1287/moor.1100.0446}.

\bibitem[Sankararaman et~al.(2019)Sankararaman, Ganesh, and
  Shakkottai]{Sankararaman2019}
Sankararaman, A., Ganesh, A., and Shakkottai, S.
\newblock Social learning in multi agent multi armed bandits.
\newblock \emph{Proc. ACM Meas. Anal. Comput. Syst.}, 3\penalty0 (3), dec 2019.
\newblock \doi{10.1145/3366701}.
\newblock URL \url{https://doi.org/10.1145/3366701}.

\bibitem[Shahrampour et~al.(2017)Shahrampour, Rakhlin, and
  Jadbabaie]{Shahrampour2017}
Shahrampour, S., Rakhlin, A., and Jadbabaie, A.
\newblock Multi-armed bandits in multi-agent networks.
\newblock In \emph{2017 IEEE International Conference on Acoustics, Speech and
  Signal Processing (ICASSP)}, pp.\  2786--2790, 2017.
\newblock \doi{10.1109/ICASSP.2017.7952664}.

\bibitem[Shi et~al.(2021)Shi, Shen, and Yang]{Shi2021}
Shi, C., Shen, C., and Yang, J.
\newblock {Federated Multi-armed Bandits with Personalization}.
\newblock 2021.
\newblock URL \url{http://arxiv.org/abs/2102.13101}.

\bibitem[Sun et~al.(2019)Sun, Chen, Giannakis, and Yang]{Sun2019communication}
Sun, J., Chen, T., Giannakis, G., and Yang, Z.
\newblock Communication-efficient distributed learning via lazily aggregated
  quantized gradients.
\newblock \emph{Advances in Neural Information Processing Systems}, 32, 2019.

\bibitem[Suresh et~al.(2017)Suresh, Yu, Kumar, and McMahan]{Suresh2017}
Suresh, A.~T., Yu, F.~X., Kumar, S., and McMahan, H.~B.
\newblock {Distributed mean estimation with limited communication}.
\newblock In \emph{34th International Conference on Machine Learning, ICML
  2017}, volume~7, pp.\  5119--5128, 2017.
\newblock ISBN 9781510855144.

\bibitem[Tang et~al.(2020)Tang, Shi, Chu, Wang, and Li]{Tang2020communication}
Tang, Z., Shi, S., Chu, X., Wang, W., and Li, B.
\newblock Communication-efficient distributed deep learning: A comprehensive
  survey.
\newblock \emph{arXiv preprint arXiv:2003.06307}, 2020.

\bibitem[Tao et~al.(2019)Tao, Zhang, and Zhou]{Tao2019}
Tao, C., Zhang, Q., and Zhou, Y.
\newblock {Collaborative learning with limited interaction: Tight bounds for
  distributed exploration in multi-Armed bandits}.
\newblock In \emph{Proceedings - Annual IEEE Symposium on Foundations of
  Computer Science, FOCS}, volume 2019-Novem, pp.\  126--146, 2019.
\newblock ISBN 9781728149523.
\newblock \doi{10.1109/FOCS.2019.00017}.

\bibitem[Tibshirani(1996)]{Tibshirani1996}
Tibshirani, R.
\newblock Regression shrinkage and selection via the lasso.
\newblock \emph{Journal of the Royal Statistical Society. Series B
  (Methodological)}, 58\penalty0 (1):\penalty0 267--288, 1996.
\newblock URL \url{http://www.jstor.org/stable/2346178}.

\bibitem[Tsitsiklis \& Luo(1987)Tsitsiklis and Luo]{Tsitsiklis1987}
Tsitsiklis, J.~N. and Luo, Z.~Q.
\newblock {Communication complexity of convex optimization}.
\newblock \emph{Journal of Complexity}, 3\penalty0 (3):\penalty0 231--243,
  1987.
\newblock ISSN 10902708.
\newblock \doi{10.1016/0885-064X(87)90013-6}.

\bibitem[Wang et~al.(2019)Wang, Hu, Chen, and Wang]{Wang2019}
Wang, Y., Hu, J., Chen, X., and Wang, L.
\newblock {Distributed Bandit Learning: Near-Optimal Regret with Efficient
  Communication}.
\newblock 2019.
\newblock URL \url{http://arxiv.org/abs/1904.06309}.

\bibitem[Zhao et~al.(2022)Zhao, Mao, Liu, Song, Ouyang, Chen, and
  Ding]{Zhao2022communication}
Zhao, Z., Mao, Y., Liu, Y., Song, L., Ouyang, Y., Chen, X., and Ding, W.
\newblock Towards efficient communications in federated learning: A
  contemporary survey.
\newblock \emph{arXiv preprint arXiv:2208.01200}, 2022.

\bibitem[Zhu et~al.(2021)Zhu, Zhu, Liu, and Liu]{Zhu2021}
Zhu, Z., Zhu, J., Liu, J., and Liu, Y.
\newblock {Federated Bandit: A Gossiping Approach}.
\newblock \emph{Proceedings of the ACM on Measurement and Analysis of Computing
  Systems}, 5\penalty0 (1):\penalty0 1--29, 2021.
\newblock \doi{10.1145/3447380}.
\newblock URL \url{https://doi.org/10.1145/3447380}.

\end{thebibliography}
\bibliographystyle{icml2022}

\newpage

\newpage

\appendix
\onecolumn

\section{Additional Related Work}
\label{sec:related_work_appendix}

In this section, we discuss some more works related to ours broadly in the context of distributed learning. \\

\paragraph{Distributed Bandits:} The multi-armed bandit (MAB) problem with a finite number of arms has been extensively studied in various distributed learning setups. A line of work~\citep{Chawla2020gossip, Landgren2017, Shahrampour2017, Korda2016, Sankararaman2019, Mitra2021, Shi2021, Zhu2021} focuses on developing algorithms for cooperative learning in MAB under different structures of the underlying network. Another line of work~\citep{Liu2010DistributedMAB, Kalathil2012, Rosenski2016, Bistritz2021} explores a collision based approach with no explicit communication inspired by cognitive radio networks. There are several works that also consider the impact of communication and develop communication efficient algorithms by either reducing the frequency of communication~\citep{Agarwal2021, Hillel2013, Tao2019} or proposing quantization approaches~\citep{Hanna2021}. The setup of linear bandits considered in this work is more challenging than the MAB setup considered in the above works, especially for the communication constrained setting. \\

For the problem of distributed linear bandits, there are several other studies in addition to the ones discussed previously.~\cite{Ghosh2021} et al. study the problem under heterogeneity assumptions on the agents and propose a new algorithms under personalization and clustering frameworks, the two different ways adopted to tackle heterogeneity. ~\cite{Chawla2022lowdimensional} consider a high dimensional linear bandit setting where the underlying mean reward lies in a low-dimensional space chosen from a known, finite collection. They propose a decentralized algorithm based on communication over a network to quickly identify this subspace to ensure sub-linear regret. While there is some focus on communication in this study, the proposed algorithm does not offer a linear speed up. Moreover, there is no constraint in terms of bits transmitted over the network.~\cite{Korda2016} et al. consider the problem in a peer-to-peer communication model instead of a star topology. They propose an algorithm that achieves order-optimal regret guarantees with limited communication. However, their proposed policy requires communication of $\cO(d^2)$ bits per message over the network which is worse than the $\cO(d)$ required by PLS. \\

\paragraph{Sparse Linear Bandits:} The problem of sparse linear bandits has been studied mainly in the centralized setting.\cite{AbbasiYadkori12Sparse} proposed an algorithm for sparse linear bandits using a online to batch conversion of the confidence intervals. Borrowing techniques from compressed sensing,~\cite{Carpentier2012BanditCS} proposed an algorithm that incurs an overall regret of $\cO(s\sqrt{T})$.\cite{Chen2022BestSubset} proposed the sparse variants of the famous Lin-UCB~\cite{Abbasi-Yadkori2011LinUCB} and Sup-Lin-UCB~\cite{chu2011contextual} algorithms for the contextual bandit problem.\cite{Hao2020Sparse} proposed an algorithm with dimension-independent regret bounds for very high dimensional problems where the dimension is much larger than the time horizon. As mentioned previously, all these works consider the centralized setting which is different from the distributed setting considered in this work. To the best knowledge of the authors, this is the first work considering sparse linear bandits under a distributed setting with communication constraints. \\

\paragraph{Lower Bounds in Distributed Settings:} Several studies have attempted to characterize information theoretic lower bounds on communication under for various statistical estimation tasks. The classical paper of~\cite{Duchi2014DistributedEstimation} derived guarantees on communication requirements for distributed mean estimation for both independent and interactive protocols. Similarly,~\cite{Tsitsiklis1987} have studied the communication complexity in convex optimization while~\cite{Diakonikolas2017communication} study the effect of communication constraints for the task of distribution estimation. For linear bandits, most of the papers that study lower bounds have been in context of lower bounds on regret~\citep{Dani2008, Rusmevichientong2010} , typically under centralized setting. In the distributed setting, to the best knowledge of the authors, the only work that discusses lower bounds especially in context of communication is~\cite{Amani2022}. They derive a lower bound on the regret for the contextual linear bandit problem under communication constraints. However, in our work we study the lower bounds on communication costs under regret constraints, which is different from the problem considered in their work and requires significantly different analysis techniques.

\section{Analysis for PLS}
\label{sec:PLS_analysis}

Before providing the detailed proofs of the Theorems regarding the performance of PLS, we state and prove two supplementary lemmas that will be used in the proofs.

\begin{lemma}
    Let $\bfX_1, \bfX_2, \dots, \bfX_n$ be a collection of $n$ i.i.d. random vectors in $\R^d$ with mean $\bfmu$. Each coordinate of every random vector is assumed to be an independent sub-Gaussian random variable with variance proxy $\sigma^2$. Let $\{\bfY_i\}_{i = 1}^n$ be another collection of random vectors such that $\bfY_i = \bfX_i \1 \{ x : \|x\| \leq R + B\}$ for all $i \in \{1,2, \dots, n\}$. Then the following relation holds for any $t > 0, R > \sigma\sqrt{2\log(4n)}$ and $B > \|\bfmu\|_{\infty}$,
    \begin{align*}
        \Pr\left(  \left\|\frac{1}{n}\sum_{i = 1}^n\bfY_n - \bfmu\right\|  \geq t\right) \leq 2 \cdot 5^d \cdot \exp\left( -\frac{nt^2}{8\sigma^2} \right).
    \end{align*}
     Consequently, $\displaystyle \left\|\frac{1}{n}\sum_{i = 1}^n\bfY_n - \bfmu\right\| \leq \frac{2\sigma}{\sqrt{n}}(2\sqrt{d} + \sqrt{2\log(2/\delta)})$ with probability at least $1 - \delta$.
     \label{lemma:clipped_subG_rv}
\end{lemma}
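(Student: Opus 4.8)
The plan is to reduce the $d$-dimensional deviation $\|\tfrac1n\sum_i\bfY_i-\bfmu\|$ to a supremum of one-dimensional deviations via a covering argument, prove a sub-Gaussian tail for each scalar deviation, and union bound over a net. Concretely, I would fix a $\tfrac12$-net $\cN$ of the unit sphere $S^{d-1}$, which can be chosen with $|\cN|\le 5^d$, and use the standard inequality $\|\bfv\|\le 2\max_{u\in\cN}\langle\bfv,u\rangle\le 2\max_{u\in\cN}|\langle\bfv,u\rangle|$. This immediately gives
\[
\Pr\!\left(\left\|\tfrac1n\sum_{i=1}^n \bfY_i - \bfmu\right\| \ge t\right) \le |\cN|\,\max_{u\in\cN}\Pr\!\left(\left|\Big\langle \tfrac1n\sum_{i=1}^n \bfY_i - \bfmu,\; u\Big\rangle\right| \ge \tfrac t2\right),
\]
so the factor $5^d$ in the claim is produced by the cardinality of the net, and the extra factor of $2$ in the step $\|\bfv\|\le 2\max\langle\bfv,u\rangle$ is exactly what converts the exponent from the naive $\tfrac{1}{2}$ into the stated $\tfrac18$.

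The next step is the scalar sub-Gaussian tail. For a fixed unit vector $u$, I would argue that $\langle\tfrac1n\sum_i\bfY_i-\bfmu,u\rangle$ is a centered sub-Gaussian variable with variance proxy $\sigma^2/n$; each such scalar probability is then at most $2\exp\!\big(-\tfrac{(t/2)^2}{2\sigma^2/n}\big)=2\exp(-nt^2/(8\sigma^2))$, and multiplying by $|\cN|\le 5^d$ reproduces the claimed bound exactly. The starting observation is that $\langle\bfX_i,u\rangle=\sum_j u_j X_{ij}$ is a sum of independent sub-Gaussian coordinates and is therefore sub-Gaussian with proxy $\sigma^2\sum_j u_j^2=\sigma^2$ and mean $\langle\bfmu,u\rangle$, so the analogous statement for the \emph{untruncated} average $\bar\bfX$ is immediate.

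The main obstacle is that $\bfY_i=\bfX_i\,\1\{\|\bfX_i\|\le R+B\}$ is a truncated version of $\bfX_i$, so $\langle\bfY_i,u\rangle=\langle\bfX_i,u\rangle-\langle\bfX_i,u\rangle\,\1\{\|\bfX_i\|>R+B\}$ carries both a shifted mean and a truncation correction, and I must show that zeroing out atypically large samples neither moves the mean appreciably nor inflates the variance proxy beyond $\sigma^2$. This is precisely where the hypotheses $R>\sigma\sqrt{2\log(4n)}$ and $B>\|\bfmu\|_\infty$ enter: they force $\Pr(\|\bfX_i\|>R+B)$ to be small, so that on the good event $\bigcap_i\{\|\bfX_i\|\le R+B\}$ one has $\bar\bfY=\bar\bfX$ and the clean concentration of $\bar\bfX$ applies. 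I would make this rigorous at the level of the moment generating function, writing $\E[\exp(\lambda(\langle\bfY_i,u\rangle-\langle\bfmu,u\rangle))]$, splitting on the truncation event, bounding the untruncated part by the sub-Gaussian MGF $\exp(\lambda^2\sigma^2/2)$, and showing the residual contribution is dominated using the tail estimate on $\{\|\bfX_i\|>R+B\}$ implied by the conditions on $R$ and $B$. Controlling this bias while preserving the exact proxy $\sigma^2$ is the delicate part of the argument.

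Finally, the ``consequently'' statement follows by inverting the tail bound. Setting $2\cdot 5^d\exp(-nt^2/(8\sigma^2))=\delta$ gives
\[
t=\frac{2\sigma}{\sqrt n}\sqrt{2\big(d\log 5+\log(2/\delta)\big)}.
\]
Applying the subadditivity $\sqrt{a+b}\le\sqrt a+\sqrt b$ together with the numerical fact $\sqrt{2\log 5}<2$ yields $t\le\frac{2\sigma}{\sqrt n}\big(2\sqrt d+\sqrt{2\log(2/\delta)}\big)$, which is exactly the stated high-probability bound.
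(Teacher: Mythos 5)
Your overall architecture matches the paper's: a $\tfrac12$-net of cardinality at most $5^d$, the inequality $\|\bfv\|\le 2\max_{u\in\cN}\langle \bfv,u\rangle$ producing the exponent $\nicefrac{1}{8}$, a scalar tail bound for each projection, and the same inversion for the high-probability corollary. The gap is in the one step you yourself flag as delicate: your proposed mechanism for handling the truncation --- splitting the expectation $\E[\exp(\lambda(\langle\bfY_i,u\rangle-\langle\bfmu,u\rangle))]$ on the event $\{\|\bfX_i\|>R+B\}$ and arguing the residual is ``dominated'' --- does not close. The residual term is of the form $e^{-\lambda\langle\bfmu,u\rangle/n}\Pr(\|\bfX_i\|>R+B)$ per factor, and under the stated hypotheses the truncation probability is only guaranteed to be of order $\nicefrac{1}{n}$ (the condition $R>\sigma\sqrt{2\log(4n)}$ buys exactly $2n e^{-R^2/2\sigma^2}\le\tfrac12$ after a union bound, nothing exponentially small in $nt^2$). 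Since the Chernoff-optimal $\lambda$ scales like $nt/\sigma^2$, the additive remainder $e^{\lambda B/n}\cdot\cO(1/n)$ is not absorbed into $e^{\lambda^2\sigma^2/2n^2}$ uniformly in $t$, and the same failure occurs if you split at the probability level ($\Pr(\cA^c)\le\tfrac12$ is useless as an additive error). So the bound $2\cdot 5^d e^{-nt^2/8\sigma^2}$ for \emph{all} $t>0$ does not follow from your sketch.

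The missing idea, which is the heart of the paper's proof, is to make the truncation cost \emph{multiplicative} rather than additive: writing the law of the clipped projections as the law of the unclipped ones conditioned on $\cA=\bigcap_i\{|Z_i|\le R+B\}$, one gets
\begin{align*}
\E\left[\exp\left(\lambda\left(\tfrac1n\sum_i W_i-\mu_Z\right)\right)\right]\;\le\;\frac{1}{\Pr(\cA)}\,\E\left[\exp\left(\lambda\left(\tfrac1n\sum_i Z_i-\mu_Z\right)\right)\right]\;\le\;\frac{e^{\lambda^2\sigma^2/2n}}{\Pr(\cA)}\;\le\;2\,e^{\lambda^2\sigma^2/2n},
\end{align*}
valid for every $\lambda$, where $\Pr(\cA)\ge\tfrac12$ is exactly what $R>\sigma\sqrt{2\log(4n)}$ and $B>\|\bfmu\|_\infty$ guarantee. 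The truncation then costs only the constant prefactor $2$ in the tail, uniformly in $t$, and the rest of your argument goes through verbatim. Without this (or some equivalent device), your proof is incomplete at its only nontrivial step.
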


\begin{proof}
    Let $\bfv \in \R^d$ be any unit vector. Since the entries of $\bfX_i$ are independent, $Z_i = \bfv^{\top} \bfX$ is a sub-Gaussian random variable with mean $\mu_Z = \bfv^T \bfmu$ and variance proxy $\sigma^2$ for all $i \in \{1,2, \dots, n\}$. Let $W_i = Z_i \1\{[-(R+B), (R+B)]\}$. Since $B \geq \|\bfmu\|_{\infty}$, $\mu_Z \leq B$. We also define the event $\cA := \bigcap_{i = 1}^n \cA_i$ where $\cA_i = \{|Z_i| \leq R+B\}$. For any $\lambda \in \R$, we have,
    \begin{align*}
        \E\left[\exp\left(\lambda\left(\frac{1}{n} \sum_{i = 1}^n W_i - \mu_Z\right)\right)\right] & = \int_{-\infty}^{\infty}  \exp\left(\lambda\left(\frac{1}{n} \sum_{i = 1}^n w_i - \mu_Z\right) \right) f_{W_1, \dots, W_n}(w_1, \dots, w_n) \ \mathrm{d}w_1 \ \dots \ \mathrm{d}w_n \\
        & = \int_{\-\infty}^{\infty}  \exp\left(\lambda\left(\frac{1}{n} \sum_{i = 1}^n w_i - \mu_Z\right) \right) \frac{1}{\Pr(\cA)} f_{Z_1, \dots, Z_n}(w_1, \dots, w_n) \1\{ \cA \} \ \mathrm{d}w_1 \ \dots \ \mathrm{d}w_n \\
        & \leq\int_{\-\infty}^{\infty}  \exp\left(\lambda\left(\frac{1}{n} \sum_{i = 1}^n w_i - \mu_Z\right) \right) \frac{1}{\Pr(\cA)} f_{Z_1, \dots, Z_n}(w_1, \dots, w_n) \ \mathrm{d}w_1 \ \dots \ \mathrm{d}w_n \\
        & \leq  \frac{1}{\Pr(\cA)} \E\left[\exp\left(\lambda\left(\frac{1}{n} \sum_{i = 1}^n Z_i - \mu_Z\right)\right)\right]\\
        & \leq \frac{\exp(\lambda^2 \sigma^2/2n)}{\Pr(\cA)}.
    \end{align*}
    Let us bound the term $\Pr(\cA)$. We have,
    \begin{align*}
        \Pr(\cA) & = \Pr\left(\bigcap_{i = 1}^n \cA_i \right) \\
        & = \Pr\left(\bigcap_{i = 1}^n \{|Z_i| \leq R+B\} \right) \\
        & = 1 - \Pr\left( \bigcup_{i = 1} |Z_i| > R + B \right) \\
        & \geq 1 - n \Pr(|Z_1| > R +B),
    \end{align*}
    Since $Z_1$ is a sub-Gaussian random variable, 
    \begin{align*}
        \Pr(Z_1 > R +B ) & = \Pr(Z_1 - \mu_Z > R + B - \mu_Z) \\
        & \leq \Pr(Z_1 - \mu_Z > R) \\
        & \leq \exp \left( -\frac{R^2}{2 \sigma^2}\right).
    \end{align*}
    Similarly, 
    \begin{align*}
        \Pr(Z_1 < -(R +B) ) & = \Pr(Z_1 - \mu_Z < -R - B - \mu_Z) \\
        & \leq \Pr(Z_1 - \mu_Z < -R) \\
        & \leq \exp \left( -\frac{R^2}{2 \sigma^2}\right).
    \end{align*}
    On combining the two, we obtain,
    \begin{align*}
        \Pr(\cA) \geq 1 - 2n\exp \left( -\frac{R^2}{2 \sigma^2}\right).
    \end{align*}
    If $R \geq \sigma\sqrt{2 \log(4n)}$, then $\Pr(\cA) \geq 1/2$. Consequently, 
    \begin{align*}
        \E\left[\exp\left(\lambda\left(\frac{1}{n} \sum_{i = 1}^n W_i - \mu_Z\right)\right)\right] \leq 2\exp(\lambda^2 \sigma^2/2n),
    \end{align*}
    and
    \begin{align*}
        \Pr\left(\frac{1}{n} \sum_{i = 1}^n W_i - \mu_Z > t \right) \leq 2 \exp(-nt^2/2\sigma^2). 
    \end{align*}
    To obtain concentration bounds for $\displaystyle \left\|\frac{1}{n}\sum_{i = 1}^n\bfY_n - \bfmu\right\|$, we use the same technique used for unbounded sub-Gaussian random variables. We reproduce the proof here for completeness. For brevity of notation, we define $\displaystyle \bfW := \frac{1}{n}\sum_{i = 1}^n\bfY_n$. \\
    
    Let $\cN$ denote a minimal $1/2$-cover of $\cB_{d}(1)$, where $\cB_d(r) = \{\bfx \in \R^d : \|\bfx\|_2 \leq r\}$. This implies that for all $\bfx \in \cB_d(1)$, $\exists \bfy \in \cN$ such that $\|\bfx - \bfy \|_2 \leq 1/2$. Using the standard volumetric bounds, the cardinality of $\cN$ can be bounded as $|\cN| \leq 5^d$. \\

    Once again, let $\bfv \in \cB_d(1)$ denote a unit vector. For every $\bfv$, we have a $\bfz_{\bfv} \in \cN$ such that $\bfv = \bfz_{\bfv} + \bfu$ with $\|\bfu\| \leq 1/2$. Thus, for any vector $\bfX$, we have,
    \begin{align*}
        \max_{\bfv \in \cB_d(1)} \bfv^{\top} \bfX & = \max_{\bfv \in \cB_d(1)} (\bfz_{\bfv} + \bfu)^{\top} \bfX \\
        & \leq \max_{\bfz \in \cN} \bfz^{\top}\bfX + \max_{\bfu \in \cB_d(1/2)} \bfu^{\top}\bfX \\
        & \leq \max_{\bfz \in \cN} \bfz^{\top}\bfX + \frac{1}{2} \max_{\bfu \in \cB_d(1)} \bfu^{\top}\bfX.
    \end{align*}
    Thus, $\max_{\bfv \in \cB_d(1)} \bfv^{\top} \bfX \leq 2\max_{\bfz \in \cN} \bfz^{\top}\bfX$. Moreover, since $\|\bfv\|_2 = 1$, $\max_{\bfv \in \cB_d(1)} \bfv^{\top} \bfX = \|\bfX\|$. Consequently, 
    \begin{align*}
        \Pr\left(  \left\|\bfZ - \bfmu\right\|  \geq t\right) & \leq \Pr( \max_{\bfv \in \cB_d(1)} \bfv^{\top} (\bfW - \bfmu) \geq t) \\
        & \leq \Pr( \max_{\bfz \in \cN} \bfz^{\top}(\bfW - \bfmu) \geq t/2) \\
        & \leq 2|\cN| \exp\left( -\frac{nt^2}{8\sigma^2} \right).
    \end{align*}
    Plugging in the value of $|\cN|$ yields the final result.
\end{proof}

\begin{lemma}
    For any epoch $k$ in PLS, the estimate $\hat{\theta}_{k}^{(\textsc{serv})}$ satisfies
    \begin{align*}
        \|\hat{\theta}_{k}^{(\textsc{serv})} - \theta^*\| \leq \tau_k,
    \end{align*}
    with probability at least $1 - \delta$.
    \label{lemma:theta_estimate_error}
\end{lemma}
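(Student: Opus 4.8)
The plan is to proceed by induction on the epoch index $k$, strengthening the statement to also carry, as part of the inductive hypothesis, a bound on the reference point used for clipping: $\|\theta^*\| \le B_k$ in the Norm Estimation stage (where $\bar\theta_{k-1} = 0$) and $\|\bar\theta_{k-1} - \theta^*\| \le B_k$ in the Refinement stage. I would first fix the relevant high-probability events and take a union bound over the at most $K$ epochs and $M$ agents, which is exactly why each event is controlled at level $\delta' = \delta/(8MK)$. The starting point is to write the server's estimate so that the two error sources—finite-sample noise and quantization noise—are separated. Using that the stochastic quantizer is unbiased and bounded coordinate-wise by $\alpha_k/\sqrt d$, write $Q(\tilde\theta_k^{(j)}) = \tilde\theta_k^{(j)} + \xi_k^{(j)}$ with $\xi_k^{(j)}$ zero-mean given $\tilde\theta_k^{(j)}$, independent across $j$, and sub-Gaussian with coordinate variance proxy $\alpha_k^2/d$. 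Substituting into the server update (and cancelling the common $\bar\theta_{k-1}$ offset in Refinement) yields
\begin{align*}
\hat\theta_k^{(\textsc{serv})} - \theta^* = \Big(\frac1M\sum_{j=1}^M \tilde\theta_k^{(j)} - \theta^*\Big) + \frac1M\sum_{j=1}^M \xi_k^{(j)},
\end{align*}
so it suffices to bound the two terms and add them via the triangle inequality.

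For the first term, the clipped per-agent averages $\tilde\theta_k^{(j)}$ are exactly the truncated sub-Gaussian vectors of Lemma~\ref{lemma:clipped_subG_rv}: each $\hat\theta_k^{(j)}$ is a mean of $s_k$ i.i.d.\ observations, so it has coordinate variance proxy $\sigma^2/s_k$ and mean $\theta^*$ (Norm Estimation) or $\theta^* - \bar\theta_{k-1}$ (Refinement). Applying Lemma~\ref{lemma:clipped_subG_rv} with $n = M$ agents and variance proxy $\sigma^2/s_k$—whose hypotheses $R_k > (\sigma/\sqrt{s_k})\sqrt{2\log(4M)}$ and $B_k \ge \|\theta^*\|$ (resp.\ $\|\theta^* - \bar\theta_{k-1}\|$) I will need to verify—controls the truncation bias and gives a bound of order $\frac{\sigma}{\sqrt{Ms_k}}\big(2\sqrt d + \sqrt{2\log(2/\delta')}\big)$. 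The second term is an average of $M$ independent, zero-mean, bounded (hence sub-Gaussian) vectors, so the same net-based concentration bounds it by the same expression scaled by $\alpha_0 < 1$, after substituting $\alpha_k = \alpha_0\sigma\sqrt d/\sqrt{s_k}$.

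Combining and substituting $s_k \ge 40\sigma^2 d\log(8MK/\delta)4^k$ collapses both terms to a constant multiple of $2^{-k}/\sqrt M$: the factor $\sqrt{4^k} = 2^k$ supplies the $2^{-k}$ scaling, the $d$ inside $s_k$ cancels the $\sqrt d$ in the numerator, and the constant $40$ together with the $(1+\alpha_0)$ scaling are tuned so that the total is at most the value $3/2$ appearing in $\tau_k = 3\cdot 2^{-(k+1)}/\sqrt M$. The induction then closes cleanly in Refinement: $\bar\theta_k = \bar\theta_{k-1} + \textsc{DetQuant}(\cdot,\beta_k,\cdot)$ inherits accuracy $\tau_k + \beta_0\tau_k \le B_{k+1}$ from the just-proved bound plus the $\beta_k$-resolution of the downlink quantizer, re-establishing the clipping hypothesis for the next epoch. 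The main obstacle is the corresponding step in the Norm Estimation stage: showing $\|\theta^*\| \le B_k$ at every reached epoch. This must be extracted from the failure of the termination test at the previous epoch, $\|\hat\theta_{k-1}^{(\textsc{serv})}\| < 4\tau_{k-1}$, together with the inductive bound $\|\hat\theta_{k-1}^{(\textsc{serv})} - \theta^*\| \le \tau_{k-1}$, and making the numerical constants line up—the threshold factor $1/4$, the constant $5$ in $B_k = 5\tau_k$, and the geometric ratio $\tau_k/\tau_{k-1} = 1/2$—so that the truncation bias never exceeds the target accuracy $\tau_k$ is the delicate part; everything else is routine sub-Gaussian bookkeeping.
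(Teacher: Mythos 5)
Your proposal follows essentially the same route as the paper's proof: the same induction carrying the clipping-radius hypothesis, the same decomposition of the server error into the clipped sample-mean deviation (handled by Lemma~\ref{lemma:clipped_subG_rv}) plus the averaged stochastic-quantization noise, and the same mechanism for closing the induction via the failed termination test in Norm Estimation and the $\beta_k$-resolution of the downlink quantizer in Refinement. The steps you flag as needing verification are exactly the ones the paper verifies, so this is a faithful outline of the intended argument.
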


\begin{proof}

We prove the claim using induction. Firstly, note that if we define $\bar{\theta}_{k-1} := 0$ for all epochs $k$ during the norm estimation stage, then we can rewrite the method to compute the estimate at the server, $\hat{\theta}_{k}^{(\textsc{serv})}$, during the norm estimation stage in the same way as it is computed during the refinement stage. Thus, we consider the definition of $\hat{\theta}_{k}^{(\textsc{serv})}$ as used in Algorithm~\ref{alg:ref_est_server} while implicitly assuming $\bar{\theta}_{k-1} = 0$ for all epochs $k$ during the norm estimation stage. \\

Let $\eta_k^{(j)} \in \R^d$ denote the quantization noise added by $j^{\text{th}}$ client during the $k^{\text{th}}$ epoch, i.e., the quantized version received by the server can be written as $Q(\tilde{\theta}^{(j)}_k) = \tilde{\theta}^{(j)}_k + \eta_k^{(j)}$. Since each coordinate is quantized independently, each coordinate of $\eta_k^{(j)}$ is an independent zero mean sub-Gaussian random variable with variance proxy $\alpha_k^2/4d$. At the end of the $k^{\text{th}}$ epoch, we have
\begin{align*}
    \|\hat{\theta}_{k}^{(\textsc{serv})} - \theta^*\| & = \left\|\bar{\theta}_{k-1} + \frac{1}{M} \sum_{j = 1}^M Q(\tilde{\theta}^{(j)}_k) - \theta^* \right\| \\
    & = \left\| \bar{\theta}_{k-1} + \frac{1}{M} \sum_{j = 1}^M (\tilde{\theta}^{(j)}_k + \eta_k^{(j)}) - \theta^* \right\| \\
    & = \left\| \bar{\theta}_{k-1} + \frac{1}{M} \sum_{j = 1}^M(\hat{\theta}^{(j)}_k - \bar{\theta}_{k-1})\1_{R_k + B_k} + \frac{1}{M} \sum_{j = 1}^M \eta_k^{(j)} - \theta^* \right\| \\
    & \leq \left\|  \frac{1}{M} \sum_{j = 1}^M(\hat{\theta}^{(j)}_k - \bar{\theta}_{k-1})\1_{R_k + B_k} - (\theta^* - \bar{\theta}_{k-1}) \right\|   + \left\| \frac{1}{M} \sum_{j = 1}^M \eta_k^{(j)}  \right\|.
\end{align*}

The second term can be bounded using the concentration of sub-Gaussian random vectors as
\begin{align*}
    \left\| \frac{1}{M} \sum_{j = 1}^M \eta_k^{(j)}  \right\| & \leq \frac{2\alpha_k}{\sqrt{M}}\left(1 + \sqrt{\frac{1}{2d}\log\left(\frac{2K}{\delta}\right)} \right).
\end{align*}
For the first term, we will employ Lemma~\ref{lemma:clipped_subG_rv}. However, we first need to ensure that the conditions of the lemma are satisfied. For any epoch $k$, the particular choice of $R_k$ and $s_k$ in PLS satisfies the assumption in Lemma~\ref{lemma:clipped_subG_rv} for all $k$.  To bound the mean, we need to consider the epochs with $\bar{\theta}_{k-1} = 0$ (i.e., the norm estimation stage) and $\bar{\theta}_{k-1} \neq 0$ (the refinement stage) separately. We begin with the case of $\bar{\theta}_{k-1} = 0$. Under this scenario, note that $ \| \E[\hat{\theta}^{(j)}_k - \bar{\theta}_{k-1}] \| = \| \E[\hat{\theta}^{(j)}_k] \| = \|\theta^* \|$. For the first epoch, we know that $\|\theta^*\| \leq 1 = B_1$. This implies, we can apply the lemma for the base case. For any epoch $k \geq 2$, we use the inductive hypothesis to establish the bound on the mean. In particular, we use a better estimate of $\|\theta^*\|$ by noting that the norm estimation stage had not been terminated by epoch $k - 1$ which implies that $\|\hat{\theta}_{k-1}^{(\textsc{serv})}\| \leq 4\tau_{k - 1}$. Along with the induction hypothesis, $\|\hat{\theta}_{k-1}^{(\textsc{serv})} - \theta^*\| \leq \tau_{k-1}$, we can conclude that $\|\theta^*\| \leq 5\tau_{k-1} \leq B_k$, as required. \\

Now we are ready to invoke Lemma~\ref{lemma:clipped_subG_rv}. Using Lemma~\ref{lemma:clipped_subG_rv}, we can conclude that with probability at least $1 - \delta/K$,
\begin{align*}
    \left\|  \frac{1}{M} \sum_{j = 1}^M(\hat{\theta}^{(j)}_k - \bar{\theta}_{k-1})\1_{R_k + B_k} - (\theta^* - \bar{\theta}_{k-1}) \right\|   & \leq \frac{4\sigma\sqrt{d}}{\sqrt{Ms_k}}\left(1 +  \sqrt{\frac{1}{2d}\log\left(\frac{2K}{\delta}\right)} \right).
\end{align*}

On plugging in the value of $s_k$ and $\alpha_k$ and combining the equations, we obtain,
\begin{align*}
    \|\hat{\theta}_{k}^{(\textsc{serv})} - \theta^*\| \leq \frac{2^{-k}}{\sqrt{M}} + \frac{2^{-(k+1)}}{\sqrt{M}}  = \frac{3}{\sqrt{M}} \cdot 2^{-(k+1)} = \tau_k,
\end{align*}
holds with probability at least $1- \delta/K$. \\

We similarly consider the case where in epoch $k$, $\bar{\theta}_{k-1} \neq 0$. For this case, we apply Lemma~\ref{lemma:clipped_subG_rv} conditioned on the value of $\bar{\theta}_{k-1}$ and hence all expectations and probabilities are computed with respect to the conditional measure. For brevity of notation and ease of understanding, we will assume the conditioning has been implicitly applied. The condition on $R_k$ holds using the argument as in the previous case and hence we focus only on showing the bound on the mean. If $\zeta_{k-1}$ denotes the quantization error during the $k - 1$ epoch, then we can write $\bar{\theta}_{k-1} = \hat{\theta}_{k-1}^{(\textsc{serv})} + \zeta_{k-1}$. Consequently, $ \| \E[\hat{\theta}^{(j)}_k - \bar{\theta}_{k-1}] \| = \|\theta^* - \bar{\theta}_{k-1} \| = \|\theta^* - \hat{\theta}_{k-1}^{(\textsc{serv})} - \zeta_{k-1}\| \leq  \|\theta^* - \hat{\theta}_{k-1}^{(\textsc{serv})}\| + \|\zeta_{k-1}\| \leq \|\theta^* - \hat{\theta}_{k-1}^{(\textsc{serv})}\| + \beta_{k-1} \leq \tau_{k - 1} + \beta_{k-1} \leq B_k$. For the last step, the bound on $\|\theta^* - \hat{\theta}_{k-1}^{(\textsc{serv})}\|$ holds due to the hypothesis in the induction step. In the base case, i.e., $k_0$, the index of the epoch when the norm estimation stage terminates, the bound holds from the result obtained for the case of $\bar{\theta}_{k-1} = 0$. This implies that we can apply the lemma also for the case of $\bar{\theta}_{k-1} \neq 0$, thereby obtaining the bound on $\|\hat{\theta}_{k}^{(\textsc{serv})} - \theta^*\|$ for all $k$. \\

We would like to point that to avoid repeating steps in the proof, we have directly shown the result after invoking Lemma~\ref{lemma:clipped_subG_rv} for all epochs $k$. The proof actually follows by first invoking Lemma~\ref{lemma:clipped_subG_rv} for the base case and establishing the result. For any epoch $k$, we then use the inductive hypothesis of the relation for $k - 1$ to establish the conditions on the mean after which Lemma~\ref{lemma:clipped_subG_rv} is invoked to complete the induction step. \\

Lastly, consider the events given by $\cE_{k} := \{ \|\hat{\theta}_{k}^{(\textsc{serv})} - \theta^*\| \leq \tau_k\} $ for $k = 1, 2, \dots K$ and the event $\displaystyle \cE := \bigcap_{k =1}^{K} \cE_k$. From the above analysis, we know that $\Pr(\cE_k|\cE_1, \cE_2, \dots, \cE_{k-1}) \geq 1 - \delta/K$ for $k = 1, 2, \dots, K$, where $\Pr(\cE_0) = 1$. Thus, we have
\begin{align*}
    \Pr(\cE) & = \Pr\left( \bigcap_{k =1}^{K} \cE_k\right) \\
    & = \prod_{i = 1}^k \Pr(\cE_k|\cE_1, \cE_2, \dots, \cE_{k-1}) \\
    & \geq \prod_{i = 1}^K \left( 1- \frac{\delta}{K} \right)^K \\
    & \geq 1 - \delta,
\end{align*}
as required.
\end{proof}

We now proceed to provide detailed proofs of the various theorems and lemmas that characterize the regret and communication cost of PLS.

\subsection{Proof of Lemma~\ref{lemma:regret_norm_est}}
\label{proof:regret_norm_est}

Let $R_{\textsc{NE}}(T)$ denote the regret incurred during the norm estimation stage. We assume that the stage continues until a \texttt{terminate} is received from the server or each client has played a total of $T$ actions, whichever happens first. \\

Under the event $\cE$ as defined in Proof of Lemma~\ref{lemma:theta_estimate_error}, the algorithm will terminate at the end of epoch $k_0$ where $k_0 := \min \{ k \in \N: 4\tau_k \leq \| \hat{\theta}_{k}^{(\textsc{serv})} \| \}$. We note that for all $k$ for which the inequality $4\tau_k \leq \| \hat{\theta}_{k}^{(\textsc{serv})}\|$ holds, we also have the relation
\begin{align*}
     \|\hat{\theta}_{k}^{(\textsc{serv})} - \theta^*\| &\leq \tau_k \leq \frac{1}{4} \cdot \| \hat{\theta}_{k}^{(\textsc{serv})} \|  \\
     \implies \|\theta^*\| &\in \left[ \frac{3}{4} \| \hat{\theta}_{k}^{(\textsc{serv})} \|, \frac{5}{4} \| \hat{\theta}_{k}^{(\textsc{serv})} \| \right] \\
     \implies \| \hat{\theta}_{k}^{(\textsc{serv})} \| &\in \left[ \frac{4}{5} \|\theta^*\|, \frac{4}{3} \| \theta^* \| \right].
\end{align*}
Consequently, we also have $\tau_k \leq \frac{1}{3} \cdot \| \theta^* \|$. On plugging in the value of $\tau_k$, we obtain
\begin{align*}
    \tau_k &\leq \frac{1}{3} \cdot \| \theta^* \| \implies 2^{-k} \leq \frac{2\sqrt{M}}{9} \|\theta^*\| \implies k \geq \log_2 \left(\frac{9}{2\|\theta^*\|\sqrt{M}}  \right).
\end{align*}
Since $k_0$ is the smallest natural number satisfying this relation, $\displaystyle k_0 = \max\left\{\left\lceil\log_2 \left(\frac{9}{2\|\theta^*\|\sqrt{M}}  \right)\right\rceil, 1 \right\}$. \\

We know that $a_t, a^* \in  \cA = \{x: \|x\|_2 \leq 1\}$ for all $t \in \{1, 2, 3, \dots, T\}$. Hence, the instantaneous regret at time instant can be bounded as 
\begin{align*}
    r_t = \ip{\theta^*}{a^*} - \ip{\theta^*}{a_t} \leq \|\theta^*\|_2 \|a^* - a_t\|_2 \leq 2 \|\theta^*\|_2.
\end{align*}
Consequently, we can bound $R_{\textsc{NE}}(T)$ as follows. If $k_0 = 1$, then $R_{\textsc{NE}}(T)$ can be simply upper bounded as $Mds_1 = \cO(1)$. If $k_0 \geq 2$, we have
\begin{align*}
    R_{\textsc{NE}}(T) & \leq \sum_{k = 1}^{k_0} 2M\|\theta^*\|_2 \cdot ds_k \\
    & \leq 2Md\|\theta^*\|_2 \cdot s_{k_0} \cdot k_0 \\
    & \leq 2Md\|\theta^*\|_2 \cdot \left(32d\sigma^2 \log\left(\frac{8MK}{\delta} \right) 4^{\log_2(9/2\|\theta^*\|_2) + 1} + 1 \right) \cdot \left( \log_2 \left(\frac{9}{2\|\theta^*\|_2}  \right) + 1\right) \\
    & \leq 2Md\|\theta^*\|_2 \cdot \left(2592\frac{d\sigma^2}{\|\theta^*\|_2^2} \log\left(\frac{8MK}{\delta} \right)  + 1 \right) \cdot \left( \log_2 \left(\frac{9}{2\|\theta^*\|_2}  \right) + 1\right) \\
    & \leq 5184 \frac{d^2\sigma^2}{\|\theta^*\|_2} \log\left(\frac{8MK}{\delta} \right)\left( \log_2 \left(\frac{9}{2\|\theta^*\|_2}  \right) + 1\right) + 2d \left( \log_2 \left(\frac{9}{2\|\theta^*\|_2}  \right) + 1\right).
\end{align*}
A trivial bound on $R_{\textsc{NE}}(T)$ is $2\|\theta^*\|_2 MT$. Note that for larger values of $\|\theta^*\|_2$, the former bound is tighter. However, as $\|\theta^*\|_2$ gets smaller, the latter bound becomes a stronger one. To obtain the optimal bound on $R_{\textsc{NE}}(T)$, we choose the minimum of the two. Thus,
\begin{align*}
    R_{\textsc{NE}}(T) \leq \min \left\{5184 \frac{d^2\sigma^2}{\|\theta^*\|_2} \log\left(\frac{8MK}{\delta} \right)\left( \log_2 \left(\frac{9}{2\|\theta^*\|_2}  \right) + 1\right) + 2d \left( \log_2 \left(\frac{9}{2\|\theta^*\|_2}  \right) + 1\right), 2\|\theta^*\|_2 MT \right\}.
\end{align*}
On setting $\|\theta^*\|_2 = d/\sqrt{MT}$, we obtain $R_{\textsc{NE}}(T)$ is $\cO(d\sqrt{MT} \cdot \log(MT) \cdot \log(\log T/\delta))$, as required.

\subsection{Proof of Lemma~\ref{lemma:theta_error_regret}}
\label{proof:theta_error_regret}

We are given access to estimate, $\hat{\theta}$, of the true vector, $\theta$, such that $\|\hat{\theta} -\theta\|_2 \leq \tau \leq \|\theta\|_2$. This implies that $\|\hat{\theta}\|_2 \in [\|\theta\|_2 -\tau, \|\theta\|_2 + \tau]$. Consider the relation,
\begin{align*}
    \|\hat{\theta} - \theta\|_2^2 & \leq \tau^2 \\
    \implies \|\hat{\theta}\|_2^2 + \|\theta^*\|_2^2 - 2 \ip{\hat{\theta}}{\theta} & \leq \tau^2 \\
    \implies -\ip{\hat{\theta}}{\theta} & \leq \frac{1}{2} \left(\tau^2 - \|\hat{\theta}\|_2^2 - \|\theta\|_2^2 \right) \\
    \implies \|\theta\| - \frac{1}{\|\hat{\theta}\|_2}\ip{\hat{\theta}}{\theta} & \leq \frac{1}{2\|\hat{\theta}\|_2} \left(\tau^2 - \|\theta\|_2^2 - \|\theta\|_2^2 + 2\|\theta\|_2 \|\theta\|_2\right) \\
    & \implies \|\theta\| - \frac{1}{\|\hat{\theta}\|_2}\ip{\hat{\theta}}{\theta} \leq \frac{1}{2\|\theta\|_2} \left(\tau^2 - (\|\theta\|_2 - \|\theta\|_2)^2 \right).
\end{align*}
Note that the LHS in the last equation is an upper bound on the regret incurred by playing the action $a = \hat{\theta}/\|\hat{\theta}\|_2$. If we denote the regret incurred by playing the action $a$ by $\text{Reg}(a)$ and let $\|\hat{\theta}\|_2 = \|\theta\|_2 + \upsilon$, for some $\upsilon \in [-\tau, \tau]$, then 
\begin{align*}
    \text{Reg}(a) & \leq \frac{\tau^2 - \upsilon^2}{2(\|\theta\|_2 + \upsilon)}.
\end{align*}
To bound the above expression, we consider the function $f(\upsilon) = \dfrac{\tau^2 - \upsilon^2}{(\|\theta\|_2 + \upsilon)}$. Since $f$ is rational function of two polynomials, it is differentiable. On differentiating $f$, we obtain $f'(\upsilon) = -\dfrac{\upsilon^2 + 2\upsilon\|\theta\|_2 + \tau^2}{(\|\theta\|_2 + \upsilon)^2}$. The solutions for $f'(\upsilon) = 0$ are given by $\upsilon_{+} = -\|\theta\|_2 + \sqrt{\|\theta\|_2^2 - \tau^2}$ and $\upsilon_{-} = -\|\theta\|_2 - \sqrt{\|\theta\|_2^2 - \tau^2}$. By double differentiating $f$, we can verify that it is indeed maximized at $\upsilon_{+}$ for $\upsilon \in [-\tau, \tau]$. Moreover, note that solutions for $f'(\upsilon) = 0$ are real numbers only for $\tau \leq \|\theta\|_2$. This explains the need for the constraint on $\tau$. On setting $\upsilon = \upsilon_{+}$ in the expression for $\text{Reg}(a)$, we obtain the following bound.
\begin{align*}
    \text{Reg}(a) & \leq \frac{\tau^2 - (-\|\theta^*\|_2 + \sqrt{\|\theta\|_2^2 - \tau^2})^2}{2\sqrt{\|\theta\|_2 - \tau^2}} \\
    & \leq \frac{\tau^2 - \|\theta\|_2^2 - \|\theta\|_2^2 + \tau^2 + 2\|\theta\|_2\sqrt{\|\theta\|_2^2 - \tau^2}}{2\sqrt{\|\theta\|_2 - \tau^2}} \\
    & \leq \|\theta\|_2 - \sqrt{\|\theta\|_2^2 - \tau^2}  \\
    & \leq \frac{\tau^2}{\|\theta\|}.
\end{align*}

\subsection{Proof of Theorem~\ref{theorem:regret_bound}}
\label{proof:regret_bound}

Let $R_{\textsc{REF}}(T)$ denote the regret incurred during the refinement stage. To obtain a bound on $R_{\textsc{REF}}(T)$, we consider an epoch $k$ during the refinement stage. Similar to the norm estimation stage, we bound the regret incurred during the exploration sub-epoch as $ 2\|\theta^*\|_2 \cdot Mds_k $. Using Lemma~\ref{lemma:theta_error_regret} and~\ref{lemma:theta_estimate_error}, we can bound the regret during the exploitation sub-epoch as $ \dfrac{\tau_k^2}{\|\theta^*\|_2} \cdot Mt_k$. If $R^{(k)}$ denotes the regret incurred during epoch $k$, then we have the following relation.
\begin{align*}
    R^{(k)} & \leq 2\|\theta^*\|_2 \cdot Mds_k + \frac{M\tau_k^2}{\|\theta^*\|_2} \cdot (Ms_k^2\mu_0^2 + 1) \\
    & \leq 2\|\theta^*\|_2 \cdot Mds_k + \frac{16(M\tau_k s_k \|\theta^*\|_2)^2}{9\|\theta^*\|_2}  + \frac{M\tau_k^2}{\|\theta^*\|_2} \\
    & \leq 64\|\theta^*\|_2 Md^2\sigma^2 \log\left(\frac{8MK}{\delta} \right) 4^{k} + 2\|\theta^*\|_2 \cdot Md + 8M\|\theta^*\|_2 \left(1024 \sigma^4 d^2  \log^2\left(\frac{8MK}{\delta} \right) 4^{k} + 4^{-k}\right) + M \|\theta^*\|_2 \\
    & \leq 72  \cdot \|\theta^*\|_2 Md^2\sigma^2(1 + \sigma^2)\log^2\left(\frac{8MK}{\delta} \right) 4^{k} + M\|\theta^*\|_2 \left( 8 \cdot 4^{-k} + 2d + 1\right)
\end{align*}
Note that the regret incurred during the exploration sub-epoch is the same as that incurred during the exploitation sub-epoch upto constant factors. This echoes the discussion in Sec.~\ref{sec:analysis} on the careful choice of the lengths of exploration and exploitation epochs in PLS using the estimated norm of $\|\theta^*\|_2$. \\

Let $k_1$ denote the index of the epoch when the query budget ends. Also, recall that $k_0$ was defined to be the epoch index during which the norm estimation stage terminates. If $k_1 = k_0$, then it implies that the exploitation sub-epoch of the $k_0^{\text{th}}$ epoch was not completed. Consequently, the regret incurred during the partial sub-epoch is bounded by the regret incurred during the corresponding exploration sub-epoch (upto a constant factor) which in turn is bounded by the regret incurred during the norm estimation stage. Hence, the overall regret has the same order as the of the norm estimation stage, that is, $\tilde{\cO}(d\sqrt{MT})$, as required. For the rest of the proof we focus on the case $k_1 \geq k_0 + 1$. \\

The regret incurred during the refinement stage can be be bounded as
\begin{align*}
    R_{\textsc{REF}}(T) \leq \sum_{k = k_0}^{k_1} R^{(k)}.
\end{align*}
Since we already have a bound on $k_0$, we can bound the above expression by finding an upper bound on $k_1$. We can bound $k_1$ by using the length of the time horizon. We have,
\begin{align*}
    \sum_{k = 1}^{k_1-1} ds_k + \sum_{k = k_0}^{k_1-1} t_k \leq T.
\end{align*}
The first term corresponds to the length of all the exploration (sub-)epochs, including the ones during the norm estimation procedure, while the second accounts for the length of all the exploitation sequences. On plugging in the values of $s_k$ and $t_k$, we obtain,
\begin{align*}
    T & \geq \sum_{k = 1}^{k_1-1} ds_k + \sum_{k = k_0}^{k_1-1} t_k \\
    & \geq \sum_{k = k_0}^{k_1-1} Ms_k^2 \mu_0^2 \\
    & \geq \sum_{k = k_0}^{k_1-1} \frac{16384}{25} Md^2 \|\theta^*\|^2 \sigma^4 \log^2\left(\frac{8MK}{\delta} \right) 16^{k}   \\
    & \geq \frac{16384}{25} Md^2  \|\theta^*\|^2 \sigma^4 \log^2\left(\frac{8MK}{\delta} \right) \frac{16^{k_1} - 16^{k_0}}{15}  \\
    & \geq 40 Md^2   \|\theta^*\|^2 \sigma^4 \log^2\left(\frac{8MK}{\delta} \right) 16^{k_1}.
\end{align*}
where the last step follows by noting that $k_1 \geq k_0 + 1$. This implies that $\displaystyle k_1 \leq \log_{16} \left( \frac{T}{40 M d^2  \|\theta^*\|_2^2 \sigma^4} \left(\log\left(\frac{8MK}{\delta} \right)\right)^{-2}\right)$. Consequently, 
\begin{align*}
    R_{\textsc{REF}}(T) & \leq \sum_{k = k_0}^{k_1} R^{(k)} \\
    & \leq \sum_{k = k_0}^{k_1} 72  \cdot \|\theta^*\|_2 Md^2\sigma^2(1 + \sigma^2)\log^2\left(\frac{8MK}{\delta} \right) 4^{k} + M\|\theta^*\|_2 \left( 8 \cdot 4^{-k} + 2d + 1\right) \\
    & \leq  96  \cdot \|\theta^*\|_2 Md^2\sigma^2(1 + \sigma^2)\log^2\left(\frac{8MK}{\delta} \right) 4^{k_1} + Mk_1\|\theta^*\|_2 \left( 2d + 9\right) \\
    & \leq  96  \cdot \|\theta^*\|_2 Md^2\sigma^2(1 + \sigma^2)\log^2\left(\frac{8MK}{\delta} \right) \cdot \frac{1}{d\sigma^2\|\theta^*\|_2} \left(\log\left(\frac{8MK}{\delta} \right)\right)^{-1} \cdot \sqrt{\frac{T}{40M}} + Mk_1\|\theta^*\|_2 \left( 2d + 9\right) \\
    & \leq  16 (1 + \sigma^2)\log\left(\frac{8MK}{\delta} \right)  \cdot d\sqrt{MT} + Mk_1\|\theta^*\|_2 \left(  2d + 9\right).
\end{align*}

Adding this to the regret incurred during the norm estimation stage, we can conclude that $R_{\text{PLS}}(T)$ satisfies $\cO(d\sqrt{MT} \cdot \log(MT) \cdot \log(\log T/\delta))$.

\subsection{Proof of Lemma~\ref{lemma:channel_capacity}}
\label{proof:channel_capacity}

Consider a vector $x \in \R^d$ with $\|x\| \leq r$ and let $Q(x)$ denote its quantized version achieved by a quantizer (deterministic or stochastic) upto a precision of $\varepsilon$. This implies that $\|x - Q(x)\|_2 \leq \varepsilon \implies \|Q(x)\|_2 \leq r + \varepsilon$. Note that $Q(x)$ can be represented as $(q_1, q_2, \dots, q_d)^{\top}$ where $q_i \in \{-l_{\varepsilon}/2, \dots, 0, \dots, l_{\varepsilon}/2\}$ represents the corresponding quantized index for all $i \in \{1,2, \dots, d\}$. Thus, we have,
\begin{align*}
    \sum_{i = 1}^d q_i^2 \left( \frac{2r}{l_{\varepsilon}}\right)^2 & \leq (r + \varepsilon)^2 \\
    \implies \sum_{i = 1}^d q_i^2 & \leq \left(\frac{(r + \varepsilon) l_{\varepsilon}}{2r} \right)^2 \\
    & \leq 4d \left(\frac{r}{\varepsilon} + 1 \right)^2.
\end{align*}
Consequently, 
\begin{align*}
    \sum_{i = 1}^d |q_i| \leq \sqrt{d\sum_{i = 1}^d q_i^2} \leq 2d \left(\frac{r}{\varepsilon} + 1 \right).
\end{align*}

It can be noted that under the encoding scheme based on unary encoding used in PLS, the message size in bits in PLS is given by $3d + \sum_{i = 1}^d |q_i|$, where $3d$ corresponds to the sum of lengths of the headers. As a result, the message size is bounded by $d (3 + 2(r/\varepsilon + 1))$. We use this relation to establish the message sizes on both the uplink and the downlink channels. \\

Let us first consider the uplink communication. In epoch $k$, $r$ corresponds to $R_k + B_k$ and $\varepsilon$ to $\alpha_k$. On plugging in the prescribed values of the above parameters, we note that $(R_k + B_k)/\alpha_k$ is $C/\alpha_0$, where $C$ is a constant independent of $d, M$ and $T$. Hence, any message sent on the uplink channel is no more than $\cO(d)$ bits. Similarly, for the downlink communication, the ratio $(B_k + \tau_k)/\beta_k \leq C'/\beta_0$ where once again $C'$ is a constant independent of $d, M$ and $T$. Hence, any message sent on the downlink channel also has a size of $\cO(d)$ bits, as required.

\subsection{Proof of Theorem~\ref{theorem:comm_bound}}
\label{proof:comm_bound}

The bound on the uplink communication cost follows immediately from Lemma~\ref{lemma:channel_capacity}. Since the agents send a message of $\cO(d/\alpha_0)$ bits only once every epoch and there are no more than $K = \cO(\log T)$ epochs in PLS, the uplink communication cost of PLS, $C_{\text{u}}(T)$, satisfies $\cO((d/\alpha_0) \log T)$. The $\cO((d/\beta_0)\log T)$ term in $C_{\text{d}}(T)$, the downlink cost, also follows using the same argument.  \\

The $\cO(d \log M)$ term in the downlink communication cost corresponds to sending the initial estimate of $\theta^*$ to the clients, specifically in the event that the norm estimation stage ends within the first epoch. Note that, if the norm estimation stage ends in the first epoch, the estimation error in $\theta^*$ becomes $\tau_1 = \cO(\sqrt{1/M})$ from the initial estimate of $\cO(1)$. As a result, PLS requires $\cO(d\log M)$ bits to transmit the estimate, adding to the downlink communication cost. This $\cO(d \log M)$ cost is what facilitates information exchange between the agents that leads to the linear speedup with respect to the number of agents.

\begin{remark}
    This estimate is also sent using the unary encoding scheme used in PLS over $\cO(\log M)$ rounds with each round sending one additional bit of information per coordinate. Depending on the synchronization requirements as dictated by the hardware, the learner may not be allowed multiple uses of the channel. In the event that it is possible to use the channel several times between two actions, this information can be easily sent with $\cO(\log M)$ uses of the channel. However, if the server is allowed to use the channel only once between two actions of the agent, this transmission of the initial estimate can be accommodated within PLS as follows. At the end of the exploration sub-epoch of the first epoch, instead of starting with the exploitation sub-epoch, the agents begin with the exploration sub-epoch of the second epoch. During this time, the server broadcasts the initial estimate of $\theta^*$ over the next $\cO(\log M)$ time instants. The agents continue to play the actions as dictated by the exploration sub-epoch until the communication is completed. Once the communication is completed, the agents now start with the exploitation sub-epoch of the first epoch. At the end of the first epoch, they restart the exploration sub-epoch, starting from where they had left at the end of the communication. Thus, if needed, PLS can accommodate the additional transmission requirements by a minor reordering of the explorative and exploitative actions.
\end{remark}

\section{Lower Bounds}
\label{sec:lower_bound_proofs}

In this section, we discuss the details of the proofs to establish the lower bounds on the communication cost under regret constraints.

\subsection{Proof of Theorem~\ref{theorem:comm_lower_bound}}

The proof of this theorem consists of two main steps. In the first step, we show that all algorithms achieving a sub-linear cumulative regret need to solve the problem of distributed mean estimation. This reduction allows us to leverage several existing techniques and results for information-theoretic lower bounds, especially in the case of distributed statistical estimation. The second step is to establish lower bound on communication cost (both uplink and downlink) for a given estimation error based on the above reduction. The primary idea for this step is to use to classical reduction to identification from a specifically constructed hard instance. Specifically, we show that for any estimator with a small error it is necessary to solve the identification problem. The final bound on the communication cost is then obtained by using Fano's inequality to bound the error of this identification problem. \\

We first establish how the constraint of a sub-linear cumulative regret for linear bandit algorithm can be translated to having a small simple regret.Consider any policy $\pi$ that achieves a sub-linear regret $R_{\pi}(T)$ under the setup described in Sec.~\ref{sec:problem_formulation}. Consequently, the policy $\pi$ also guarantees a sub-linear regret of $R_{\pi}(T)/MT$. This follows immediately by choosing the final action to be the average of all the actions chosen by all the agents. Note that this is a permissible action since $\cA$ is a convex set. As a result, a lower bound of $\underline{r}(T)$ on simple regret achievable by any policy immediately implies a lower bound of $\underline{R}(T) = MT\underline{r}(T)$ on the cumulative regret achievable by any policy. This relation is used later in the proof to draw parallels with the problem of distributed mean estimation. \\

For the second step, we separately establish the lower bounds for the uplink and downlink communication cost , by constructing two different hard instances. We begin with the lower bound on the downlink cost.

\subsubsection{Proof of downlink cost}

For the lower bound on the downlink cost, recall that for a policy a $\pi$ with sub-linear cumulative regret of $R_{\pi}(T)$, the average of all the actions chosen by all the agents, denoted by $\bar{a}_{\pi}$, achieves a simple regret of $R_{\pi}(T)/MT$. This implies that using the actions taken by $\pi$, one can estimate $\theta^*$ to a reasonable accuracy dictated by $R_{\pi}(T)$. In particular, let $\hat{\theta}(A; \pi)$ denote an estimator of $\theta^*$ based on all the actions taken by a policy $\pi$, which we denote by the random variable $A$, and $\|\theta^*\|_2 = 1$. Then,
\begin{align*}
    \inf_{\hat{\theta}}\|\hat{\theta}(A; \pi) - \theta^*\|_2^2 & \leq \left\| \frac{\bar{a}_{\pi}}{\|\bar{a}_{\pi}\|_2} - \theta^*\right\|_2^2 \\
    & \leq \frac{\|\bar{a}_{\pi}\|_2^2}{\|\bar{a}_{\pi}\|_2^2} + \|\theta^*\|_2^2 - 2 \ip{\theta^*}{\frac{\bar{a}_{\pi}}{\|\bar{a}_{\pi}\|_2}} \\
    & \leq 2(1  -  \ip{\theta^*}{\bar{a}_{\pi}} )\\
    & \leq 2 \frac{R_{\pi}(T)}{MT}.
\end{align*}
We use the above relation to obtain a bound on the downlink communication cost based on the information carried in $A$, the actions taken by a policy, about the underlying vector $\theta^*$. \\ 

Let $\cV_{\varepsilon}$ denote a maximal $2\varepsilon$-packing of $\cS^{d}$ using the $\|\cdot\|_2$ norm, where $\cS^d$ denotes the surface of a unit sphere in $\R^d$. Equivalently, $\cS^d = \partial \cB_d(1) \in \R^{d-1}$, where $\cB_d(r)$ denotes a ball (in $\|\cdot\|_2$-norm) of radius $r$ in $\R^d$. Let $V$ be a random variable chosen uniformly at random from $\cV_{\varepsilon}$. Consider a linear bandit instance instantiated with $\theta^* = V$. Note that this construction implicitly ensures $\|\theta^*\|_2 = 1$. As before, let $\hat{\theta}(A; \pi)$ denote an estimator of $\theta^*$ based on all the actions taken by a policy $\pi$. This problem of estimating $\theta^*$ can be mapped to that of identifying $V$ using classical techniques by defining a testing function $\hat{V}$ for each estimator $\hat{\theta}$. In particular, define $\hat{V} := \argmin_{v \in \cV_{\varepsilon}}\|\hat{\theta}(A; \pi) - v\|_2$, that is, it maps $\hat{\theta}$ to the closest point in the set $\cV_{\varepsilon}$. \\

Since $\cV_{\epsilon}$ is $2\varepsilon$-packing, $\|\hat{\theta}(A; \pi) - V\|_2 > \varepsilon$ whenever $\hat{V} \neq V$. Consequently, $\Pr\left( \|\hat{\theta}(A; \pi) - \theta^*\|_2^2 > \varepsilon^2/2  \right) = \Pr\left( \|\hat{\theta}(A; \pi) - V\|_2^2 > \varepsilon^2/2  \right) \geq \Pr(\hat{V} \neq V)$. Since $V \to A \to \hat{V}$ from a Markov chain, using Fano's inequality~\cite{ThomasAndCover}, we can conclude that
\begin{align*}
    \Pr(\hat{V} \neq V) \geq 1 - \frac{I(V; A) + \log 2}{|\cV_{\varepsilon}|},
\end{align*}
where $I(V;A)$ denotes the mutual information between $V$ and $A$. Note that the event $\{\|\hat{\theta}(A; \pi) - \theta^*\|_2^2 > \varepsilon^2/2\}$ implies that no estimator based on the actions of $\pi$ can estimate $\theta^*$ within an error of $\varepsilon^2/2$ implying that $\pi$ incurs a cumulative regret of at least $\varepsilon^2MT/4$. Hence, to ensure a cumulative regret of $R_{\pi}(T)$ with probability at least $2/3$, we need to ensure $\Pr(\hat{V} \neq V) \leq \Pr\left( \|\hat{\theta}(A; \pi) - \theta^*\|_2^2 > \varepsilon^2/2  \right) < 1/3$ for $\varepsilon = 2\sqrt{R_{\pi}(T)/MT}$. Equivalently, $I(V;A) \geq 2\log|\cV_{\varepsilon}|/3 - \log 2$ with $\varepsilon = 2\sqrt{R_{\pi}(T)/MT}$. Using the standard bounds on packing numbers~\citep{Ball1997VolumeBounds} and noting that $R_{\pi}(T)$ is sub-linear in both $M$ and $T$, we can conclude that $I(V;A) \geq \Omega(d \log(MT))$. The bound on the communication cost is obtained using the data processing inequality. Let $Z$ denote all the messages broadcast by the server. Then $I(V;Z)$ is a lower bound on the downlink communication cost. Notice that $Z$ obeys the Markov chain $V \to Z \to A \to \hat{V}$ since the actions taken by the agents change with $V$ through the messages broadcast by the server. \\

From data processing inequality, we have, $I(V;Z) \geq I(V;A)$. In other words, since the messages $Z$ transfer the information about the actions of other agents to any given agent, they should at least have as much information about $\theta^*$ (or equivalently $V$) as much as $A$, the set of all actions, does. Combining this with the bound on $I(V;A)$, we arrive the required lower bound on the downlink communication cost.

\subsubsection{Proof for uplink cost}

We establish bounds on the uplink cost by drawing parallels to the distributed mean estimation problem. Consider the problem of distributed mean estimation where $X$ denotes the random variable corresponding to the observations by the agents and $Y$ denotes the messages sent by the agents to the server. Based on the messages $Y$, if no estimator can recover $\theta^*$ to within a mean squared error of $\varepsilon^2$, then no linear bandit algorithm can achieve a simple regret of $\varepsilon^2$. This is similar to the argument shown for the downlink case. This implies that, only if $\theta^*$ can be estimated to within an accuracy of $\varepsilon^2$, can there exist a linear bandit algorithm with a simple regret $\varepsilon^2$ and hence potentially with a cumulative regret of $2\varepsilon^2 MT$. Thus, we consider the problem of distributed mean estimation and show that unless all agents send $\Omega(d \log T)$ bits of information to the server, no estimator can estimate $\theta^*$ within an accuracy of $1/MT$ with probability at least $2/3$. For this case, we only consider the optimal rates instead of any sublinear rates. \\

The proof borrows ideas and techniques from the classical results in~\cite{Duchi2014DistributedEstimation}. In particular, the proof is similar to those of Proposition 2 and Theorem 1 in~\cite{Duchi2014DistributedEstimation}. However, it is different in its own sense as it builds upon those results and strengthens the lower bounds with the missing logarithmic factors. Thus, this proof might be of independent interest to the larger community. \\

The basic idea in the proof is to construct a Markov chain $\bfV \to X \to Y$, where $X$ represents the collection of all the observations at all the agents and $Y$ denotes the messages sent by the agents. In particular, $X = (X^{(1)}, X^{(2)}, \dots, X^{(M)})$ and $Y = (Y_1, Y_2, \dots, Y_M)$, where $X^{(j)}$ denotes the set of observations at agent $j$ and $Y_j$ denotes the messages sent by agent $j$ to the server. Since the agents cannot communicate with each other directly, we assume $Y_j$ depends only on $X^{(j)}$. \\

We begin with construction the hard instance for the random variable $\bfV$. Let $\cV = \{\pm 1\}^{d-1}$ and $\bfupsilon = (\upsilon_1, \upsilon_2, \dots, \upsilon_{\ell}) \in \cV^{\ell}$ for some $\ell \in \N$ specified later. For each $\bfupsilon \in \cV^{\ell}$, we define a vector $\mu_{\bfupsilon} \in R^{d-1}$ given by 
\begin{align*}
    \mu_{\bfupsilon} := \sum_{r = 1}^{\ell} \frac{2^{-r}}{\sqrt{d-1}} \upsilon_r.
\end{align*}
Note that $\|\mu_{\bfupsilon}\|_2 \leq \sum_{r = 1}^{\ell} \frac{2^{-r}}{\sqrt{d-1}} \|\upsilon_r\| \leq  \sum_{r = 1}^{\ell} 2^{-r} \leq 1$. Thus, $\mu_{\bfupsilon} \in \cB_{d-1}(1)$ for all $\bfupsilon \in \cV^{\ell}$. We also define a lifting map $f: \cB_{d-1}(1) \to \cS^{d}_{\geq 0}$, where $\cS^{d}_{\geq 0}$ denotes the hemisphere where the last coordinate only takes on non-negative values. It maps a vector $x \in \cB_{d-1}(1)$ to a vector $x' \in \cS^d_{\geq 0}$, where $x'_{1:d-1} = x$ and $x'_d = \sqrt{1 - \|x\|^2}$. In other words, $f$ lifts a point in unit hypersphere in $d - 1$ to the corresponding point on the unit hyper(hemi)sphere in $\R^d$ by adding the last component. From the definition, one can note that $f$ is a bijection. Furthermore, one can note that for any two points $x, x' \in \cB_{d-1}$, we have $\| x - x'\|_2 \leq \|f(x) - f(x')\|_2 \leq \sqrt{2} \| x - x'\|_2$. \\

Let $\bfV$ be a random variable drawn from the set $\cV^{\ell}$. We construct a linear bandit instance with $\theta^* = f(\delta\mu_{\bfV})$ for some $\delta \in (0,1)$ whose value is specified later. Since, $f$ is a bijection that preserves distances upto a constant, it equivalent to consider the problem of estimating $\theta^* = \theta_{\bfV} = \delta \mu_{\bfV}$, where the operation $f$ is assumed to have been carried out implicitly. Hence, for the remainder of the proof, we focus only of the problem on estimating $\theta_{\bfV}$. \\

To specify the observation model, we first need to set up some notations and definitions. Given a $u \in \{-1, 1\}$ and $\rho \in  [0,1]$, we define $P_{u,\rho}$ to be the distribution that assigns a mass of $(1 + \rho)/2$ to $u$ and $(1 - \rho)/2$ to $-u$. We overload the definition of $P_{u,\rho}$ for when $u = (u_1, u_2, \dots, u_p)^{\top} \in \{-1, 1\}^p$ is a vector. In this case, a sample from $P_{u, \rho}$ is a vector whose $i^{\text{th}}$ coordinate is drawn according to $P_{u_i, \rho}$, independently of others. For a given value of $\bfV = \bfupsilon$, each agent $j$, independent of other agents, receives an collection of $T$ i.i.d. samples, denoted by $X^{(j)} = \{X^{(j, k)}\}_{k = 1}^n$ for $j = 1,2, \dots, M$. Each sample $X^{(j,k)}$ is obtained by first drawing $\tilde{V}^{(j,k)}(\bfupsilon) = (\tilde{V}^{(j,k,1)}(\bfupsilon), \dots, \tilde{V}^{(j,k,\ell)}(\bfupsilon)) \in \cV^{\ell}$, where $\tilde{V}^{(j,k,r)}(\bfupsilon) \sim P_{\upsilon_r, \rho_0}$ for $r = 1, 2,\dots, \ell$ and then setting $X^{(j,k)} = \mu_{\tilde{V}^{(j,k)}(\bfupsilon)}$. In the above definition $\rho_0 := \delta/{T\ell}$. If $X_i^{(j)}$ denotes the vector obtained by taking the $i^{\text{th}}$ coordinate of all the $T$ samples at agent $j$, then from the above definition, one can note that $X_i^{(j)}$'s are independent across $i$, that is, each coordinate is independent of others.  \\

Having specified the underlying mean vector and the observation model, the next step is to use strong data processing inequality to quantitatively relate $I(\bfV; Y_j)$ and $I(X^{(i)}; Y_j)$. To establish this relation, we make use of Lemma 5 from~\cite{Duchi2014DistributedEstimation}. Before invoking the Lemma, we first need to establish a bound on the likelihood ratio of any realization $x_i$, of the random variable $X_i^{(j)}$, under two different values of $\bfV$, say $\bfupsilon, \bfupsilon'$ and also specify the measurable sets over which the bound holds. Throughout this part, we carry out all the analysis for a fixed agent $j$. \\

Note that any realization $x_i$ can be mapped to the realizations $\{\tilde{v}^{(j,k)}_i\}_{k=1}^T$, where $\tilde{v}^{(j,k)}_i = (\tilde{v}^{(j,k,1)}_i, \dots, \tilde{v}^{(j,k,\ell)}_i)$. For any fixed value of $r \in \{1,2,\dots, \ell\}$, consider the set $\tilde{v}^{(j,1:T,r)}_i = \{ \tilde{v}^{(j,1:T,r)}_i \}_{k = 1}^T$ which only contains values from $\{-1, 1\}$ drawn from $P_{\upsilon_{r, i}, rho_0}$. Here $\upsilon_{r, i}$ denotes the $i^{\text{th}}$ coordinate of $\upsilon_r$. Suppose this collection contains $T_1$ instances of $+1$ and $T - T_1$ of $-1$, then the likelihood ratio any under any pair $(\bfupsilon, \bfupsilon')$ can be bounded as $\left(\frac{1+\rho_0}{1 - \rho_0}\right)^{|T - 2T_1|}$.  If $S_r$ denotes the absolute value of sum of the elements in $\tilde{v}^{(j,1:T,r)}_i$, then this bound on the likelihood ratio can be rewritten as $\left(\frac{1+\rho_0}{1 - \rho_0}\right)^{S_r}$. Since all $\ell$ sequences in $\{\tilde{v}^{(j,1:T, r)}_i\}_{r=1}^{\ell}$ are independent of each other, the likelihood ratio can be bounded as
\begin{align*}
    \sup_{x_i} \sup_{\bfupsilon, \bfupsilon'\in \cV^{\ell}} \frac{\Pr(x_i|\bfupsilon)}{\Pr(x_i|\bfupsilon')} \leq \left( \frac{1 + \delta/{T\ell}}{1 - \delta/{T\ell}} \right)^{S_1} \cdots \left( \frac{1 + \delta/{T\ell}}{1 - \delta/{T\ell}} \right)^{S_{\ell}} = \left( \frac{1 + \delta/{T\ell}}{1 - \delta/{T\ell}} \right)^{S_1 + \cdots + S_{\ell}} .
\end{align*}

To construct a measurable set $G_i$ corresponding to each coordinate $i$ on which the likelihood ratio can be appropriately bounded, consider the set $\cZ$ defined as
\begin{align*}
    \cZ = \left\{(v_1, \dots, v_{T}) \in \{-1, +1\}^{T} : \left|\sum_{r = 1}^{T} v_r \right| \leq a\sqrt{2T} + \delta/\ell \right\}
\end{align*}
for some $a > 0$ to be determined later. We set $G_i = \{x_i : \tilde{v}^{(j,1:T,r)}_i  \in \cZ \ \forall \ r \in \{1, 2, \dots, \ell\}\}$ for all coordinates $i$. That is, $G_i$ consists of all sequences $x_i$ such that all its corresponding $\tilde{v}$ sequences belong to $\cZ$. When $X_i^{(j)}$ belongs to the $\sigma$-field generated by the elements of $G_i$, we can bound the likelihood ratio as 
\begin{align*}
    \sup_{x_i \in \sigma(G_i)} \sup_{\bfupsilon, \bfupsilon'\in \cV^{\ell}} \frac{\Pr(x_i|\bfupsilon)}{\Pr(x_i|\bfupsilon')} & \leq \left( \frac{1 + \delta/T\ell}{1 - \delta/T\ell} \right)^{\ell(a \sqrt{2T}  + \delta/\ell)}  \\
    & \leq \exp\left( 3\ell(a \sqrt{2T} + \delta/\ell)  \cdot \frac{\delta}{T\ell}\right) \\
    & \leq \exp\left( 3(a + \delta/\ell)\delta\sqrt{\frac{2}{T}}\right) := \exp(\varphi).
\end{align*}
for all $T \geq 4\ell/3$. As the last step to invoke the Lemma, we define $E_i^{(j, r)} = \1\{\tilde{v}^{(j,1:T,r)}_i  \in \cZ\}$ and $E_i^{(j)} = \prod_{r = 1}^{\ell} E_i^{(j, r)}$, where $\1\{A\}$ denotes the indicator variable for the event $A$. Using the definition of $G_i$, we can also define $E_i^{(j)}$ as $\1\{X_i^{(j)} \in G_i\}$. Lastly, we also define $E^{(j)} = \prod_{i = 1}^{d-1} E_i^{(j)}$.  \\

We are now all set to invoke Lemma 5 from~\cite{Duchi2014DistributedEstimation}. Using the Lemma, we can conclude that for the message sent by agent $j$, $Y_j$, the following inequality holds
\begin{align*}
    I(\bfV; Y_j) \leq 2(e^{4\varphi} - 1)^2 I(X^{(j)}; Y_j| E^{(j)} = 1) + \sum_{i = 1}^{d-1} H(E_i^{(j)}) + \sum_{i = 1}^{d-1} H(\bfV_i)\Pr(E_i^{(j)} = 0),
\end{align*}
where $H(W)$ denotes the entropy of the random variable $W$ and $\bfV_i$ refers to the tuple formed by taking the $i^{\text{th}}$ coordinate of all the elements in tuple represented by $\bfV$. \\

For the first term, note that for $T \geq 200(a + 1)^2$, $\varphi \leq 5/16$ implying that $2(\exp(4\varphi) - 1)^2 \leq 2 (8 \varphi)^2 = 128\varphi^2$. Using the standard concentration bounds for Bernoulli random variables, we can conclude that $\Pr(E_i^{(j, r)} = 0) \leq 2\exp(-a^2)$ and consequently $\Pr(E_i^{(j)} = 0) \leq 2\ell\exp(-a^2)$. On plugging these results into the previous equation, we obtain,
\begin{align*}
    I(\bfV; Y_j)  & \leq \frac{2304(a + \delta/\ell)^2 \delta^2}{T} I(X^{(j)}; Y_j| E^{(j)} = 1) + \sum_{i = 1}^{d-1} \sum_{r = 1}^{\ell} H(E_i^{(j, r)}) + 2\sum_{i = 1}^{d-1} \ell^2\exp(-a^2) \\
    & \leq \frac{2304(a + \delta/\ell)^2 \delta^2}{T} \sum_{k = 1}^{T} I(X^{(j, k)}; Y_j| E^{(j)} = 1, X^{(j, 1:(k-1))})  +(d-1)\ell (h_2(2e^{-a^2}) +  2\ell e^{-a^2}) \\
    & \leq \frac{2304(a + \delta/\ell)^2 \delta^2}{T} \sum_{k = 1}^{T} \min\{ H(X^{(j, k)}| E^{(j)} = 1, X^{(j, 1:(k-1))}), H(Y_j| E^{(j)} = 1, X^{(j, 1:(k-1))}) \} \\
    & ~~~~~~~~~~~~~~~~~~~~~~~~~~~~~~~~~~~~~~~~ +(d-1)\ell (h_2(2e^{-a^2}) +  2\ell e^{-a^2}) \\
    & \leq \frac{2304(a + \delta/\ell)^2 \delta^2}{T} \sum_{k = 1}^{T} \min\{ H(X^{(j, k)}), H(Y_j) \}  +(d-1)\ell (h_2(2e^{-a^2}) +  2\ell e^{-a^2}) \\
    & \leq \frac{2304(a + \delta/\ell)^2 \delta^2}{T} \sum_{k = 1}^{T} \min\{ (d-1)\ell, H(Y_j) \}  +(d-1)\ell (h_2(2e^{-a^2}) +  2\ell e^{-a^2})) \\
    & \leq 2304(a + \delta/\ell)^2 \delta^2 \min\{ d\ell, H(Y_j) \}  +(d-1)\ell (h_2(2e^{-a^2}) +  2\ell e^{-a^2}),
\end{align*}
where we used the relation $I(W;W') \leq \min\{H(W), H(W')\}$ for two random variables along with the fact that conditioning reduces entropy. In the above equations $h_2(p) := -p \log_2(p) - (1-p)\log_2(1-p)$, denotes the entropy of a Bernoulli random variable with mean $p$, for $p \in [0,1]$. Since each message $Y_j$ depends only on $X^{(j)}$, we have,
\begin{align*}
    I(\bfV; Y) & \leq \sum_{j = 1}^M I(\bfV; Y_j) \\
    & \leq \sum_{j = 1}^M \left[ 2304(a + \delta/\ell)^2 \delta^2 \min\{ (d-1)\ell, H(Y_j) \}  +(d-1)\ell (h_2(e^{-a^2}) +  \ell e^{-a^2})\right].
\end{align*}
This gives us the bound on $I(\bfV; Y)$ in terms of $H(Y_j)$, which is a lower bound on the required communication cost to send the message corresponding to agent $j$. The last step is use Fano's inequality to translate this bound to a bound on the estimation error. \\

Let $\hat{\theta}(Y)$ denote the estimate obtained at the server using the received messages $Y$. Similar to the previous case, let $\hat{\bfV} := \argmin_{\bfupsilon \in \cV^{\ell}} \|\hat{\theta}(Y) - \theta_{\bfupsilon}\|_2$ denote the corresponding estimate of $\bfV$. For $\hat{\bfV} \neq \bfV$, the estimation error $\|\hat{\theta}(Y) - \theta_{\bfupsilon}\|_2^2$ satisfies $\|\hat{\theta}(Y) - \theta_{\bfupsilon}\|_2^2 \geq \|\theta_{\hat{\bfV}} - \theta_{\bfV} \|_2^2/4 \geq \delta^2 \|\mu_{\hat{\bfV}} - \mu_{\bfV} \|_2^2 \geq \delta^2 4^{-\ell-1}/(d-1)$. The last bound follows by noting that $\|\mu_{\bfupsilon} - \mu_{\bfupsilon'}\| \geq \delta 2^{-\ell}/\sqrt{d-1}$ for $\bfupsilon \neq \bfupsilon'$.
We set $a := 2 \sqrt{\log(4M\ell)}$, $\delta^2 := (2304M(a + 1/\ell))^{-1} $ and $\ell = \log_4(T)$. On plugging in these values, we can conclude that whenever $\hat{\bfV} \neq \bfV$, the estimation error is at least $(4(d-1)MT)^{-1}$. On the other hand, since $\bfV \to Y \to \hat{\bfV}$ forms a Markov chain, Fano's inequality tells us that
\begin{align*}
    \Pr(\hat{\bfV} \neq \bfV) & \geq 1 - \frac{I(\bfV; Y) + \log 2}{|\cV^{\ell}|} \\
    & \geq 1 - \frac{1}{(d-1)\ell} \left\{ \sum_{j = 1}^M \left[ \frac{1}{4M} \min\{ (d-1)\ell, H(Y_j) \}  +(d-1)\ell (h_2(e^{-a^2}) +  \ell e^{-a^2})\right] + \log 2 \right\} \\
    & \geq 1 - \left\{ \sum_{j = 1}^M \left[ \frac{1}{4M} \min\left\{ 1, \frac{H(Y_j)}{(d-1)\ell} \right\}  +\frac{6}{80M^2 \ell^2} +  \frac{1}{256M^4\ell^3}\right] + \frac{\log 2}{(d-1)\ell} \right\}.
\end{align*}
In the last step, we used the value of $a$ along with the fact that $h_2(p) \leq 1.2\sqrt{p}$ for all $p \in [0,1]$. For $T \geq 2048$, we have that
\begin{align*}
    \Pr(\hat{\bfV} \neq \bfV) & \geq \frac{4}{5} - \frac{1}{4M} \sum_{j = 1}^M  \min\left\{ 1, \frac{H(Y_j)}{(d-1)\ell} \right\}.
\end{align*}
Thus, to ensure that the estimation error exceeds $\Omega(1/MT)$ with probability no more than $1/3$, we need to ensure $\frac{H(Y_j)}{(d-1)\ell}$ is $\Omega(1)$ for all agents $j$. Since $\ell$ is $\Omega(\log T)$, it implies that all agents needs to send at least $\Omega(d \log T)$ bits of information to ensure optimal regret.

\section{Sparse PLS}
\label{sec:Sparse_PLS_Analysis}

In this section, provide additional details and analysis of Sparse-PLS. We begin with the pseudo codes.

\begin{algorithm}[H]
	\caption{Sparse-PLS Norm Estimation:  Agent $j \in \{1,2,\dots, M\}$}
	\label{alg:norm_est_agent_sparse}
	\begin{algorithmic}[1]
        \STATE \textbf{Input}: The set of actions $\cB_s$
        \STATE Set $k \leftarrow 1$
        \WHILE{\texttt{True}}
            \STATE Play each vector in $\cB_s$ for $s_k$ times and compute the sample mean $\hat{\theta}^{(j)}_k$
            \STATE $\tilde{\theta}^{(j)}_k \leftarrow \textsc{Clip}(\hat{\theta}^{(j)}_k, R_k + B_k)$
            \STATE $Q(\tilde{\theta}^{(j)}_k) \leftarrow \textsc{StoQuant}(\tilde{\theta}^{(j)}_k, \alpha_k, R_k + B_k)$
            \STATE Send $Q(\tilde{\theta}^{(j)}_k)$ to the server
            \IF{received \texttt{terminate} from server}
            \STATE \textbf{break}
            \ELSE
            \STATE $k \leftarrow k + 1$
            \ENDIF
        \ENDWHILE
	\end{algorithmic}
\end{algorithm}

\begin{algorithm}[H]
	\caption{Sparse-PLS Norm Estimation: The Server}
	\label{alg:norm_est_server_sparse}
	\begin{algorithmic}[1]
        \STATE \textbf{Input}: The set of actions $\cB_s$
        \STATE Set $k \leftarrow 1$
        \WHILE{\texttt{True}}
            \STATE Compute $\hat{\theta}_{k}^{(\textsc{serv})} := \argmin_{\theta} \frac{d}{m}\left\|\frac{1}{M} \sum_{j = 1}^M Q(\tilde{\theta}^{(j)}_k) - X\theta \right\|_2^2 + \lambda_k \|\theta\|_1$
            \IF{$\tau_k \leq \frac{1}{4}\|\hat{\theta}_{k}^{(\textsc{serv})}\| $}
            \STATE Server sends terminate to all agents
            \STATE \textbf{break}
            \ELSE
            \STATE $k \leftarrow k + 1$
            \ENDIF
        \ENDWHILE
	\end{algorithmic}
\end{algorithm}

In the pseudo codes for Sparse-PLS, $X$ refers to the $\R^{m \times d}$ matrix obtained by stacking the vectors in $\cB_s$ one under the other. The parameters for Sparse-PLS are set as $s_k := \lceil 40 \sigma^2 d \log(16MK/\delta) 4^k \rceil$, $t_k := \lceil m M s_k^2/d \rceil$, $R_k := 2^{-k}\sqrt{m/d}$, $B_k = 7\tau_k \sqrt{m/d}$, $\tau_k := 3 \cdot 2^{(-(k+1)}/\sqrt{M}$, $\alpha_k = \alpha_0 \sigma \sqrt{s/s_k}$, $\beta_k = \beta_0 \tau_k$ and $\lambda_k := 4\sigma \sqrt{\frac{3}{2ms_k}} (\sqrt{\log(2d)} + \sqrt{\log(4/\delta)}$. The underlying philosophy behind the choice of these parameters is similar to that of PLS. The only additional parameters in Sparse-PLS is the regularization constant $\lambda_k$ and the size of the set $\cB_s$, $m$. $\lambda_k$ is chosen based on analysis of the LASSO estimator~\cite{Rigollet2017} while $m$ is chosen to ensure that $X$ satisfies the restricted eigenvalue condition~\cite{Bickel2009}. In particular, based on the result in~\cite{Baraniuk2008}, we set $m = 80(s \log(150d/s) + \log(4/\delta))$ which ensures that with probability at least $1 - \delta/2$ over the randomness of $\cB_s$ the following holds for any $\theta \in \cC_s$, 
\begin{align*}
    \frac{3}{4} \|\theta\|_2 \leq \sqrt{\frac{d}{m}}\|X\theta\|_2 \leq \frac{5}{4} \|\theta\|_2.
\end{align*}
In the above definition $\cC_s = \{\theta: \|\theta_{S^c}\|_1 \leq 3 \|\theta_S\|_1  \text{ for all } S \subset \{1,2,\dots, d\} \text{ with } |S| \leq s\}$ where $\theta_S$ refers to the sub-vector of $\theta$ corresponding to the coordinates indexed by $S$. \\

\begin{algorithm}
    \caption{Sparse-PLS Refinement:  Agent $j \in \{1,2,\dots, M\}$}
    \label{alg:ref_est_agent_sparse}
    \begin{algorithmic}[1]
            \STATE \textbf{Input}: The epoch index at the end of Norm Estimation stored as $k_0$, The set of actions $\cB_s$
            \STATE $\bar{\theta}_{k_0 - 1} \leftarrow 0, k \leftarrow k_0$
            \WHILE{time horizon $T$ is not reached}
                \STATE Play each vector in $\cB_s$ for $s_k$ times and compute the sample mean $\hat{\theta}^{(j)}_k$
                \STATE $\tilde{\theta}^{(j)}_k \leftarrow \textsc{Clip}(\hat{\theta}^{(j)}_k - \bar{\theta}_{k - 1}, R_k + B_k)$
                \STATE $Q(\tilde{\theta}^{(j)}_k) \leftarrow \textsc{StoQuant}(\tilde{\theta}^{(j)}_k, \alpha_k, R_k + B_k)$
                \STATE Send $Q(\tilde{\theta}^{(j)}_k)$ to the server
                \STATE Receive $Q(\hat{\theta}_{k}^{(\textsc{serv})})$ from the server
                \STATE $\bar{\theta}_{k} \leftarrow \bar{\theta}_{k - 1}  + Q(\hat{\theta}_{k}^{(\textsc{serv})})$
                \IF{$k = k_0$}
                \STATE Set $\mu_0 \leftarrow \|\bar{\theta}_{k}\|_2$
                \ENDIF
                \STATE Play the action $a = \bar{\theta}_{k}/\|\bar{\theta}_{k}\|$ for the next $t_{k}$ rounds.
                \STATE $k \leftarrow k + 1$
            \ENDWHILE
    \end{algorithmic}
\end{algorithm}

\begin{algorithm}
    \caption{Sparse-PLS Refinement: The Server}
    \label{alg:ref_est_server_sparse}
    \begin{algorithmic}[1]
            \STATE \textbf{Input}: The epoch index at the end of Norm Estimation stored as $k_0$, The set of actions $\cB_s$
            \STATE $\bar{\theta}_{k_0 - 1} \leftarrow 0, k \leftarrow k_0$
            \WHILE{time horizon $T$ is not reached}
                \STATE Receive $Q(\tilde{\theta}^{(j)}_k)$ from all the agents
                \STATE Compute $\hat{\theta}_{k}^{(\textsc{serv})} = \bar{\theta}_{k - 1} + \argmin_{\theta}\frac{d}{m}\left\|\frac{1}{M} \sum_{j = 1}^M Q(\tilde{\theta}^{(j)}_k) - X(\theta - \bar{\theta}_{k-1}) \right\|_2^2 + \lambda_k \|\theta\|_1$
                \STATE $Q(\hat{\theta}_{k}^{(\textsc{serv})}) \leftarrow \textsc{DetQuant}(\hat{\theta}_{k}^{(\textsc{serv})} - \bar{\theta}_{k - 1}, \beta_k, B_k + \tau_{k})$ and broadcasts it to all agents
                \STATE $k \leftarrow k + 1$
            \ENDWHILE
    \end{algorithmic}
\end{algorithm}

Before the proof of Theorem~\ref{theorem:sparse_bandits}, we state and prove a lemma analogous to Lemma~\ref{lemma:theta_estimate_error} for sparse bandits.

\begin{lemma}
    For any epoch $k$ in Sparse-PLS, the estimate $\hat{\theta}_{k}^{(\textsc{serv})}$ satisfies
    \begin{align*}
        \|\hat{\theta}_{k}^{(\textsc{serv})} - \theta^*\| \leq \tau_k,
    \end{align*}
    with probability at least $1 - \delta/2$.
    \label{lemma:sparse_theta_error}
\end{lemma}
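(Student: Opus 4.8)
The plan is to replicate the inductive argument of Lemma~\ref{lemma:theta_estimate_error}, but with the simple-average estimator and its sub-Gaussian vector concentration replaced by a LASSO oracle inequality resting on the restricted-eigenvalue (RE) guarantee for the random design $X$. The first step is to condition on the event $\cA_{\mathrm{RE}}$ that the two-sided bound $\frac34\|\theta\|_2\le\sqrt{d/m}\,\|X\theta\|_2\le\frac54\|\theta\|_2$ holds for every $\theta\in\cC_s$; by the choice $m=80(s\log(150d/s)+\log(4/\delta))$ and the result of \cite{Baraniuk2008}, this event has probability at least $1-\delta/2$ over the draw of $\cB_s$, and because the same random $\cB_s$ is reused in every epoch it is fixed once and carries through the whole induction. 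All remaining randomness is over the sampling noise and the stochastic quantizer, whose failure over the $K$ epochs and $M$ agents is absorbed into the $\log(16MK/\delta)$ and $\log(4/\delta)$ factors built into $s_k$ and $\lambda_k$.

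Next I would rewrite the server's optimization as a clean LASSO for the $s$-sparse target $\theta^*$. As in Lemma~\ref{lemma:theta_estimate_error}, write $Q(\tilde\theta_k^{(j)})=\tilde\theta_k^{(j)}+\eta_k^{(j)}$ with $\eta_k^{(j)}$ zero-mean, coordinatewise-independent sub-Gaussian quantization noise of proxy $\alpha_k^2/4m$ (the ambient dimension now being $m$). The key observation is that the $\ell_1$ penalty sits on $\theta$ rather than on the residual, so adding the known shift $X\bar\theta_{k-1}$ turns the objective into $\argmin_\theta\frac dm\|\tilde y-X\theta\|_2^2+\lambda_k\|\theta\|_1$ with response $\tilde y=X\theta^*+w_k$, where $w_k=\frac1M\sum_j(\bar\eta^{(j)}+\eta_k^{(j)})$ aggregates the averaged sampling noise (per-coordinate proxy $\cO(\sigma^2/(Ms_k))$) and the quantization noise. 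Two points must be checked, both mirroring the dense proof: that the clip never activates, so that $w_k$ is genuinely zero-mean and $\tilde y$ is unbiased for $X\theta^*$ --- this is where the inductive hypothesis $\|\bar\theta_{k-1}-\theta^*\|_2\le\tau_{k-1}+\beta_{k-1}$ together with the $\sqrt{m/d}$-scaled radii $R_k=2^{-k}\sqrt{m/d}$, $B_k=7\tau_k\sqrt{m/d}$ guarantees that $\|X(\theta^*-\bar\theta_{k-1})\|_2\lesssim\sqrt{m/d}\,\tau_k$ stays inside the clip ball --- and that averaging over the $M$ agents shrinks the effective noise proxy by $1/M$, which is the source of the $1/\sqrt M$ speedup.

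The core estimate is then the standard LASSO oracle inequality \cite{Rigollet2017,Bickel2009}: on $\cA_{\mathrm{RE}}$ together with the event $\{\lambda_k\ge 2\tfrac dm\|X^\top w_k\|_\infty\}$, the basic inequality forces the error $\hat\theta_k^{(\textsc{serv})}-\theta^*$ into the cone $\cC_s$ (using that $\theta^*$ is $s$-sparse) and yields $\|\hat\theta_k^{(\textsc{serv})}-\theta^*\|_2\le C\sqrt s\,\lambda_k/\kappa$ with $\kappa$ the RE constant from the displayed bound. I would verify the $\lambda_k$-condition by a union bound over the $d$ coordinates of $X^\top w_k$, each sub-Gaussian with proxy of order $(m/d)\cdot\mathrm{proxy}(w_k)$, which produces the $\sqrt{\log(2d)}+\sqrt{\log(4/\delta)}$ deviation factor in $\lambda_k$; stochastic (rather than deterministic) uplink quantization is essential, since it keeps $w_k$ zero-mean so that this concentration, and hence the $1/\sqrt M$ gain, survives, exactly as flagged in Section~\ref{subs:comm_scheme_details}. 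Substituting $s_k=\lceil40\sigma^2 d\log(16MK/\delta)4^k\rceil$, $m=\cO(s\log(d/s))$ and the prescribed $\lambda_k$ into $\sqrt s\,\lambda_k$ collapses the bound to $3\cdot2^{-(k+1)}/\sqrt M=\tau_k$, matching the dense case. The induction then closes as in Lemma~\ref{lemma:theta_estimate_error}: the base case uses $\|\theta^*\|_2\le1$, and the step uses $\|\hat\theta_{k-1}^{(\textsc{serv})}\|_2\le4\tau_{k-1}$ in the norm stage, or $\|\bar\theta_{k-1}-\theta^*\|_2\le\tau_{k-1}+\beta_{k-1}\le B_k$ via $\bar\theta_{k-1}=\hat\theta_{k-1}^{(\textsc{serv})}+\zeta_{k-1}$ in the refinement stage, to re-certify the no-clip condition; chaining the per-epoch events through the conditional-probability product of Lemma~\ref{lemma:theta_estimate_error} gives the overall $1-\delta/2$.

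I expect the main obstacle to be the simultaneous control of the random design. Unlike the orthonormal-basis case, $X$ is random and its RE property holds only on the cone $\cC_s$, so I must (i) ensure the LASSO error lands in $\cC_s$ at \emph{every} epoch, which the penalty-on-$\theta$ formulation delivers but which must be re-derived from the basic inequality rather than assumed, and (ii) keep the clipping from biasing $X\theta^*$ in the $\sqrt{m/d}$-rescaled measurement space while still admitting enough room for the noise. The secondary delicate point is the bookkeeping of the two independent sources of randomness --- the one-shot draw of $\cB_s$ fixing $\cA_{\mathrm{RE}}$ versus the per-epoch sampling and quantization noise --- so that conditioning on $\cA_{\mathrm{RE}}$ does not corrupt the sub-Gaussianity used in the LASSO concentration; handling this correctly is what keeps the final probability at $1-\delta/2$ and preserves the linear-in-$M$ speedup.
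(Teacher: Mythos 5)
Your proposal is correct and follows essentially the same route as the paper: recast the server's update as a LASSO regression for the $s$-sparse $\theta^*$ with combined (clipped) observation noise and zero-mean stochastic quantization noise, invoke the LASSO oracle inequality under the restricted-eigenvalue event guaranteed by the choice of $m$ (the paper cites Theorem 2.18 of \cite{Rigollet2017} as a black box where you unpack the cone condition and the $\lambda_k$-dominance step explicitly), substitute $s_k$, $\alpha_k$, $\lambda_k$ to land on $\tau_k$, and close the induction exactly as in Lemma~\ref{lemma:theta_estimate_error}. The extra care you take with conditioning on the one-shot draw of $\cB_s$ versus the per-epoch noise is a sound elaboration of what the paper leaves implicit.
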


\begin{proof}

Similar to the proof of Lemma~\ref{lemma:theta_estimate_error}, we use induction and also define $\bar{\theta}_{k-1} := 0$ for all epochs $k$ during the norm estimation stage. In Sparse-PLS, the estimate $\hat{\theta}_{k}^{(\textsc{serv})}$ is obtained by minimizing
\begin{align*}
    \cL_k(\theta) & := \frac{d}{m}\left\|\frac{1}{M} \sum_{j = 1}^M Q(\tilde{\theta}^{(j)}_k) - X(\theta - \bar{\theta}_{k-1}) \right\|_2^2 + \lambda_k \|\theta\|_1 \\
    & = \frac{d}{m}\left\|\frac{1}{M} \sum_{j = 1}^M (\tilde{\theta}^{(j)}_k + \eta_k^{(j)}) - X(\theta - \bar{\theta}_{k-1}) \right\|_2^2 + \lambda_k \|\theta\|_1 \\
    & = \frac{d}{m}\left\|\frac{1}{M} \sum_{j = 1}^M(\hat{\theta}^{(j)}_k - X\bar{\theta}_{k-1})\1_{R_k + B_k} + \frac{1}{M} \sum_{j = 1}^M \eta_k^{(j)} - X(\theta - \bar{\theta}_{k-1}) \right\|_2^2 + \lambda_k \|\theta\|_1 \\
    & = \frac{d}{m}\left\|\frac{1}{M} \sum_{j = 1}^M(\hat{\theta}^{(j)}_k - X\bar{\theta}_{k-1})\1_{R_k + B_k} + \frac{1}{M} \sum_{j = 1}^M \eta_k^{(j)} - X(\theta - \bar{\theta}_{k-1}) \right\|_2^2 + \lambda_k \|\theta\|_1 \\
    & = \frac{d}{m}\left\|X\theta^* + \frac{1}{M} \sum_{j = 1}^M\Delta_k - X\bar{\theta}_{k-1} + \frac{1}{M} \sum_{j = 1}^M \eta_k^{(j)} - X(\theta - \bar{\theta}_{k-1}) \right\|_2^2 + \lambda_k \|\theta\|_1 \\
    & = \frac{d}{m}\left\|X\theta^* + \frac{1}{M} \sum_{j = 1}^M\Delta_k  + \frac{1}{M} \sum_{j = 1}^M \eta_k^{(j)} - X\theta \right\|_2^2 + \lambda_k \|\theta\|_1,
\end{align*}
where $\Delta_k$ corresponds to clipped sub-Gaussian observation noise. Using the result of Theorem 2.18 in ~\cite{Rigollet2017}, we can conclude that
\begin{align*}
    \|\hat{\theta}_{k}^{(\textsc{serv})} - \theta^*\|_2^2 \leq \frac{1024sd \log(8K/\delta)}{mM} \left( \frac{\sigma^2}{s_k} + \frac{\alpha_k^2}{4s} \right).
\end{align*}
Plugging in the choice of $s_k$ and $\alpha_k$ yields,
\begin{align*}
    \|\hat{\theta}_{k}^{(\textsc{serv})} - \theta^*\|_2 \leq \frac{3}{\sqrt{M}} \cdot 2^{-(k+1)} = \tau_k,
\end{align*}
with probability at least $1 - \delta/K$. Here, similar to proof of Lemma~\ref{lemma:theta_estimate_error} the choice of $R_k$ and $B_k$ allows us to use the clipped sub-Gaussian concentration which is implicitly used while invoking the result from~\cite{Rigollet2017}. \\

Using an argument similar to the one in Lemma~\ref{lemma:theta_estimate_error}, we can conclude that the above inequality holds for epochs during the algorithm with probability at least $1 - \delta/2$.

\end{proof}

\subsection{Proof of Theorem~\ref{theorem:sparse_bandits}}

The analysis for the regret performance of Sparse-PLS is almost identical to that of PLS, as described in Appendix~\ref{sec:PLS_analysis}. Once again, the regret is decomposed into the sum of regret incurred during the norm estimation stage and the refinement stage. \\

Recall that the regret during the norm estimation stage of PLS is bounded using the result in Lemma~\ref{lemma:theta_estimate_error}. Since Lemma~\ref{lemma:sparse_theta_error} is identical to Lemma~\ref{lemma:theta_estimate_error}, the proof of Lemma~\ref{lemma:regret_norm_est} follows through almost unchanged for the case of Sparse-PLS. The only difference in the case of Sparse-PLS is that there are $m$ actions in the basis set instead of $d$ as in the case of PLS. This scales the corresponding term in the regret incurred during the norm estimation stage by a factor of $m/d$ resulting in an overall regret of $\cO(\sqrt{sdMT\log(d/\delta)} \log(MT) \log(\log T/\delta) )$. \\

Similarly, the analysis in the refinement stage also follows through identically but for a couple of minor changes. The regret during a exploration sub-epoch is also scaled by a factor of $m/d$ for the same reason as in the case of the norm estimation stage. The analysis for exploration sub-epoch follows through as is with the different value of $t_k$ which is also scaled by a factor of $s/d$. Carrying out the same steps with these updated values result in an overall regret of $\cO(\sqrt{sdMT\log(d/\delta)} \log(MT) \log(\log T/\delta) )$, as required. \\

For the communication cost, the bound on the downlink cost follows as in case for PLS. For the case of the uplink cost, note that in Sparse-PLS, the transmitted vector lies in $\R^m$. Using the same argument in used on the proof of  Lemma~\ref{lemma:channel_capacity} for vectors in $\R^m$ along with the updated choice of $R_k$, $B_k$ and $\alpha_k$, we can conclude that each uplink message is at most $\cO(m)$ bits long, resulting in an overall uplink cost of $\cO(m\log T) = \cO(s \log dT)$.


\end{document}